\newtheorem{theorem}{Theorem}
\newtheorem{lemma}[theorem]{Lemma}
\newtheorem{corollary}[theorem]{Corollary}
\newenvironment{proof}{\paragraph{Proof:}}{\hfill$\square$}
\DeclareMathOperator{\Imp}{Imp}
\newcommand{\CHANGED}[1]{{\color{black} #1}\xspace}
\newcommand{\NEWCHANGED}[1]{{\color{black} #1}\xspace}
\title{Simple Hyper-heuristics Control the Neighbourhood Size of Randomised Local Search Optimally for LeadingOnes\thanks{An extended abstract of this manuscript has appeared at the 2017 Genetic and Evolutionary Computation Conference (GECCO 2017)~\citep{LissovoiEtAl2017}.}}
\date{}
\author{\textbf{Andrei Lissovoi} \hfill a.lissovoi@sheffield.ac.uk\\ Department of Computer Science, University of Sheffield, UK \and \textbf{Pietro S. Oliveto} \hfill p.oliveto@sheffield.ac.uk\\ Department of Computer Science, University of Sheffield, UK \and \textbf{John Alasdair Warwicker} \hfill j.warwicker@sheffield.ac.uk \\ Department of Computer Science, University of Sheffield, UK}
\begin{document}



\maketitle

\begin{abstract}
\NEWCHANGED{Selection hyper-heuristics (HHs) are randomised search methodologies which choose and execute heuristics during the optimisation process from a set of low-level heuristics. A machine learning mechanism is generally used to decide which low-level heuristic should be applied in each decision step. In this paper we analyse whether sophisticated learning mechanisms are always necessary for HHs to perform well. To this end we consider the most simple HHs from the literature and rigorously analyse their performance for the \textsc{LeadingOnes} benchmark function. Our analysis shows that the standard Simple Random, Permutation, Greedy and Random Gradient HHs show no signs of learning. While the former HHs do not attempt to learn from the past performance of low-level heuristics, the idea behind the Random Gradient HH is to continue to exploit the currently selected heuristic as long as it is successful. Hence, it is embedded with a reinforcement learning mechanism with the shortest possible memory. However, the probability that a promising heuristic is successful in the next step is relatively low when perturbing a reasonable solution to a combinatorial optimisation problem. We generalise the `simple' Random Gradient HH so success can be measured over a fixed period of time $\tau$, instead of a single iteration. For \textsc{LeadingOnes} we prove that the {\it Generalised Random Gradient (GRG)} HH can learn to adapt the neighbourhood size of Randomised Local Search to optimality during the run. As a result, we prove it has the best possible performance achievable with the low-level heuristics (Randomised Local Search with different neighbourhood sizes), up to lower order terms. We also prove that the performance of the HH improves as the number of low-level local search heuristics to choose from increases. In particular, with access to $k$ low-level local search heuristics, it outperforms the best-possible algorithm using any subset of the $k$ heuristics. Finally, we show that the advantages of GRG over Randomised Local Search and Evolutionary Algorithms using standard bit mutation increase if the anytime performance is considered (i.e., the performance gap is larger if approximate solutions are sought rather than exact ones). Experimental analyses confirm these results for different problem sizes (up to $n=10^8$) and shed some light on the best choices for the parameter $\tau$ in various situations.}\end{abstract}

\section{Introduction}
Many successful applications of randomised search heuristics to real-world optimisation problems have been reported. Despite these successes, it is still difficult to decide which particular search heuristic is a good choice for the problem at hand and what parameter settings should be used. In particular, while it is well understood that each heuristic will be efficient on some classes of problems and inefficient on others \citep{WolpertMacready1997}, very little guidance is available explaining how to choose an algorithm for a given problem. The high level idea behind the field of hyper-heuristics (HHs) is to overcome this difficulty by evolving the search heuristic for the problem rather than choosing one in advance (or applying several arbitrary ones until a satisfactory solution is found). The overall goal is to automate the design and the tuning of the algorithm and parameters for the optimisation problem, hence \CHANGED{to achieve} a more generally applicable system. 

\CHANGED{HHs} are usually classified into two categories: \CHANGED{\textit{generation} HHs and \textit{selection} HHs \citep{BurkeEtAl2013}. The former typically work offline and generate new heuristics from components of existing heuristics, while the latter typically work online and repeatedly select from a set of low-level heuristics which one to apply in the next state of the optimisation process}. The low-level heuristics can be further classified as either \textit{construction heuristics}, which build a solution incrementally, or \textit{perturbation heuristics}, which start with a complete solution and try to iteratively improve \CHANGED{it}. 

\CHANGED{In this paper, selection-perturbation HHs will be considered. Selection HHs belong to the wide field of {\it algorithm selection from algorithm portfolios}~\citep{BezerraEtAl2018}. A noteworthy example of such a system is the SATZilla approach for the SAT problem~\citep{XuEtAl2008}. We refer to \citep{Kotthoff2016} for an overview of approaches to algorithm selection for combinatorial optimisation problems.
It should be noted that HHs may also be used for other purposes. For example, a HH may be used as a parameter control method if it is asked to select at each state between different heuristics that only differ by one parameter value, 
rather than between different general purpose algorithms. See \citep{DoerrDoerrBookChapter2018} for a comprehensive review of parameter control techniques. In general though,} while the aim of parameter control mechanisms is to essentially choose between different parameter values for a given algorithm, selection HHs \CHANGED{normally} work at a higher level, since their aim is to learn which low-level heuristics (i.e., algorithms) work best at different states of the optimisation process \CHANGED{i.e., very different algorithms may perform better in different areas of the search space.}

\CHANGED{Selection HHs consist of two separate components: (1) a {\it heuristic selection method}, often referred to as the {\it learning mechanism}, to decide which heuristic to be applied in the next step of the optimisation process, and (2) a {\it move acceptance operator} to decide whether the newly produced search points should be accepted. 
 
Move acceptance operators are classified into {\it deterministic} ones, which make the same decisions independent of the state of the optimisation process, and {\it non-deterministic} ones, which might make different decisions for the same solutions at different times. 
 
 The majority of heuristic selection methods in the literature apply machine learning techniques that generate scores for each heuristic based on their past performance. A commonly used method for the purpose is reinforcement learning~\citep{CowlingEtAl2000,Nareyek2004,BurkeEtAlJournal2003}. 
 However, also considerably simpler heuristic selection methods have been successfully applied  in the literature. These include, among others: {\it Simple Random} heuristic selection which just chooses heuristics uniformly at random; {\it Random Permutation} which generates a random ordering of the low-level heuristics and applies them in that order; {\it Greedy} heuristic selection which applies all the heuristics to the current search point and returns the best found solution; {\it Random Gradient} which chooses a random heuristic and continues to apply it as long as it is successful, and {\it Random Permutation Gradient} which is a combination of the Permutation and Random Gradient methodologies~\citep{CowlingEtAl2000,CowlingEtAl2002,BerberogluUyar2011}. 
 
Apart from the Greedy one, an obvious advantage of these simple methods is that they execute very quickly. On the other hand, it is unclear how well they perform in terms of how effective they are \CHANGED{at} identifying (near) optimal solutions in a short time. While the Simple Random, Random Permutation and Greedy mechanisms do not attempt to learn anything from the past performance of the applied heuristics, Random Gradient and Permutation Random Gradient can still be considered as intelligent selection mechanisms because they embed a reinforcement learning mechanism, albeit with the shortest memory length possible. Experimental work has suggested that such mechanisms may be useful for highly rugged search landscapes~\citep{BurkeEtAl2013}.  In this paper we will rigorously evaluate the performance of simple heuristic selection methods for a unimodal benchmark problem and focus most of our attention on the Random Gradient method that uses 'minimal' intelligence.}

Numerous successes of selection HHs for NP-hard optimisation problems have been reported, including scheduling \citep{CowlingEtAl2000,CowlingEtAl2002,GibbsEtAl2010}, timetabling \citep{OzcanEtAl2012}, vehicle routing \citep{AstaOzcan2014}, cutting and packing \citep{LopezEtAl2014}. 
\CHANGED{However, their theoretical understanding is very limited.} Some insights into the behaviour of selection \CHANGED{HHs} have been achieved via landscape analyses \citep{MadenEtAl2009, OchoaEtAl2009GECCO, OchoaEtAl2009CEC}. \CHANGED{Concerning their performance, the most sophisticated HH that has been analysed is the one considered by \cite{DoerrEtAl2016}, where the different neighbourhood sizes $k$ of the Randomised Local Search (RLS$_k$) algorithm were implicitly used as low-level heuristics and a reinforcement learning mechanism was applied as the heuristic selection method. The authors proved that this HH can track the best local search neighbourhood size (i.e., the best fitness-dependent number of distinct bits to be flipped) for \textsc{OneMax}, while they showed experimentally that it outperforms RLS$_1$ and the (1+1) Evolutionary Algorithm ((1+1) EA) for \textsc{LeadingOnes}. The few other available runtime analyses consider the less sophisticated heuristic selection methodologies which are the focus of this paper.}

\cite{LehreOzcan2013} presented the first runtime analysis of selection-perturbation \CHANGED{HHs}. They considered \CHANGED{the} Simple Random \CHANGED{HH} \citep{CowlingEtAl2000} that at each step randomly chooses between \CHANGED{a 1-bit flip and a 2-bit flip operator} as low-level heuristics, and presented an example benchmark function class, called \textsc{GapPath}, where it is necessary to use more than one \CHANGED{of the} low-level heuristic\CHANGED{s} to optimise the problem. Similar example functions have \CHANGED{also} been constructed by \cite{HeEtAl2012}.

\begin{sloppypar}
A comparative time-complexity analysis of selection \CHANGED{HHs} has been presented by \cite{AlanaziLehre2014}. They considered several \CHANGED{of the common simple} selection mechanisms, namely Simple Random, Permutation, Random Gradient and Greedy \citep{CowlingEtAl2000,CowlingEtAl2002} and analysed their performance \CHANGED{for} the standard \textsc{LeadingOnes} benchmark function when using a low-level heuristic set consisting of a 1-bit flip and a 2-bit flip operator (i.e., the same set previously considered by \cite{LehreOzcan2013}). Their runtime \CHANGED{analyses} show that the four simple selection mechanisms have the same asymptotic expected runtime, while an experimental evaluation \CHANGED{indicates} that the runtimes are indeed equivalent already for small problem dimensions. Recently, additive Reinforcement Learning selection was also shown \CHANGED{(under mild assumptions)} to often have asymptotically equivalent performance to Simple Random selection, including for the same problem setting (i.e., \textsc{LeadingOnes} selecting between 1-bit flip and 2-bit flip operators) \citep{AlanaziLehre2016}. In particular, the results indicate that selection mechanisms such as additive Reinforcement Learning and Random Gradient do not learn to exploit the more successful low-level heuristics and end up having the same performance as Simple Random selection. 
\end{sloppypar}

The main idea behind \CHANGED{Random Gradient} 
is to continue to exploit the currently selected heuristic as long as it is successful. Unlike construction heuristics, where iterating a greedy move on a currently successful heuristic may work for several consecutive construction steps, the probability that a promising heuristic is successful in the next step is relatively low when perturbing a reasonable solution to a combinatorial optimisation problem.

\CHANGED{To this end}, in this paper we \CHANGED{propose to} generalise the 
Random Gradient selection-perturbation 
mechanism \CHANGED{such that} success can be measured over \CHANGED{a} fixed period of time $\tau$\CHANGED{, which we call the learning period}, \CHANGED{instead of doing so after each} iteration. \CHANGED{We refer to the generalised HH as Generalised Random Gradient (GRG).} We use the \textsc{LeadingOnes} benchmark function \CHANGED{and the 1-bit flip and a 2-bit flip operators as low-level heuristics} to show that \CHANGED{with this simple modification Random Gradient} can be surprisingly fast, \CHANGED{even if no sophisticated machine learning mechanism is used to select from the set of low-level heuristics.} We first \CHANGED{derive the} exact leading constants in the expected runtimes of the simple mechanisms \CHANGED{for} \textsc{LeadingOnes}, \CHANGED{thus proving} that they all have expected runtime $\frac{\ln(3)}{2}n^2+o(n^2)\CHANGED{\phantom{x}\approx0.54931n^2+o(n^2)}$, confirming what was implied by the experimental analysis of \cite{AlanaziLehre2014}. This result indicates that all the simple mechanisms essentially choose operators at random in each iteration. \CHANGED{Thus they have worse performance} than the single operator that always flips one bit (i.e., \CHANGED{RLS$_1$}), which takes $\frac{1}{2}n^2$ expected iterations to optimise \textsc{LeadingOnes}. We then \CHANGED{provide upper bounds on the expected runtime of \CHANGED{GRG}} for the same function. We rigorously prove that the generalised \CHANGED{HH} has a better expected running time than RLS$_1$ for appropriately chosen values for the parameter $\tau$. \CHANGED{Furthermore, we prove that GRG can achieve an expected runtime of $\frac{1+\ln(2)}{4}n^2+o(n^2)\approx0.42329n^2\CHANGED{+o(n^2)}$ for \textsc{LeadingOnes}, when $\tau$ satisfies both $\tau=\omega(n)$ and $\tau\leq\left(\frac{1}{2}-\varepsilon\right)n\ln(n)$, for some constant $\CHANGED{0<\varepsilon<\frac{1}{2}}$. This is} the best possible \CHANGED{expected} runtime \NEWCHANGED{for an unbiased (1+1) black box algorithm using only the same two mutation operators}. \CHANGED{We refer to such performance as {\it optimal} for the HH as no better expected runtime, up to lower order terms, may be achieved with any combination of the available low-level heuristics.}

\CHANGED{Afterwards we turn our attention to low-level heuristic sets that contain arbitrarily many operators (i.e., $\{1,\dots, k\}$-bit flips, $k=\Theta(1)$)} 
as \CHANGED{commonly used} in practical applications. We first show that including more operators is detrimental to the performance of the simple mechanisms \CHANGED{for \textsc{LeadingOnes}. Then}, we prove that the performance of GRG improves as the number of available operators to choose from increases. In particular, when choosing from $k$ operators as low-level heuristics, GRG is in expectation faster than the best possible performance achievable \CHANGED{by} any algorithm using \CHANGED{any combination of} $m<k$ \CHANGED{of the} operators, for \CHANGED{any} $k=\Theta(1)$.

We conclude the paper with an experimental analysis of the \CHANGED{HH} for increasing problem sizes, up to $n=10^8$. The experiments confirm that \CHANGED{GRG} outperforms its low-level heuristics. For two operators, it is shown how proper choices for the parameter $\tau$ can lead to the near optimal performance already for these problem sizes. 
When the \CHANGED{HH} 
\CHANGED{is allowed} to choose between more than two operators, the experiments show that, \CHANGED{already for the considered problem sizes}, the performance of GRG improves with more operators for appropriate choices of the learning period $\tau$. 

The paper is structured as follows. In Section \ref{sec:preliminaries}, we formally introduce the \CHANGED{HH} framework together with the simple selection \CHANGED{HHs} and \CHANGED{GRG}. In Section \ref{sec:leadingones} we analyse the simple and generalised \CHANGED{HHs} \CHANGED{using two low-level heuristics} for the \textsc{LeadingOnes} benchmark function. In Section \ref{sec:MoreThanTwo} we present the results for the \CHANGED{HHs} that choose between 
\CHANGED{an arbitrary number of} low-level heuristics. Section \ref{sec:experiments} presents the experimental analysis. In the conclusion we present a discussion and some avenues for future work.

Compared to its conference version \citep{LissovoiEtAl2017} this manuscript has been considerably extended. The results of Section~\ref{sec:leadingones} have been generalised to hold for any values of $\tau$ less than $\left(\frac{1}{2}-\varepsilon\right)n\ln n$, for some constant $\CHANGED{0<\varepsilon<\frac{1}{2}}$. In general, the results have been considerably strengthened. In particular, compared to the extended abstract, in the present manuscript we prove that GRG optimises \textsc{LeadingOnes} in the best possible expected runtime achievable, up to lower order terms. Section~\ref{sec:MoreThanTwo} is a completely new addition, while a more comprehensive set of experiments is included in Section~\ref{sec:experiments}.

\section{Preliminaries}\label{sec:preliminaries}
\begin{algorithm}[t]
\caption{Simple Selection Hyper-heuristic \citep{CowlingEtAl2000,CowlingEtAl2002,AlanaziLehre2014}}
\begin{algorithmic}[1]
\State Choose $s \in S$ uniformly at random
\While{stopping conditions not satisfied}
\State Choose $h \in H$ according to the \textit{learning mechanism}
\State $s^\prime \gets h(s)$
\If{$f(s^\prime) > f(s)$}
\State $s \gets s^\prime$
\EndIf
\EndWhile
\end{algorithmic}
\label{SimpleHH}
\end{algorithm}
\subsection{The Hyper-heuristic Framework and the Heuristic Selection Methods}
Let $S$ be a finite search space, $H$ a set of low-level heuristics and $f:S\rightarrow\mathbb{R}^+$ a \CHANGED{fitness} function. Algorithm \ref{SimpleHH} shows the pseudocode representation for a simple selection \CHANGED{HH} as used in previous experimental and theoretical work \citep{CowlingEtAl2000,CowlingEtAl2002,AlanaziLehre2014}.
Throughout the paper, \CHANGED{the term} runtime \CHANGED{refers to the number of fitness evaluations used by Algorithm~\ref{SimpleHH} until it finds the optimum. Note that the strict inequality in line 5 of Algorithm~\ref{SimpleHH} is consistent with previous HH literature \citep{AlanaziLehre2014}, and using `$\geq$` instead would make no difference for the benchmark problem we consider in this work (i.e., \textsc{LeadingOnes}).}

\CHANGED{Different HHs are obtained from the framework described in Algorithm \ref{SimpleHH} according to which heuristic selection method is applied in Step 3. While sophisticated machine learning mechanisms are usually used,} 
the following \textit{learning mechanisms} have also been commonly \CHANGED{considered} in the literature to solve combinatorial optimisation problems \citep{CowlingEtAl2000, CowlingEtAl2002}:
\begin{itemize}
\item \textbf{Simple Random}, which selects a low-level heuristic \CHANGED{$h\in H$} independently with probability $p_h$ in each iteration. \CHANGED{Each heuristic is usually selected uniformly at random i.e., $p_h=\frac{1}{|H|}$};
\item \textbf{Permutation}, which generates a random ordering of \CHANGED{the} low-level heuristics \CHANGED{in $H$} and returns them in that sequence when called by the \CHANGED{HH};
\item \textbf{Greedy}, which applies all \CHANGED{the} low-level heuristics \CHANGED{in $H$} in parallel and returns the best found solution;
\item \textbf{Random Gradient}, which randomly selects a low-level heuristic \CHANGED{$h\in H$}, and keeps using it as long as it obtains improvements.
\end{itemize}

\cite{AlanaziLehre2014} derived upper and lower bounds on the expected runtime \CHANGED{of Algorithm~\ref{SimpleHH} for the \textsc{LeadingOnes} benchmark function when using these four simple heuristic selection mechanisms equipped with $H=\{\textsc{1BitFlip},\textsc{2BitFlip}\}$. The first low-level heuristic flips one bit chosen uniformly at random while the second one chooses two bits to flip with replacement (i.e., the same bit may flip twice).}

\CHANGED{The runtime analysis performed by \cite{AlanaziLehre2014} only provided bounds of the same asymptotic order on the expected runtime for all four mechanisms. An experimental analysis they carried out suggested, however, that}
all mechanisms have the same performance as just choosing the low-level heuristics at random \CHANGED{for \textsc{LeadingOnes}}. We will prove this by deriving the exact expected runtimes of the simple mechanisms in Section \ref{sec:leadingones}. 
\CHANGED{Differently from the other three mechanisms, the idea behind Random Gradient is to try to learn from the past performance of the heuristics.
However,}
by making a heuristic selection decision in every iteration, 
the mechanism does not have enough time to \CHANGED{appreciate whether the selected heuristic is a good choice for} 
the current optimisation state.

In this paper we generalise the Random Gradient mechanism to allow a longer \CHANGED{period of time to appreciate} whether a low-level heuristic is successful or not, 
\CHANGED{before deciding whether to select a different low-level heuristic.}
\CHANGED{Our aim is} to maintain the intrinsic ideas of the simple learning mechanism while generalising it sufficiently to allow for learning to take place. \CHANGED{The pseudocode of the  resulting HH is given in Algorithm~\ref{alg:GRG}, while the proposed learning mechanism works as follows:} 

\textbf{Generalised Random Gradient (GRG)}: A low-level heuristic is chosen uniformly at random (\textit{Decision Stage})
and run for a \CHANGED{learning} period of fixed time $\tau$. If an improvement is found before the end of the period, then a new period of time $\tau$ is immediately initialised (i.e., $c_t$ in Algorithm~\ref{alg:GRG} is set to 0 immediately) (\textit{Exploitation Stage}).
If the chosen operator fails to provide an improvement in $\tau$ iterations, a new operator is chosen at random.

\begin{algorithm}[t]
\caption{Generalised Random Gradient Hyper-heuristic}
\begin{algorithmic}[1]
\State Choose $x \in S$ uniformly at random
\While{stopping conditions not satisfied}
\State Choose $h \in H$ uniformly at random
\State $c_t \gets 0$
\While{$c_t < \tau$}
\State $c_t \gets c_t+1$; $x^\prime \gets h(x)$
\If{$f(x^\prime) > f(x)$}
\State $c_t \gets 0$; $x \gets x^\prime$
\EndIf
\EndWhile
\EndWhile
\end{algorithmic}
\label{alg:GRG}
\end{algorithm}

\CHANGED{Our aim is to prove that the simple GRG HH runs in the best possible expected runtime achievable for \textsc{LeadingOnes} using mutation operators with different neighbourhood sizes (i.e., $\{1\textsc{BitFlip},\dots,k\textsc{BitFlip}\}$ mutation operators) as low-level heuristics. We will first derive the best possible expected runtime achievable by applying the $k$ operators in any order, and then prove that GRG matches it up to lower order terms.

Throughout the paper, when discussing the $m\textsc{BitFlip}$ operator, we refer to the mutation operator which flips $m=\Theta(1)$ bits in the bit-string with replacement; that is, it is possible to flip and re-flip the same bit within the same mutation step. Since this has been used in previous literature on the topic \citep{LehreOzcan2013,AlanaziLehre2014,AlanaziLehre2016}, we naturally continue with this choice. We will also prove that the presented results \NEWCHANGED{hold also} if operators that flip $m$ bits without replacement are used (i.e., operators that select a new bit-string with Hamming distance $m$ from the original bit-string). In particular, we will show that any performance differences are limited to lower order terms in the expected runtimes. The latter operators are well known RLS algorithms with neighbourhood size $m$ (i.e., \textsc{RLS}$_m$). 
}

\subsection{The Pseudo-Boolean Benchmark Function \textsc{LeadingOnes}}
The \textsc{LeadingOnes} (\textsc{LO}) \CHANGED{benchmark} function counts the number of consecutive one-bits in a bit string before the first zero-bit:
$$\textsc{LeadingOnes}(x):=\sum_{i=1}^n\prod_{j=1}^i x_j.$$
The unrestricted black box complexity\footnote{\CHANGED{That is, the expected number of fitness function evaluations performed by the best-possible black-box algorithm until the optimum is found, i.e., there are no restrictions on which queries to the black box (i.e., which search points to be evaluated) the algorithm is allowed to make.}} of \textsc{LeadingOnes} is $O(n \log \log n)$ \citep{AfshaniEtAl2013} and 
there exist randomised search heuristics with expected runtimes of $o(n^2)$. 
Recently some Estimation of Distribution Algorithms (EDAs) have been presented which have surprisingly good performance for the problem. 
\cite{DoerrKrejca2018} introduced a modified compact Genetic Algorithm (cGA) called sig-cGA that, rather than updating the frequency vector \CHANGED{in} every generation, does so only once it notices a significance in its history of samples \NEWCHANGED{(i.e., once a significant number of 1s or 0s are sampled in a certain bit position, as opposed to a uniform binomial distribution of samples)}.
They proved that the algorithm optimises \textsc{LeadingOnes}, \textsc{OneMax} and \textsc{BinVal} in $O(n \log n)$ expected function evaluations. Two other algorithms have been proven to optimise \textsc{LeadingOnes} in the same asymptotic expected time: a stable compact Genetic Algorithm (scGA) that biases updates that favour frequencies that move towards $\frac{1}{2}$~\citep{FriedrichEtAl2016} and a Convex Search Algorithm (CSA) using binary uniform convex hull recombination (for sufficiently large populations and with an appropriate restart strategy)~\citep{MoraglioSudholt2017}. While the latter two algorithms have exceptional performance for \textsc{LeadingOnes}, their runtime is very poor for \textsc{OneMax}, respectively providing runtimes at least exponential and super-polynomial in the problem size with high probability~\citep{DoerrKrejca2018}.

The unbiased black box complexity of \textsc{LeadingOnes} is $\Theta(n^2)$~\citep{LehreWitt2012}. If biased mutation operators are allowed but truncation selection is imposed then no asymptotic improvement may be achieved over the unbiased black box complexity. Indeed, \cite{DoerrLengler2018} recently proved that the best possible asymptotic performance of any (1+1) elitist black-box algorithm for \textsc{LeadingOnes} is $\Omega(n^2)$. This bound is matched by the performance of simple well studied heuristics. \CHANGED{RLS} has an expected runtime of $0.5n^2$ fitness function evaluations \citep{BuzdalovBuzdalova2015} and it is well known that the standard (1+1) EA (with mutation rate $\frac{1}{n}$) has an expected runtime of $\frac{e-1}{2}n^2-o(n^2)\approx0.85914n^2\CHANGED{-o(n^2)}$ \citep{BoettcherEtAl2010}. \cite{BoettcherEtAl2010} also showed that the best static mutation rate for the (1+1) EA is $\approx\frac{1.5936}{n}$, which improves the expected runtime to $\frac{e}{4}n^2\pm O(n)\approx0.77201n^2\CHANGED{\pm O(n)}$. Furthermore, they showed that the (1+1) EA with an appropriately chosen dynamic mutation rate (i.e., $\frac{1}{\textsc{LO}(x)+1}$) can outperform any static choice, giving an expected runtime of \CHANGED{approximately} $0.68n^2\pm o(n^2)$. \CHANGED{Note that the (1+1) EA is slowed down by the approximately $37\%$ of the iterations in which no bits are flipped \citep{JansenZarges2011}. The best possible (1+1) EA variant that uses standard bit mutation (discounting iterations in which no bits flip) with the best possible mutation rate at each step has an expected runtime of approximately $0.404n^2\pm o(n^2)$ \citep{DoerrDoerrLengler2019Arxiv}. This is still worse than the best expected performance achievable by an unbiased (1+1) black box algorithm, approximately $0.388n^2\pm o(n^2)$ \citep{DoerrWagner2018,Doerr2018Arxiv}. In this paper we will prove that GRG equipped with a sufficient number of low-level heuristics can match this optimal expected runtime.}

\subsection{Mathematical Techniques}
We now introduce some important tools which will be used in the analyses.
\begin{theorem}\label{WaldEq}\textbf{Wald's Equation \citep{Wald1944,DoerrK13}}
Let $T$ be a random variable with bounded expectation and let $X_1, X_2, \dots$ be non-negative random variables with $E(X_i \mid T\geq i)\leq C$. Then
$$E\left(\sum_{i=1}^T X_i\right)\leq E(T)\cdot C.$$
\end{theorem}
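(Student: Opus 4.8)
The plan is to express the random-length sum as an infinite series of non-negative terms weighted by indicator variables, and then to exploit non-negativity to interchange summation and expectation. Since $T$ is a non-negative integer-valued random variable (it counts iterations), I would first write
$$\sum_{i=1}^{T} X_i = \sum_{i=1}^{\infty} X_i \cdot \mathbf{1}_{\{T \geq i\}},$$
which is valid because the indicator $\mathbf{1}_{\{T \geq i\}}$ equals $1$ exactly for $i \in \{1,\dots,T\}$ and $0$ otherwise. Taking expectations and appealing to Tonelli's theorem — permissible because every summand $X_i \cdot \mathbf{1}_{\{T \geq i\}}$ is non-negative — lets me swap the expectation with the infinite sum:
$$E\left(\sum_{i=1}^{T} X_i\right) = \sum_{i=1}^{\infty} E\left(X_i \cdot \mathbf{1}_{\{T \geq i\}}\right).$$

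Next I would bound each term using the hypothesis. By the definition of elementary conditional expectation, for every $i$ with $P(T \geq i) > 0$ we have $E(X_i \cdot \mathbf{1}_{\{T \geq i\}}) = E(X_i \mid T \geq i)\cdot P(T \geq i) \leq C \cdot P(T \geq i)$, and the inequality holds trivially (both sides zero) when $P(T \geq i) = 0$. Summing over $i$ and invoking the standard tail-sum identity $E(T) = \sum_{i=1}^{\infty} P(T \geq i)$ for a non-negative integer-valued random variable then yields
$$E\left(\sum_{i=1}^{T} X_i\right) \leq C \sum_{i=1}^{\infty} P(T \geq i) = C \cdot E(T),$$
which is the claimed bound.

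The key structural insight is that the supplied condition $E(X_i \mid T \geq i) \leq C$ is precisely what decouples $X_i$ from the ``survival'' event $\{T \geq i\}$ to the extent needed: I never require independence of $T$ and the $X_i$, only control of the conditional mean on each survival event, which is exactly the weaker assumption tailored to an upper bound. The only genuinely analytic step, and hence the place where care is warranted, is the interchange of the infinite sum with the expectation; this is clean here because non-negativity of the $X_i$ makes Tonelli's theorem directly applicable, so no dominated-convergence or integrability hypotheses on the individual terms are needed, while the assumed bounded expectation of $T$ guarantees that the right-hand side $C \cdot E(T)$ is finite and the bound is non-vacuous.
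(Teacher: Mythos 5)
Your proof is correct and complete: the decomposition $\sum_{i=1}^{T} X_i = \sum_{i=1}^{\infty} X_i \mathbf{1}_{\{T \geq i\}}$, the interchange of sum and expectation via Tonelli (justified by non-negativity), the bound $E(X_i \mathbf{1}_{\{T \geq i\}}) = E(X_i \mid T \geq i) P(T \geq i) \leq C\, P(T \geq i)$, and the tail-sum identity $E(T) = \sum_{i=1}^{\infty} P(T \geq i)$ are exactly the standard argument for this inequality. The paper itself gives no proof of this statement — it is quoted as a known tool with citations to Wald (1944) and Doerr and K\"unnemann (2013) — so there is no in-paper argument to compare against; your write-up is the canonical proof of the cited result, and your observation that only control of the conditional means on the survival events (not independence of $T$ from the $X_i$) is needed is precisely the point of this variant of Wald's equation. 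The one implicit assumption you should state explicitly is that $T$ takes values in the non-negative integers, which you correctly note is the intended reading given that $T$ counts iterations.
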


\begin{theorem}\label{ADTheorem}\textbf{Additive Drift Theorem \citep{HeYao2001, OlivetoYao2011}}
Let $\{X_t\}_{t\geq0}$ be a Markov process over a finite set of states $S$, and $g:S\rightarrow\mathbb{R}_0^+$ a function that assigns a non-negative real number to every state. Let the time to reach the optimum be $T:=\min\{t\geq0\mid g(X_t)=0\}$. If there exists $\delta>0$ such that at any time step $t\geq0$ and at any state $X_t\geq0$ the following condition holds:
$$E(g(X_t)-g(X_{t+1})\mid g(X_t)>0)\geq\delta,$$
then
$$E(T\mid X_0)\leq\frac{g(X_0)}{\delta}$$
and
$$E(T)\leq\frac{E(g(X_0))}{\delta}.$$
\end{theorem}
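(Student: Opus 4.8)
The plan is to prove the bound by relating the hitting time $T$ to the accumulated drift, exploiting the standard identity $E(T) = \sum_{t \geq 0} P(T > t)$ valid for any non-negative integer-valued random variable. Intuitively, since $g$ decreases by at least $\delta$ in expectation in every step before the optimum is reached, the total expected decrease over the whole run is at least $\delta \cdot E(T)$; on the other hand this total decrease cannot exceed the initial value $E(g(X_0))$ because $g$ is non-negative. Equating these two observations yields $\delta \cdot E(T) \leq E(g(X_0))$. Equivalently, one can phrase the argument through the process $g(X_t) + \delta t$, which the drift condition forces to be a supermartingale up to time $T$, and then invoke optional stopping; I would, however, prefer the elementary summation version, since it sidesteps checking the integrability hypotheses of the optional stopping theorem by hand.

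First I would pass to the stopped process $\tilde{X}_t := X_{\min(t, T)}$, so that once the optimum is hit the process is frozen and $g(\tilde{X}_t) = 0$ for all $t \geq T$. This is needed because the statement does not assume the optimum is absorbing: the original chain could in principle leave $\{g = 0\}$ after time $T$, whereas for the stopped chain the event $\{g(\tilde{X}_t) > 0\}$ coincides exactly with $\{T > t\}$. For each $t < T$ the one-step behaviour of $\tilde{X}$ agrees with that of $X$, so the drift hypothesis applies verbatim and, taking expectations together with the law of total expectation, gives
$$E\bigl(g(\tilde{X}_t) - g(\tilde{X}_{t+1})\bigr) = E\bigl((g(X_t) - g(X_{t+1}))\,\mathbf{1}[T > t]\bigr) \geq \delta \cdot P(T > t).$$
Summing this inequality over $t = 0, 1, \dots, N-1$, the left-hand side telescopes to $E(g(\tilde{X}_0)) - E(g(\tilde{X}_N)) = E(g(X_0)) - E(g(\tilde{X}_N))$, and dropping the non-negative term $E(g(\tilde{X}_N)) \geq 0$ yields $E(g(X_0)) \geq \delta \sum_{t=0}^{N-1} P(T > t)$ for every $N$. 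Letting $N \to \infty$, monotone convergence makes the partial sums increase to $\sum_{t \geq 0} P(T > t) = E(T)$, so $E(g(X_0)) \geq \delta \cdot E(T)$, which rearranges to $E(T) \leq E(g(X_0))/\delta$; running the entire argument conditioned on the starting state $X_0$ gives the pointwise version $E(T \mid X_0) \leq g(X_0)/\delta$.

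The main obstacle I anticipate is bookkeeping around finiteness and the limit rather than any deep idea: a priori $E(T)$ could be infinite, but the uniform bound $\sum_{t=0}^{N-1} P(T > t) \leq E(g(X_0))/\delta$ on all partial sums already shows the limit is finite and equal to $E(T)$, so this is self-correcting. The one genuinely delicate point is the passage from the hypothesis to $E((g(X_t) - g(X_{t+1}))\,\mathbf{1}[T > t]) \geq \delta \cdot P(T > t)$: here I would read the drift condition in its pointwise form (the drift is at least $\delta$ from \emph{every} state $x$ with $g(x) > 0$) and use the Markov property to restrict it to the sub-event $\{T > t\} \subseteq \{g(X_t) > 0\}$, since conditioning merely on $\{g(X_t) > 0\}$ would not by itself control the process on $\{T > t\}$ once the optimum is allowed to be non-absorbing. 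Finally, the finiteness of $S$ (hence boundedness of $g$) guarantees that all the expectations above are well defined throughout.
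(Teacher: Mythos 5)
Your proof is correct. Note that the paper does not prove this statement at all: it is imported as a known tool from the cited references (He and Yao; Oliveto and Yao), so there is no in-paper argument to compare against. Your argument -- stopping the process at $T$ so that the optimum is effectively absorbing, reading the drift condition pointwise and restricting it to the event $\{T>t\}\subseteq\{g(X_t)>0\}$ via the Markov property, telescoping the expected one-step decreases, and identifying $\sum_{t\ge 0}P(T>t)$ with $E(T)$ -- is precisely the standard proof of the additive drift theorem found in those references, and you correctly handle the two delicate points (the optimum need not be absorbing, and $E(T)$ is not assumed finite a priori, the uniform bound on the partial sums taking care of both).
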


\section{Hyper-Heuristics Are Faster Than Their Low-level Heuristics}\label{sec:leadingones}
In this section we show that \CHANGED{GRG} can outperform its constituent low-level heuristics even when the latter are efficient for the problem at hand.
\CHANGED{To achieve this, we equip the HH with the minimal set of low level heuristics: $H=\{\textsc{1BitFlip},\textsc{2BitFlip}\}$.}
 We first introduce a lower bound on the expected runtime of all algorithms which use only the \textsc{1BitFlip} and \textsc{2BitFlip} operators for \textsc{LeadingOnes}. \CHANGED{Afterwards we will show that GRG matches this expected runtime up to lower order terms.}
 
To achieve \CHANGED{the first} result we will rely on the following theorem.

\begin{theorem}[\cite{BoettcherEtAl2010,BuzdalovBuzdalova2015,Doerr2018Arxiv}]\label{BDNTheorem}
\CHANGED{Consider the run of an unbiased (1+1) black box algorithm optimizing the \textsc{LeadingOnes} function, and let T be the first time the optimum is generated. Given random initialisation:
\begin{align*}
E(T)=\frac{1}{2}\sum_{i=0}^{n-1} A_{i},
\end{align*}
}where $A_{i}$ is the expected time needed to find an improvement given a solution with
fitness $i$.
\end{theorem}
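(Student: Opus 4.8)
The plan is to decompose the runtime across the $n$ fitness levels $\{0,1,\dots,n-1\}$ and to show that each level $i$ contributes exactly $\tfrac12 A_i$ in expectation. Let $v_i$ denote the probability that the run ever produces a search point of \textsc{LeadingOnes}-value exactly $i$. Since selection is elitist and \textsc{LeadingOnes} never decreases, the visited fitness values form an increasing sequence, so the expected total number of iterations is the sum over levels of the expected time spent at each level. By linearity of expectation this gives $E(T)=\sum_{i=0}^{n-1} v_i\, A_i$, and it remains to prove two facts: (i) the expected time to leave level $i$ is well defined and equals $A_i$ irrespective of how level $i$ was reached, and (ii) $v_i=\tfrac12$ for every $i\in\{0,\dots,n-1\}$.

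Both facts follow from a single structural invariant, which I expect to be the technical heart of the argument: conditioned on the current fitness being $i<n$, the bits in positions $i+2,\dots,n$ are independent, uniformly distributed, and independent of the history of the run. I would prove this by induction over the iterations, using two properties of the unbiased $(1+1)$ black-box model. First, a unary unbiased operator produces its offspring as $x'=x\oplus m$ for a random mask $m$ whose distribution is permutation-invariant and generated independently of the current tail; hence every tail bit $x'_p=x_p\oplus m_p$ remains uniform whatever the operator does to it. Second, while the algorithm is at level $i$, the acceptance test for \textsc{LeadingOnes} (whether the new value exceeds $i$) depends only on positions $1,\dots,i+1$, so no tail bit is ever read by selection and the tail stays independent of the observed fitness trajectory. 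Making this rigorous for a history-dependent algorithm is the delicate point; once it is in hand, it also shows that the escape-time distribution from level $i$ depends only on $i$, which establishes (i).

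Given the invariant, (ii) reduces to a short recursion. Any accepted move leaving level $i$ must turn the first zero (position $i+1$) into a one while preserving the leading ones, so the new fitness is $i+1+Z$, where $Z$ counts the leading ones immediately following position $i+1$ in the offspring. Since those offspring bits are still uniform and independent by the invariant, $\Pr(Z=k)=2^{-(k+1)}$, and this increment is independent of how level $i$ was reached; likewise the initial fitness equals $j$ with probability $2^{-(j+1)}$. Level $i$ is therefore visited either at initialisation or by a jump from some lower level $j$ landing exactly on $i$, which yields
\begin{equation*}
v_i = 2^{-(i+1)} + \sum_{j=0}^{i-1} v_j\, 2^{-(i-j)}.
\end{equation*}
I would then verify by induction that $v_i=\tfrac12$ solves this: the base case $v_0=2^{-1}=\tfrac12$ is immediate, and substituting $v_j=\tfrac12$ for all $j<i$ collapses the geometric sum back to $\tfrac12$.

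Finally I would assemble the pieces: inserting $v_i=\tfrac12$ into $E(T)=\sum_{i=0}^{n-1} v_i A_i$ gives $E(T)=\tfrac12\sum_{i=0}^{n-1} A_i$, as claimed. The main obstacle throughout is the tail-uniformity invariant for arbitrary adaptive unbiased $(1+1)$ black-box algorithms; the increment computation, the recursion, and the final summation are all routine once that invariant is available.
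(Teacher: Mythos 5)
The paper does not prove Theorem~\ref{BDNTheorem} at all: it is imported verbatim from the cited works of B\"ottcher et al., Buzdalov and Buzdalova, and Doerr, so there is no in-paper argument to compare against. Your reconstruction is correct and is essentially the standard proof from those sources: the level decomposition $E(T)=\sum_i v_i A_i$, the tail-uniformity invariant (bits beyond position $\textsc{LO}(x)+1$ stay i.i.d.\ uniform because unbiased unary mutation XORs them with an independent mask and \textsc{LeadingOnes}-based acceptance never reads them), and the conclusion $v_i=\tfrac12$. The only stylistic difference is that the literature usually derives $v_i=\tfrac12$ directly (level $i$ is visited iff, at the first time the first $i$ bits are all ones, bit $i+1$ is zero, which by the invariant happens with probability $\tfrac12$), whereas you obtain it from the renewal recursion $v_i=2^{-(i+1)}+\sum_{j<i}v_j2^{-(i-j)}$; both are sound, and your recursion checks out (note only that the jump distribution $\Pr(Z=k)=2^{-(k+1)}$ is truncated at the optimum, which does not affect levels $i\le n-1$). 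You correctly identify the invariant as the delicate step for history-dependent algorithms; it is also what makes $A_i$ well defined as a function of $i$ alone, and the theorem implicitly assumes each $A_i$ is finite so that every visited level is eventually left.
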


\CHANGED{Theorem~\ref{BDNTheorem} holds for any unbiased (1+1) black box algorithm for \textsc{LeadingOnes} \citep{Doerr2018Arxiv}. Since 
we analyse unbiased (1+1) black box algorithms, each differing by their mutation operator, we can apply Theorem~\ref{BDNTheorem} in our analysis.}

\CHANGED{$A_i$ is the expected time to find an improvement when the current solution $x$ consists of $i$ leading 1-bits. Hence,
the theorem implies that the overall expected runtime is minimised if the heuristic $h \in H$ with lowest expected improvement time $A_i$ is applied  whenever the current solution has $i$ leading 1-bits, for all $0\leq i <n$. Naturally, the heuristic with lowest expected time $A_i$ is the one with the highest {\it success} probability (i.e., the probability of producing a fitness improvement).

\begin{sloppypar}
The \textsc{1BitFlip} operator has a success probability of 
${P(\Imp_1\mid \textsc{LO}(x)=i)=\frac{1}{n}}$. The success probability of the
\textsc{2BitFlip} operator is ${P(\Imp_2\mid \textsc{LO}(x)=i)=\frac{1}{n}\cdot\frac{n-i-1}{n}\cdot 2 = \frac{2n-2i-2}{n^2}}$. Hence, for $i\geq\frac{n}{2}$, the \textsc{1BitFlip} operator is more effective (i.e., has a higher probability of success and hence a lower expected waiting time for each improvement $A_i$), while \textsc{2BitFlip} is preferable before. Note that both are equally effective when $\textsc{LO}(x)=\frac{n}{2}-1$. The expected runtime of an algorithm that applies these operators in such a way is a lower bound on the expected runtime of all unbiased (1+1) black box algorithms using only the same two operators. 
\end{sloppypar}
}

\begin{theorem}\label{ThmOpt}
The best-possible expected runtime of any \CHANGED{unbiased (1+1) black box} algorithm using only the \textsc{1BitFlip} and \textsc{2BitFlip} operators for \textsc{LeadingOnes} is $\frac{1+\ln(2)}{4}n^2+O(n)\approx0.42329n^2\CHANGED{+O(n)}$.
\end{theorem}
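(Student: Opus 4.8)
The plan is to start from the exact characterisation of the expected runtime given by Theorem~\ref{BDNTheorem}, namely $E(T) = \frac{1}{2}\sum_{i=0}^{n-1} A_i$, and to exploit the fact that the quantity $A_i$ -- the expected waiting time for an improvement at fitness level $i$ -- depends only on which of the two operators is applied while $\textsc{LO}(x) = i$, so that the sum decomposes independently over the $n$ levels. Consequently the best-possible unbiased $(1+1)$ black box algorithm is the one that, at every level $i$, uses the operator minimising $A_i$; any other algorithm restricted to these two operators can do no better term-by-term, so $\frac{1}{2}\sum_{i=0}^{n-1}\min\{A_i^{(1)}, A_i^{(2)}\}$ is simultaneously a lower bound and an achievable value. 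I would also note that a randomised per-iteration mixture of the two operators yields success probability $p\,s_1 + (1-p)\,s_2$ and hence waiting time $1/(p s_1 + (1-p) s_2)$, which is minimised at a pure choice, so restricting to pure per-level strategies loses nothing.

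Next I would convert the success probabilities already recorded in the excerpt into waiting times. Since $P(\Imp_1 \mid \textsc{LO}(x) = i) = \frac{1}{n}$ we have $A_i^{(1)} = n$, and since $P(\Imp_2 \mid \textsc{LO}(x) = i) = \frac{2(n-i-1)}{n^2}$ we have $A_i^{(2)} = \frac{n^2}{2(n-i-1)}$. Comparing the two, $A_i^{(2)} \le A_i^{(1)}$ holds exactly when $i \le \frac{n}{2} - 1$, with equality at $i = \frac{n}{2} - 1$ (matching the remark preceding the statement). The optimal algorithm therefore applies \textsc{2BitFlip} on the lower half of the fitness range and \textsc{1BitFlip} on the upper half, which gives the split
$$E(T) = \frac{1}{2}\left(\sum_{i=0}^{n/2 - 1} \frac{n^2}{2(n-i-1)} + \sum_{i=n/2}^{n-1} n\right).$$

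Finally I would evaluate the two sums. The \textsc{1BitFlip} sum is arithmetic: $\sum_{i=n/2}^{n-1} n = \frac{n^2}{2}$. For the \textsc{2BitFlip} sum, the substitution $j = n - i - 1$ turns it into $\frac{n^2}{2}\sum_{j=n/2}^{n-1}\frac{1}{j} = \frac{n^2}{2}\left(H_{n-1} - H_{n/2-1}\right)$, and the asymptotics $H_m = \ln m + \gamma + O(1/m)$ give $H_{n-1} - H_{n/2-1} = \ln 2 + O(1/n)$, so this contributes $\frac{n^2}{2}\ln 2 + O(n)$. Combining, $E(T) = \frac{1}{2}\left(\frac{n^2}{2} + \frac{n^2}{2}\ln 2\right) + O(n) = \frac{1+\ln 2}{4}n^2 + O(n)$, as claimed. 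The main points to handle carefully are the justification that greedily minimising each $A_i$ is globally optimal -- which rests entirely on the level-wise decomposition of Theorem~\ref{BDNTheorem} -- and the bookkeeping of the harmonic-sum tail together with the constantly-many boundary levels around $i = \frac{n}{2} - 1$, all of which are absorbed into the $O(n)$ error term since every individual $A_i$ is $O(n)$ on the relevant range.
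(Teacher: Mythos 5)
Your proposal is correct and follows essentially the same route as the paper: both use the level-wise decomposition $E(T)=\frac{1}{2}\sum_{i=0}^{n-1}A_i$ from Theorem~\ref{BDNTheorem}, argue that the optimal algorithm applies \textsc{2BitFlip} for $i<\frac{n}{2}$ and \textsc{1BitFlip} afterwards, and evaluate the resulting harmonic and arithmetic sums to obtain $\frac{1+\ln 2}{4}n^2+O(n)$. Your extra remark that a per-iteration randomised mixture cannot beat the pure per-level choice is a small but sound addition to the optimality justification the paper gives in the discussion preceding the theorem.
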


\begin{proof}
We  partition the analysis into two phases: \CHANGED{Phase 1}, where the \textsc{2BitFlip} operator is used for the first half of the search, and \CHANGED{Phase 2}, where the \textsc{1BitFlip} operator is used for the second half of the search. We consider the expected runtimes \CHANGED{of each phase separately (respectively,} $E(T_1)$ and $E(T_2)$)  and sum them to find the total expected runtime.

To find the expected runtime for \CHANGED{Phase 1} we can consider every point $x$ such that $\textsc{LO}(x)=i\geq \frac{n}{2}$ to be optimal \CHANGED{and we calculate the expected runtime for the \textsc{2BitFlip} operator to find any such point (i.e., $A_i=0$ for $i\geq\frac{n}{2}$)}. 
Since the probability of improvement of the \textsc{2BitFlip} operator is $P(\Imp_2\mid \textsc{LO}(x)=i)=\frac{2n-2i-2}{n^2}$, the expected time to make an improvement for $0\leq i<\frac{n}{2}$ is $A_{i}=\frac{n^2}{2n-2i-2}$. Hence, for Phase 1,
\begin{align*}
E(T_1)&=\frac{1}{2}\sum_{i=0}^{n-1}A_{i}=\frac{1}{2}\sum_{i=0}^{\frac{n}{2}-1}\frac{n^2}{2n-2i-2}=\frac{1}{2}\sum_{k=\frac{n}{2}}^{n-1}\frac{n^2}{2k}\\
&=\frac{n^2}{4}\sum_{k=\frac{n}{2}}^{n-1}\frac{1}{k}=\frac{n^2}{4}\Big(H_{n-1}-H_{\frac{n}{2}-1}\Big),
\end{align*}
\begin{sloppypar}
where $H_x$ is the $x_{th}$ Harmonic number, which can be bounded by
${\ln(n+1)+\gamma-\frac{1}{n+1}\leq H_n\leq \ln(n+1)+\gamma-\frac{1}{2(n+1)}}$
and $\gamma$ is the Euler-Mascheroni constant ($\gamma\approx 0.57722\dots$). Hence, we can bound the term $H_{n-1}-H_{\frac{n}{2}-1}$ from below by $\ln(2)$, and from above by $\ln(2)+\frac{3}{2n}$, which gives the following bounds on $E(T_1)$:
\end{sloppypar}
$$\ln(2)\cdot \frac{n^2}{4}\leq E(T_{1})\leq \Big(\ln(2)+\frac{3}{2n}\Big)\cdot
\frac{n^2}{4}.$$

Since \CHANGED{the upper bound on} $E(T_1)$ is at most $\frac{3n}{8}=O(n)$ greater than the lower bound of $\ln(2)\cdot \frac{n^2}{4}$, we have:
$$\frac{\ln(2)}{4}n^2\leq E(T_{1})\leq \frac{\ln(2)}{4}n^2+\frac{3n}{8}.$$

The expected runtime for Phase 2 can be calculated using Theorem \ref{BDNTheorem} with $A_{i}=n$ for $i \geq \frac{n}{2}$. Similarly to Phase 1, we consider $A_{i}=0$ for $i < \frac{n}{2}$.
$$E(T_{2})=\frac{1}{2}\sum_{i=0}^{n-1}A_{i}=\frac{1}{2}\sum_{i=\frac{n}{2}}^{n-1} n =\frac{1}{2}\cdot n\cdot \frac{n}{2}= \frac{n^2}{4}.$$

Hence, the total expected runtime is:
\begin{align*}
\frac{1+\ln(2)}{4}n^2\leq E(T_{1})+E(T_{2})\leq\frac{1+\ln(2)}{4}n^2+\frac{3n}{8}.
\end{align*}
\end{proof}

Thus, we have proven a theoretical lower bound of $\frac{1+\ln(2)}{4}n^2+O(n)\approx0.42329n^2\CHANGED{+O(n)}$ for the expected runtime of any \CHANGED{unbiased (1+1) black box} algorithm using the \textsc{1BitFlip} and \textsc{2BitFlip} operators \CHANGED{for \textsc{LeadingOnes}}, which is \CHANGED{better in the leading constant} than the expected runtime of \textsc{RLS} (i.e., $0.5n^2$).

The rest of the section is structured as follows. In Subsection~\ref{SubsectionUnderstanding} we show that the simple \CHANGED{HH} mechanisms all perform equivalently to Simple Random for \textsc{LeadingOnes}, up to lower order terms. In Subsection~\ref{subsectiongeneralised} we show that \CHANGED{GRG} is much faster and \CHANGED{matches the optimal expected runtime derived in Theorem~\ref{ThmOpt}, up to lower order terms.}

\subsection{Simple Mechanisms}\label{SubsectionUnderstanding}
In this section we show that the standard simple heuristic selection mechanisms \CHANGED{(i.e., Simple Random, Permutation, Greedy and Random Gradient)} all have the same expected runtime for \textsc{LeadingOnes}, up to lower order terms. 
\CHANGED{Note that, from the experiments performed by \cite{AlanaziLehre2014}, we know that these lower order terms do not have any visible impact on the average runtime already for small problem sizes.}

The following theorem derives the expected runtime for the Simple Random mechanism. The subsequent corollary extends the result to the other mechanisms.

\begin{theorem}\label{pTheorem}
Let \CHANGED{$p_1$} be the probability of choosing the \textsc{1BitFlip} mutation operator, and $1-\CHANGED{p_1}$ the probability of choosing the \textsc{2BitFlip} mutation operator. Then, the expected runtime of the Simple Random mechanism \CHANGED{using $H=\{\textsc{1BitFlip},\textsc{2BitFlip}\}$}, with $\CHANGED{p_1}\in(0,1)$, for \textsc{LeadingOnes} is $\frac{1}{4(1-\CHANGED{p_1})}\ln\left(\frac{2-\CHANGED{p_1}}{\CHANGED{p_1}}\right)n^2+o(n^2).$
If $\CHANGED{p_1}=0$ the expected runtime is infinite. If $\CHANGED{p_1}=1$, the expected runtime is $0.5n^2$.
\end{theorem}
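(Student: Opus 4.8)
The plan is to recognise the Simple Random mechanism as an unbiased $(1+1)$ black box algorithm: it maintains a single current solution, accepts only strict improvements, and applies only the unbiased operators \textsc{1BitFlip} and \textsc{2BitFlip}. Consequently Theorem~\ref{BDNTheorem} applies and gives $E(T)=\frac{1}{2}\sum_{i=0}^{n-1}A_i$ as an exact identity, so the whole proof reduces to computing the $A_i$ and estimating the resulting sum.

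First I would compute the per-iteration success probability at a search point with $\textsc{LO}(x)=i$. Conditioning on which operator is selected and using the two improvement probabilities already recorded in the excerpt, this is
$$P_i = p_1\cdot\frac{1}{n} + (1-p_1)\cdot\frac{2(n-i-1)}{n^2}.$$
The key observation is that $P_i$ depends only on $i$, and not on the bits following the leading block: an improvement of \textsc{1BitFlip} requires flipping position $i+1$, while an improvement of \textsc{2BitFlip} requires flipping position $i+1$ together with a later position. Since the current solution is unchanged until an improvement occurs (strict selection) and the operator choice and flipped bits are independent across iterations, the waiting time for an improvement is geometrically distributed, so $A_i = 1/P_i$ exactly.

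Next I would substitute $A_i=1/P_i$ into Theorem~\ref{BDNTheorem}, clear denominators by a factor of $n^2$, and re-index with $k=n-i-1$ to obtain
$$E(T) = \frac{n^2}{2}\sum_{k=0}^{n-1}\frac{1}{p_1 n + 2(1-p_1)k}.$$
This is a sum of the form $\sum_{k=0}^{n-1}\frac{1}{a+bk}$ with $a=p_1 n$ and $b=2(1-p_1)$, which I would evaluate by comparison with $\int_0^n \frac{dk}{a+bk} = \frac{1}{b}\ln\frac{a+bn}{a}$. Using $p_1 + 2(1-p_1)=2-p_1$ this integral equals $\frac{1}{2(1-p_1)}\ln\frac{2-p_1}{p_1}$, and multiplying by $\frac{n^2}{2}$ yields the claimed leading term $\frac{1}{4(1-p_1)}\ln\left(\frac{2-p_1}{p_1}\right)n^2$. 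Since the summand is monotonically decreasing in $k$, the difference between the sum and the integral is at most the first term $\frac{1}{p_1 n}$, which after multiplication by $\frac{n^2}{2}$ contributes only an $O(n)=o(n^2)$ error, establishing the claim for $p_1\in(0,1)$.

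Finally I would treat the two boundary cases separately. For $p_1=1$ only \textsc{1BitFlip} is used, so $P_i = 1/n$ and $A_i=n$ for every $i$, giving $E(T)=\frac{1}{2}\sum_{i=0}^{n-1}n = \frac{n^2}{2}$. For $p_1=0$ only \textsc{2BitFlip} is used; at $i=n-1$ its success probability is $\frac{2(n-(n-1)-1)}{n^2}=0$, because reaching the optimum from a point with $n-1$ leading ones would require flipping a single specific bit, which a two-bit flip cannot achieve without destroying a leading one, so $A_{n-1}=\infty$ and the expected runtime is infinite. The only delicate point is the error control in the sum estimation, but since $p_1$ is a constant the first-term bound on the integral error is immediately $o(n^2)$; the rest is a direct computation once the geometric form of $A_i$ is justified.
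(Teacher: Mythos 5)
Your proposal is correct and follows essentially the same route as the paper's proof: both cases $p_1\in\{0,1\}$ are handled by the same observations, and the main case applies Theorem~\ref{BDNTheorem} with the identical expression $A_i = \left(p_1\cdot\frac{1}{n}+(1-p_1)\cdot\frac{2n-2i-2}{n^2}\right)^{-1}$. The only (immaterial) difference is that you evaluate the resulting sum by integral comparison with an explicit first-term error bound, whereas the paper rewrites it as a difference of harmonic numbers; both yield the same leading constant with an $o(n^2)$ error.
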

\begin{proof}
If $\CHANGED{p_1}=0$, only the \textsc{2BitFlip} operator is used. There is a non-zero probability of reaching the point $x=1^{n-1}0$, which cannot be improved by using the \textsc{2BitFlip} operator. Hence, by the law of total probability, the expected runtime is infinite.

If $\CHANGED{p_1}=1$, only the \textsc{1BitFlip} operator is used, resulting in exactly RLS, which has expected runtime $0.5n^2$~\citep{BuzdalovBuzdalova2015}.
	
\begin{sloppypar}
If $\CHANGED{p_1}\in(0,1)$, \CHANGED{the expected runtime is} $\frac{1}{2}\sum_{i=1}^n A_{i}$, where $A_{i}$ is the expected time needed to find an improvement given a solution with fitness $i$ (Theorem~\ref{BDNTheorem}). In each iteration, the \textsc{1BitFlip} operator is chosen with probability $\CHANGED{p_1}$ and leads to a fitness improvement with probability \CHANGED{$P(\Imp_1 \mid \textsc{LO}(x)=i)=\frac{1}{n}$}; the \textsc{2BitFlip} operator is chosen with probability $1-\CHANGED{p_1}$ and leads to a fitness improvement with probability \CHANGED{$P(\Imp_2\mid\textsc{LO}(x)=i)=\frac{2n-2i-2}{n^2}$}. Hence,  \CHANGED{${(A_{i})^{-1}=p_1\cdot\frac{1}{n}+(1-p_1)\cdot\frac{2n-2i-2}{n^2}}$}, and
\end{sloppypar}
$$A_{i}=\frac{1}{\frac{\CHANGED{p_1}}{n}+\frac{(1-\CHANGED{p_1})(2n-2i-2)}{n^2}}=\frac{n^2}{2(1-\CHANGED{p_1})(n-i-1)+n\CHANGED{p_1}}.$$
\CHANGED{Hence, by Theorem~\ref{BDNTheorem}, the expected runtime is}
\begin{align}
&\phantom{=}\frac{1}{2}\sum_{i=0}^{n-1}\frac{n^2}{2(1-\CHANGED{p_1})(n-i-1)+n\CHANGED{p_1}} =\frac{n^2}{2}\sum_{k=1}^n\frac{1}{(2\CHANGED{p_1}-2) k+(2-\CHANGED{p_1}) n} \label{line1}\\
&=\frac{n^2}{2(2\CHANGED{p_1}-2)}\left(\ln\left(\frac{1+\frac{(2-\CHANGED{p_1})}{2\CHANGED{p_1}-2}}{\frac{(2-\CHANGED{p_1})}{2\CHANGED{p_1}-2}}\right)+o(1)\right) =\frac{n^2}{4(1-\CHANGED{p_1})}\ln\left(\frac{2-\CHANGED{p_1}}{\CHANGED{p_1}}\right)+o(n^2),\label{line2}
\end{align}
where Equation \ref{line1} becomes Equation \ref{line2} via the following simplification:
\begin{align*}
\sum_{k=1}^n\frac{1}{a\cdot k+b\cdot n}&=\frac{1}{a}\cdot\sum_{k=1}^n\frac{1}{k+\frac{b}{a}\cdot n}=\frac{1}{a}\left(\sum_{k=1}^{(1+\frac{b}{a})n}\frac{1}{k}-\sum_{k=1}^{\frac{b}{a}n}\frac{1}{k}\right)\\
&=\frac{1}{a}\left(\ln\left(\frac{1+\frac{b}{a}}{\frac{b}{a}}\right)+o(1)\right)
\end{align*}
with $a=2\CHANGED{p_1}-2$ and $b=2-\CHANGED{p_1}$.
\end{proof}

When $\CHANGED{p_1}=0.5$ (i.e., \CHANGED{there is an} equal chance of choosing each operator in each iteration), the standard Simple Random mechanism has an expected improvement time of $A_{i}=\frac{2n^2}{3n-2i-2}$, and \CHANGED{an} expected runtime \CHANGED{of} $\frac{\ln(3)}{2}n^2+o(n^2)\approx0.54931n^2\CHANGED{+o(n^2)}$.
The expected runtime improves with increasing $\CHANGED{p_1}$, hence the optimal choice is $\CHANGED{p_1}=1$ (i.e., RLS).

\begin{corollary}\label{CorPGRG}
The expected runtime of the Permutation, Greedy and Random Gradient mechanisms \CHANGED{using $H=\{\textsc{1BitFlip},\textsc{2BitFlip}\}$} for \textsc{LeadingOnes} is $\frac{\ln(3)}{2}n^2+o(n^2)\approx0.54931n^2\CHANGED{+o(n^2)}.$
\end{corollary}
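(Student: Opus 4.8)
My plan is to reduce each of the three mechanisms to the Simple Random mechanism with $p_1=\tfrac12$ (analysed in Theorem~\ref{pTheorem}), showing that each has the same per-level expected improvement time up to an additive $O(1)$, which then sums to the same total runtime up to lower order terms. Throughout I write $q_1:=\frac1n$ and $q_2:=\frac{2n-2i-2}{n^2}$ for the success probabilities of \textsc{1BitFlip} and \textsc{2BitFlip} at a solution with $i$ leading ones, and I recall that for $p_1=\tfrac12$ the target quantity is $A_i^{SR}=\frac{2n^2}{3n-2i-2}=\frac{2}{q_1+q_2}=O(n)$, which by Theorem~\ref{BDNTheorem} sums to $\frac{\ln(3)}{2}n^2+o(n^2)$. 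The common skeleton is to decompose the runtime as $T=\sum_{i=0}^{n-1}\mathbb{1}[\text{level }i\text{ visited}]\cdot T_i$, where $T_i$ is the number of fitness evaluations spent while the current fitness equals $i$; taking expectations and using that each level is visited with probability $\tfrac12$ (the free-rider argument for unbiased operators that underlies Theorem~\ref{BDNTheorem}, which I would check also covers the Greedy mechanism by verifying the tail bits remain uniform), it suffices to prove $E(T_i\mid\text{visited})=A_i^{SR}+O(1)$ for each mechanism. Since $\sum_{i}O(1)=O(n)=o(n^2)$, this yields the claimed runtime.

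For \textbf{Permutation} the operators are applied in a fixed alternating order, so grouping the steps at level $i$ into consecutive blocks of two applies each operator once per block; a block fails to escape with probability $(1-q_1)(1-q_2)$, so the expected number of evaluations to escape is $\frac{2}{q_1+q_2-q_1q_2}$ up to an $O(1)$ correction for the starting parity of the block. Because $q_1q_2=O(1/n^2)$ while $q_1+q_2=\Omega(1/n)$, this equals $\frac{2}{q_1+q_2}\bigl(1+O(1/n)\bigr)=A_i^{SR}+O(1)$. For \textbf{Greedy} each step applies both operators and thus costs two fitness evaluations, succeeding with probability $q_1+q_2-q_1q_2$; hence $E(T_i\mid\text{visited})=\frac{2}{q_1+q_2-q_1q_2}=A_i^{SR}\bigl(1+O(1/n)\bigr)=A_i^{SR}+O(1)$, using $A_i^{SR}=O(n)$.

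The \textbf{Random Gradient} mechanism is the one I expect to be the main obstacle, because success on one level makes the mechanism retain the same operator, coupling consecutive levels and so making the escape time depend on the \emph{inherited} operator rather than on $i$ alone. The key observation that defuses this is that at any level only the very first application uses the inherited operator $h$: as soon as it fails, the mechanism re-randomises and, since each failure re-selects a uniform operator for a single step, the remaining process is \emph{exactly} the Simple Random process with $p_1=\tfrac12$. Writing $A_i^{RG,h}$ for the escape time when entering level $i$ with inherited operator $h$, this gives $A_i^{RG,h}=1+(1-q_h)\,A_i^{SR}$, so that $\bigl|A_i^{RG,h}-A_i^{SR}\bigr|=\bigl|1-q_h\,A_i^{SR}\bigr|=O(1)$ uniformly in $h$, because $q_h=O(1/n)$ and $A_i^{SR}=O(n)$. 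In particular the benefit of carrying over the locally better operator affects only the first step, whose success probability is $O(1/n)$, so no advantage accumulates across levels and the inter-level correlation is irrelevant at leading order.

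Combining the three cases, each mechanism satisfies $E(T_i\mid\text{visited})=A_i^{SR}+O(1)$, so $E(T)=\frac12\sum_{i=0}^{n-1}\bigl(A_i^{SR}+O(1)\bigr)=\frac{\ln(3)}{2}n^2+o(n^2)$, proving Corollary~\ref{CorPGRG}. The two points demanding the most care in a full write-up are (i) justifying the $\tfrac12$ visiting probability for mechanisms whose per-level time is not a pure function of $i$ (Permutation's parity, Random Gradient's inherited operator) and for Greedy's two-evaluation steps, which I would handle by re-using the unbiased free-rider invariant independently of the timing, and (ii) the uniform $O(1)$ bound on the Random Gradient correction, which I would state precisely as above so that it holds for every inherited operator and every level.
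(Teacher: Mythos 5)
Your proposal is correct and takes essentially the same route as the paper: both reduce Permutation, Greedy and Random Gradient to the Simple Random mechanism with $p_1=\tfrac12$ by showing the per-level expected escape time differs by only $O(1)$ (using $q_1q_2=O(1/n^2)$ for the first two, and the fact that only the single inherited application after a success distinguishes Random Gradient from Simple Random), so the total difference is $O(n)=o(n^2)$. Your explicit formula $A_i^{RG,h}=1+(1-q_h)A_i^{SR}$ is a slightly cleaner per-level phrasing of the paper's global counting argument (at most $n$ iterations where Random Gradient and Simple Random differ in operator selection), and your deferred check on Greedy's double-improvement case corresponds to the paper's observation that both offspring improve only $O(1)$ times in expectation.
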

\begin{proof}
\CHANGED{Given a bit-string $x$ with $\textsc{LO}(x)=i$,} let $p_i$ be the probability that \CHANGED{at least} one improvement is constructed within two fitness function evaluations. For the Permutation and Greedy mechanisms, we have $p_i=\frac{1}{n} + \left(1-\frac{1}{n}\right)\cdot\frac{2n-2i-2}{n^2}$ as either the \textsc{1BitFlip} operator can succeed in one iteration, or it can fail and then the \textsc{2BitFlip} operator can succeed in the next iteration. To \CHANGED{derive an upper bound on} the expected optimisation time, we note that the difference between $\frac{2}{p_i}$ (i.e., the expected waiting time for an improvement to be
constructed in terms of fitness evaluations) and the $A_{i}$ waiting times for the standard Simple Random mechanism \CHANGED{with $p_1=0.5$ (i.e., $A_{i}=\frac{2n^2}{3n-2i-2}$) is at most constant}, and thus the difference between the expected runtimes of these mechanisms and Simple Random is limited
to lower order terms. We note that with probability at most $\frac{2}{n^2}$, both mutations (the two mutations performed in parallel by the Greedy mechanism, or the two mutations performed sequentially by the Permutation mechanism) considered are improvements. As this occurs at most a
constant number of times in expectation (i.e., $O(n^2)\cdot \frac{2}{n^2}= O(1)$), and the maximum expected waiting time for any improving step is $O(n)$, the lower bound differs from the upper bound by at most an $O(1)\cdot O(n)= O(n)$ term. Thus, the expected runtime of these mechanisms is also $\frac{\ln(3)}{2}\cdot n^2+o(n^2)$.

For the Random Gradient mechanism, we note that the probability that an operator, when repeated following a success, is successful again is
at most $\frac{2}{n}$ (an upper bound on the success probability of the \textsc{2BitFlip} operator). Since there are at most $n$ improvements to be made throughout the search space, the expected number of \CHANGED{repetitions} which produce an improvement is at most $2$. If the chosen operator is not successful, the Random Gradient mechanism behaves identically to the Simple Random mechanism. Its expected runtime is therefore at least
the expected runtime of the Simple Random mechanism less an $O(n)$ term, and at most the expected runtime of the Simple Random mechanism plus $n$ (as there are at most $n$ iterations where the mechanisms differ in operator selection). Thus, its expected runtime is also $\frac{\ln(3)}{2}\cdot n^2+o(n^2)$.
\end{proof}

We point out that the lower bound for the Random Gradient \CHANGED{HH} contradicts the lower bound of $\frac{n^2}{9}(4+3\ln\left(\frac{10}{3}\right))\approx0.846n^2\CHANGED{+o(n^2)}$ found by \cite{AlanaziLehre2014}. However, their bound results from a small mistake in their proof. They should have found a lower bound of $\frac{n^2}{9}(3\ln\left(\frac{10}{3}\right))+o(n^2)\approx0.401n^2\CHANGED{+o(n^2)}$, which agrees with our \CHANGED{derived expected runtime} of $\frac{\ln 3}{2}n^2+o(n^2)\approx0.549n^2\CHANGED{+o(n^2)}$.

\subsection{Generalised Random Gradient}\label{subsectiongeneralised}
In this subsection we present a rigorous theoretical analysis of \CHANGED{GRG for \textsc{LeadingOnes}}. The main result of this subsection is that \CHANGED{GRG} is able to match, up to lower order terms, the best-possible performance of any algorithm using the \textsc{1BitFlip} and \textsc{2BitFlip} operators for \textsc{LeadingOnes}, as presented in Theorem~\ref{ThmOpt}. We \CHANGED{state} the main result in Corollary~\ref{CorGRGOpt} now, with the proof at the end of the subsection.
\begin{corollary}\label{CorGRGOpt}[Of Theorem~\ref{thm:LO-GenRandomGradient}]
The expected runtime of the Generalised Random Gradient hyper-heuristic \CHANGED{using $H=\{\textsc{1BitFlip},\textsc{2BitFlip}\}$} for \textsc{LeadingOnes}, with $\tau$ that satisfies both $\tau=\omega(n)$ and $\tau\leq\left(\frac{1}{2}-\varepsilon\right)n\ln(n)$, for some constant $\CHANGED{0<\varepsilon<\frac{1}{2}}$, is at most $\frac{1+\ln(2)}{4}n^2+o(n^2)\approx0.42329n^2\CHANGED{+o(n^2)}.$ 
\end{corollary}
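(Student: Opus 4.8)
The plan is to prove the matching upper bound by leaning on Theorem~\ref{ThmOpt}: since GRG is itself an unbiased (1+1) black box algorithm using only the two operators, that theorem already gives the value $\frac{1+\ln 2}{4}n^2+O(n)$ as both the target and a lower bound, so it suffices to show that GRG spends, up to an additive $o(n^2)$, the optimal amount of time in each of the two regimes identified there. Accordingly I would split the run at the threshold where the operators are equally effective into Phase~1, $\textsc{LeadingOnes}(x)<n/2$, and Phase~2, $\textsc{LeadingOnes}(x)\ge n/2$, and prove separately that the expected time in Phase~1 is at most $\frac{\ln 2}{4}n^2+o(n^2)$ and in Phase~2 at most $\frac14 n^2+o(n^2)$. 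Writing $\tau=n\,g(n)$, the hypotheses become $g(n)=\omega(1)$ and $g(n)\le\left(\tfrac12-\varepsilon\right)\ln n$, and the whole argument reduces to showing that in each phase GRG self-adapts to the operator that is optimal there, wasting only $o(n^2)$ evaluations on the other one.

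The engine of the argument is that, at fitness level $i$, the success probabilities are $\frac1n$ for \textsc{1BitFlip} and $\frac{2(n-i-1)}{n^2}$ for \textsc{2BitFlip}, so in each phase the \emph{correct} operator has the strictly larger success probability $p$, while the \emph{wrong} one has success probability $p'<p$. Crucially, in Algorithm~\ref{alg:GRG} an improvement resets the counter but \emph{retains} the current operator; hence a correct operator, once selected, is carried across consecutive levels and is discarded only after $\tau$ consecutive failures, which at level $i$ occurs with probability $(1-p)^\tau\approx e^{-p\tau}$. A correct learning period therefore spans on the order of $e^{p\tau}$ levels before abandonment, against only $e^{p'\tau}$ for a wrong period. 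Since $\tau=\omega(n)$, the ratio $e^{(p-p')\tau}\to\infty$ whenever $i$ is bounded away from $n/2$, so the stationary fraction of levels crossed by the wrong operator, $\frac{1}{1+e^{(p-p')\tau}}$, is $o(1)$; integrating this fraction over a phase shows the wrong operator crosses only $O\!\left(n/g(n)\right)$ levels in total, which costs at most $O\!\left(n^2/g(n)\right)=o(n^2)$ extra evaluations.

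Two further error sources must be controlled, and this is where the remaining hypotheses enter. First, each abandoned period ends with exactly $\tau$ wasted failing evaluations; since the correct operator is abandoned with probability at most $e^{-\tau/n}=e^{-g(n)}$ per level and the wrong operator at least as often, the expected number of periods is $O\!\left(n\,e^{-g(n)}\right)$, so the total abandonment penalty is $O\!\left(\tau\,n\,e^{-g(n)}\right)=O\!\left(n^2 g(n)e^{-g(n)}\right)=o(n^2)$ because $g(n)\to\infty$. Second, the upper bound $\tau\le\left(\tfrac12-\varepsilon\right)n\ln n$ guarantees $e^{p\tau}\le n^{\,1-2\varepsilon}=o(n)$ at every level; that is, \emph{every} period spans only $o(n)$ levels. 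This is exactly what rules out the dangerous event that a single wrong-operator period monopolises an entire phase (which would incur $\Theta(n^2)$ extra time with constant probability) and what legitimises the stationary level-fraction estimate of the previous paragraph; the condition $\varepsilon<\tfrac12$ merely keeps the admissible interval for $\tau$ non-empty and compatible with $\tau=\omega(n)$.

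Finally I would assemble these estimates into a total-runtime bound. Within the free-rider framework underlying Theorem~\ref{BDNTheorem}, the dominant term is $\frac12\sum_{i} 1/p_i^{\mathrm{correct}}$, i.e. the time GRG would spend if it always used the phase-optimal operator; this sum reproduces precisely $E(T_1)+E(T_2)=\frac{1+\ln2}{4}n^2+O(n)$ of Theorem~\ref{ThmOpt}, while the wrong-operator usage and abandonment penalties bounded above contribute only $o(n^2)$. The boundary band $|i-n/2|=O(n/g(n))=o(n)$ is handled separately: there the two success probabilities agree up to a $(1\pm o(1))$ factor, so whichever operator is active costs $O(n)$ per level over $o(n)$ levels, i.e. $o(n^2)$. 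The main obstacle — and the real content of Theorem~\ref{thm:LO-GenRandomGradient}, of which this is a corollary — is making the period-length and stationary-fraction heuristics rigorous: the success probability drifts as $i$ advances within a single period, successive periods are random and correlated, and \textsc{2BitFlip} improvements may skip levels, so the clean geometric picture must be replaced by a potential/drift argument (Theorem~\ref{ADTheorem}) or a Wald-type summation (Theorem~\ref{WaldEq}) over the Markov process tracking fitness together with the active operator and counter. Once the general bound of Theorem~\ref{thm:LO-GenRandomGradient} is established, the corollary follows immediately by substituting $\tau=\omega(n)$ and $\tau\le\left(\tfrac12-\varepsilon\right)n\ln n$ to collapse every error term to $o(n^2)$.
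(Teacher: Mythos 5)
Your proposal is correct in substance and, at the top level, follows the same route as the paper: the corollary is obtained by taking the general bound of Theorem~\ref{thm:LO-GenRandomGradient} and substituting the two hypotheses on $\tau$, with $\tau=\omega(n)$ making the phase-optimal operator dominate and $\tau\leq\left(\frac{1}{2}-\varepsilon\right)n\ln(n)$ keeping the per-period and stage-transition overheads at $o(n^2)$ --- you identify the role of each hypothesis exactly as the paper uses them. The one substantive difference is that most of your text re-narrates the probabilistic mechanism (operator retention, geometric period lengths, stationary level fractions), which is really the content of the proof of Theorem~\ref{thm:LO-GenRandomGradient} itself, while the step that constitutes the paper's actual proof of the corollary --- verifying that the summand of the theorem degenerates, stage by stage, to the optimal operator's waiting time --- is asserted rather than carried out. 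Concretely, the paper shows that $M_1(1)=1$, that for $j<\frac{w}{2}$ the term $e^{\frac{2\tau}{n}\left(1-\frac{j}{w}\right)}$ dominates numerator and denominator so the summand tends to $\frac{1}{2\left(1-\frac{j}{w}\right)}$, that for $j>\frac{w}{2}$ the term $e^{\frac{\tau}{n}}$ dominates so the summand tends to $1$, and then evaluates $\frac{n^2}{4}\sum_{j=1}^{w/2}\frac{1}{w-j}+\frac{n^2}{4}=\frac{1+\ln(2)}{4}n^2+o(n^2)$; this harmonic-sum computation is exactly your claim that ``the dominant term reproduces $E(T_1)+E(T_2)$'', and it needs to be executed rather than inferred from Theorem~\ref{ThmOpt}, since a priori the upper bound of Theorem~\ref{thm:LO-GenRandomGradient} could exceed the optimum. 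With that calculation supplied, your argument coincides with the paper's.
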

This improves significantly upon the result of \cite{LissovoiEtAl2017}.  
\CHANGED{While the previous result showed that the HH outperformed its low level heuristics, Corollary~\ref{CorGRGOpt} also shows that the expected runtime matches the optimal one up to lower order terms.}
\CHANGED{Our previous work} only considered setting $\tau=cn$ for constant $c$, \CHANGED{a learning period for which optimal expected runtime could not be proven}.

\CHANGED{Before proving the corollary}, we first present the necessary prerequisite results. The following theorem is very general as it provides an upper bound on the expected runtime of \CHANGED{GRG} for any value of $\tau$ smaller than $\frac{1}{2}n\ln n$. In particular, \CHANGED{Theorem \ref{thm:LO-GenRandomGradient}} allows us to identify values of $\tau$ for which the expected runtime of the \CHANGED{HH} 
is the 
optimal \CHANGED{one 
achievable using the two operators.
This is the result} highlighted in Corollary~\ref{CorGRGOpt} and depicted in Figure~\ref{pic:upperbounds}. 

Our proof technique partitions the search space into $w$ \CHANGED{stages}, each representing an equal range of fitness values. \CHANGED{The expected runtime of the algorithm does not depend on $w$. However, $w$ does affect the upper bound on the runtime we obtain. 
The greater the number of \CHANGED{stages}, the tighter the upper bound the theorem statement provides as long as $w=o\left(\frac{n}{\exp(\tau/n)}\right)$.}
\begin{theorem} \label{thm:LO-GenRandomGradient}
\sloppy The expected runtime of the Generalised Random Gradient hyper-heuristic using \CHANGED{$H=\{\textsc{1BitFlip},\textsc{2BitFlip}\}$ for \textsc{LeadingOnes}, with} $\tau\leq\left(\frac{1}{2}-\varepsilon\right)n\ln(n)$, for some constant $\CHANGED{0<\varepsilon<\frac{1}{2}}$, is at most
$$\frac{n^2}{2}
\left(\sum_{j=1}^{w} \frac{\frac{2\tau}{n}+ e^{\frac{\tau}{n}}M_1(1)+ e^{2\frac{\tau}{n}\left(1-\frac{j}{w}\right)}M_2\left(
j,w\right)}{(e^\frac{\tau}{n}+e^{2\frac{\tau}{n}\left(1-\frac{j}{w}\right)}-2)w} \right) + o(n^2)$$
where \[ M_2(j,w) :=
\begin{cases}
\frac{\tau}{n} & \quad \text{if } j=w \text{ or }\frac{1}{2\left(1-\frac{j}{w}\right)}>\frac{\tau}{n}\\
\frac{1}{2\left(1-\frac{j}{w}\right)} & \quad \text{otherwise}\\
\end{cases}
\] and $M_1(x) := \min\!\left\{\frac{\tau}{n}, x\right\}$, with \CHANGED{$w\in\omega\left(\frac{\tau}{n}\right)\cap o\left(\frac{n}{\exp(\tau/n)}\right)$}.
\end{theorem}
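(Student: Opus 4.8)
The plan is to reduce everything to the per-level improvement times supplied by Theorem \ref{BDNTheorem}. The key observation is that GRG is itself an unbiased $(1+1)$ black-box algorithm: it stores a single current point, accepts only strict improvements, and only ever applies the two unbiased operators. Hence $E(T) = \frac12\sum_{i=0}^{n-1} A_i$, where $A_i$ is the expected time GRG spends at fitness level $i$ before improving. This identity already absorbs the \textsc{LeadingOnes} free-rider effect into its factor $\frac12$ (each level is the current fitness with probability $\frac12$), so I never have to reason about fitness jumps directly and only need to upper bound the time spent on each level.

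I would then split $\{0,\dots,n-1\}$ into the $w$ announced stages of width $n/w$, stage $j$ being the levels in $[(j-1)n/w,\,jn/w)$, and write $E(T)=\sum_{j=1}^w E[\Theta_j]$ with $E[\Theta_j]=\frac12\sum_{i\in\text{stage }j}A_i$ the expected time the current fitness lies in stage $j$. Within stage $j$ I replace the per-step success probabilities by their stage-worst values $p_1=\tfrac1n$ and $p_2\ge \tfrac{2(1-j/w)}{n}$ (taken at the largest level of the stage), so that all subsequent estimates are genuine upper bounds. I set $r_1:=(1-p_1)^{-\tau}$ and $r_2:=(1-p_2)^{-\tau}$, the reciprocal probabilities that an operator fails throughout a whole learning period; using $\tau\le(\tfrac12-\varepsilon)n\ln n$ one has $\tau p_{op}^2=o(1)$, whence $r_1=e^{\tau/n}(1+o(1))$ and $r_2=e^{2\tau(1-j/w)/n}(1+o(1))$, while at the same time $r_1,r_2$ stay polynomially bounded ($r_1\le n^{1/2-\varepsilon}$), which is what keeps the accumulated multiplicative errors at $o(n^2)$.

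The core is a renewal computation on a single stage. I view the run inside stage $j$ as a sequence of rounds, each starting with a fresh uniform operator choice and consisting of a geometrically distributed number of counter-resetting successful periods followed by exactly one failing period of length $\tau$. For operator $op$ a period is survived with probability $1-1/r_{op}$, so a round produces $r_{op}-1$ improvements in expectation and lasts $(r_{op}-1)S_{op}+\tau$ in expectation, where $S_{op}\le\min\{1/p_{op},\tau\}=n\,M_{op}$ is the mean length of a successful (truncated-geometric) period --- this is exactly where the $\min$ in $M_1$ and the case distinction in $M_2$ come from, the degenerate $j=w$ case being the one in which \textsc{2BitFlip} can never improve. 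Averaging over the uniform operator choice gives mean improvements-per-round $\tfrac12(r_1+r_2-2)$ and mean round-time $\tfrac12[(r_1-1)S_1+(r_2-1)S_2+2\tau]$. Since the expected number of improvements made in stage $j$ is $E[N_j]=\frac12\cdot\frac nw$ (again visit-probability $\frac12$ per level), Wald's Equation (Theorem \ref{WaldEq}) bounds $E[\Theta_j]$ by the expected number of rounds $\frac{n/w}{r_1+r_2-2}$ times the mean round-time, i.e. $E[\Theta_j]\le \frac{n}{2w}\cdot\frac{(r_1-1)S_1+(r_2-1)S_2+2\tau}{r_1+r_2-2}$. Summing over $j$ and bounding $(r_{op}-1)S_{op}\le r_{op}\,n M_{op}$ yields precisely the claimed expression.

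The hardest part is making this renewal bookkeeping rigorous despite GRG's memory. A round typically spans several levels and may cross a stage boundary, and the operator the algorithm is committed to on entering a level is not the fresh uniform one but is biased towards the more successful operator --- improvements are made by operator $op$ with probability proportional to $r_{op}-1$, and it is this bias that produces the denominator $r_1+r_2-2$ and is the true source of GRG's speed-up. Turning this into a clean upper bound means choosing the right stopping time (number of rounds spent in a stage) for Wald's Equation, verifying its conditional-expectation hypothesis, and then showing that the boundary rounds (one per stage), the $(1-p)^\tau$-versus-$e^{-p\tau}$ approximations, the within-stage variation of $p_2$, and the slack in $S_{op}\le nM_{op}$ all contribute only $o(n^2)$. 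Controlling these errors is exactly what forces the admissible windows $w=\omega(\tau/n)$ and $w=o(n/\exp(\tau/n))$ and the cap $\tau\le(\tfrac12-\varepsilon)n\ln n$.
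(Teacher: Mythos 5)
Your proposal is correct and follows essentially the same route as the paper's proof: the same partition into $w$ fitness stages, the same modelling of each operator selection as a geometric number of counter-resetting successes followed by a failing period of length $\tau$, the same $\frac{n}{2w}$ free-rider bound on the improvements needed per stage, and the same Wald/drift combination yielding $E[\Theta_j]\le \frac{n}{2w}\cdot\frac{2\tau+(r_1-1)S_1+(r_2-1)S_2}{r_1+r_2-2}$, which matches the paper's $E(N_j)E(X_j)$ bound term for term. The only cosmetic differences are that your opening appeal to Theorem~\ref{BDNTheorem} is not actually needed (and is slightly delicate for GRG, whose improvement time at a level depends on the operator state, which is why the paper instead argues stage-by-stage), and that the paper names the Additive Drift Theorem for the step you attribute to Wald's equation; the error sources you list (boundary rounds, $(1-p)^\tau$ vs.\ $e^{-p\tau}$, within-stage variation of $p_2$) are exactly the ones the paper controls via the $S_{j+1}$ terms and the conditions on $w$ and $\tau$.
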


Figure~\ref{pic:upperbounds} presents the theoretical upper bounds from Theorem~\ref{thm:LO-GenRandomGradient} for a range of linear $\tau$ values. For $\tau=5n$, GRG already outperforms RLS, giving an expected runtime of $0.46493n^2+o(n^2)$. \CHANGED{For $\tau=30n$, the performance improves to $0.42385n^2+o(n^2)$}, matching the best possible performance from Theorem~\ref{ThmOpt} up to 3 decimal places. We have seen that GRG is able to exactly match this \CHANGED{optimal} performance, up to lower order terms, for $\tau=\omega(n)$ in Corollary~\ref{CorGRGOpt}.
\begin{figure}[t]
\centering
	\begin{tikzpicture}
		\begin{axis}[
			xlabel=$\tau$ ($/n$), xmin=0, ymin=0.35, ymax=0.6,
			ylabel=Runtime $(/n^2)$,
			scaled ticks=false,tick label style={/pgf/number format/fixed},
			mark size=2pt,ymajorgrids=true,
			y tick label style={
				/pgf/number format/.cd,fixed,fixed zerofill,precision=3,
				/tikz/.cd
			},
			ytick={0.4,0.42329,0.5,0.6},
			width=0.98\linewidth,
			height=2.5in,
			legend pos=north east,legend columns=1,xmax=50
		]
		\addplot[red,mark=o]         file {Experiments/Fig1-3/GRG2OP.dat};
			\draw [thick, draw=teal]   (axis cs: 0,0.423286795) -- (axis cs: 50,0.423286795) -- node[below,align=left] {$2_\mathrm{Opt}$} ++(-90,0);
			\draw [thick, draw=brown]   (axis cs: 0,0.5) -- (axis cs: 50,0.5) -- node[above,align=right] {RLS} ++(-25,0);
		
		\legend{GRG}
		\end{axis}
	\end{tikzpicture}
	\caption{The leading constants in the theoretical upper bounds on the expected number of fitness function evaluations required by the Generalised Random Gradient hyper-heuristic to find the \textsc{LeadingOnes} optimum ($w=100,\!000$). \NEWCHANGED{$2_\mathrm{Opt}$ is the leading constant in the best expected runtime achievable using the \textsc{1BitFlip} and \textsc{2BitFlip} operators in any order.}}
	\label{pic:upperbounds}
\end{figure}
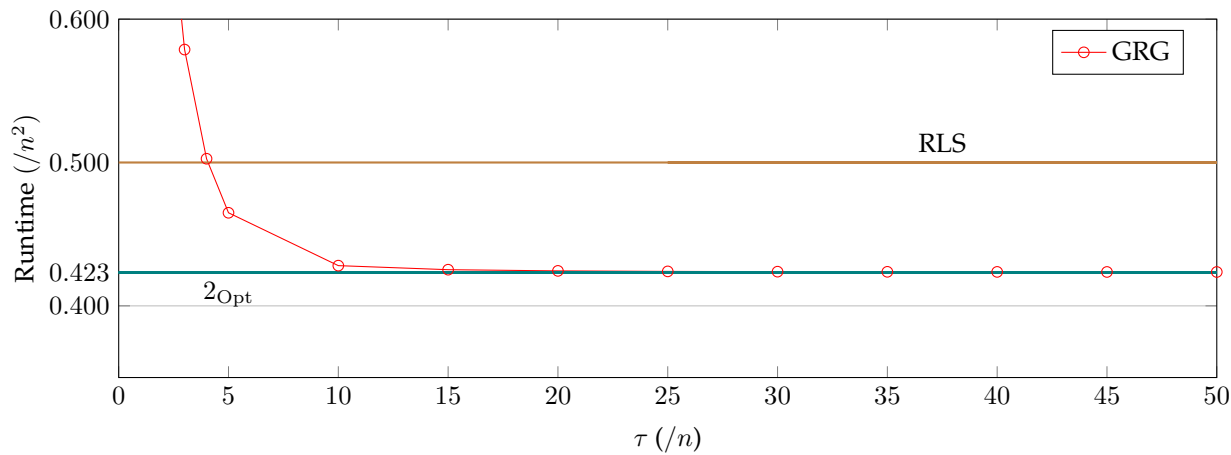

\begin{proof}[Of Theorem~\ref{thm:LO-GenRandomGradient}]

For the purpose of this proof, we partition the optimisation process into \CHANGED{$w$} \CHANGED{stages} based on the value of the \textsc{LeadingOnes} fitness function: during \CHANGED{stage} $j$, the \textsc{LeadingOnes} value of the current solution is at least $\frac{(j-1)n}{w}$ and less than
$\frac{j \cdot n}{w}$. After all the $w$ \CHANGED{stages} have been completed, the global optimum with a \textsc{LeadingOnes} value of $n$ will have been found.

\NEWCHANGED{We will provide an upper bound on the expectation of the runtime $T$ of GRG for \textsc{LeadingOnes} by summing up the expected values of $T_j$ (the expected time
between the first random operator choice in each stage of the optimisation process and the moment when the necessary fitness value to enter the following stage is reached).}
More precisely, our analysis of $E(T_j)$ requires the \CHANGED{stage} to start with a random
choice of mutation operator, we bound $E(T) \leq \sum_{j=1}^{w} \left(E(T_j) + E(S_{j+1}) \right)$
where $S_{j+1}$ is a random variable denoting the expected number of iterations
between the first solution in \CHANGED{stage} $j+1$ being constructed and the first random operator choice occurring in that \CHANGED{stage}. We will later show that with proper
parameter choices, the contribution of the $S_{j+1}$ terms can be bounded by
$o(n^2)$, and therefore they do not affect the leading constant in the overall bound.

Let us now consider $T_j$. 
Recall that
a mutation operator is selected uniformly at random and is allowed to run until it fails to produce an improvement within $\tau$ consecutive iterations. Let $N_j$ be a random variable denoting the number
of random operator choices the \CHANGED{HH} performs during \CHANGED{stage} $j$, and let $X_{j,1}, \ldots,
X_{j,N_j}$ be the number of iterations the \CHANGED{HH} runs each chosen operator for.
We note that $T_j = \sum_{k=1}^{N_j} X_{j,k}$ is a sum of a number
of non-negative variables, and we will later show that $E(N_j)$ is bounded; thus,
$E(T_j)$ can be bounded by applying Wald's equation (Theorem~\ref{WaldEq}):
\begin{equation} \label{eq:LO-GRD-Tj}
	E(T_j) = E\left(\textstyle{\sum_{k=1}^{N_j}} X_{j,k}\right) \leq E(N_j) E(X_{j}),
\end{equation}
where $E(X_{j})$ denotes an upper bound
on all $E(X_{j,k})$ in \CHANGED{stage} $j$.

To bound $E(N_j)$, the expected number of times the random operator selection is performed during \CHANGED{stage} $j$, we
\CHANGED{will provide a} lower bound on the expected number of improvements found following the operator selection and apply the
Additive Drift Theorem (Theorem~\ref{ADTheorem}) to find the expected number of random operator selections that occur before a
sufficient number of improvements \CHANGED{for entering} the next \CHANGED{stage} have been found.

Let $F_1$ and $F_2$ denote the events that the \textsc{1BitFlip} and \textsc{2BitFlip} operators fail to find an improvement during $\tau$ iterations. For the \textsc{1BitFlip} operator (with probability of fitness improvement in one iteration $\frac{1}{n}$),
this event occurs with probability $P(F_1) = \left(1 - \frac{1}{n}\right)^{\tau}$ throughout the process,
which is within $(e^{-\frac{\tau}{n}}-\frac{1}{n},e^{-\frac{\tau}{n}}]$ (since $\left(1-\frac{1}{n}\right)^n\leq \frac{1}{e}\leq\left(1-\frac{1}{n}\right)^{n-1}$). For the \textsc{2BitFlip} operator (with probability of fitness improvement in one iteration $\frac{2n-2i-2}{n^2}$), recall that during \CHANGED{stage}
$j$, the ancestor individual has at most $i=\frac{jn}{w}-1$ and at least $i=\frac{(j-1)n}{w}$ 1-bits. Thus:
\begin{align*}
P(F_2) &\leq \left(1 - 2\cdot\frac{1}{n}\cdot\frac{n-\left(\frac{jn}{w}-1\right)-1}{n}\right)^{\tau}\leq e^{-2\frac{\tau}{n}\left(1-\frac{j}{w}\right)},
\end{align*}
and
\begin{align*}
P(F_2) &\geq \left(1 - 2\cdot\frac{1}{n}\cdot\frac{n-\frac{(j-1)n}{w}-1}{n}\right)^{\tau} \\
&> \left(1 - \frac{2\cdot\left(1-\frac{j-1}{w}\right)}{n}\right)^{\tau} \geq e^{-\frac{2\tau}{n}\left(1-\frac{j-1}{w}\right)} - \frac{1}{n}.
\end{align*}

A geometric distribution with parameter $p = P(F_1)$ (or $p=P(F_2)$) can be used to model the number of improvements that the \textsc{1BitFlip} (or \textsc{2BitFlip}) operator finds prior to failing to find an improvement for $\tau$ iterations. The expectation of this distribution is $\frac{1-p}{p} = \frac{1}{p} - 1$. We can model these events with a geometric random variable since GRG immediately restarts a period of $\tau$ iterations with the chosen heuristic when a successful mutation occurs (i.e., it sets $c_t=0$ \CHANGED{in Algorithm~\ref{alg:GRG}}). Hence, there are no periods of $\tau$ consecutive unsuccessful mutations \CHANGED{apart from the period when} the operator finally fails.
Combined over both operators, the expected number of improvements $D_j$
produced following a single random operator selection during \CHANGED{stage} $j$ is:
$$E(D_j)=\frac{1}{2}\left(\frac{1}{P(F_1)}-1\right) +
\frac{1}{2}\left(\frac{1}{P(F_2)}-1\right)
 \geq \frac{e^{\frac{\tau}{n}}}{2} + \frac{e^{\frac{2\tau}{n}\left(1-\frac{j}{w}\right)}}{2} - 1,$$
\CHANGED{where we inserted} the upper bounds on $P(F_1)$ and $P(F_2)$ \CHANGED{to obtain the inequality}. We will use this expectation as the drift on the progress of the randomly chosen operator
in the Additive Drift Theorem to provide an upper bound on $E(N_j)$. Recall that each \CHANGED{stage} requires advancing
through at most $\frac{n}{w}$ fitness values. Since bits beyond the leading \CHANGED{1-bit} prefix and the first \CHANGED{0-bit} remain uniformly distributed, at most $\frac{n}{2w}$ improvements by mutation are required in expectation. If each step
of a random process in expectation contributes $E(D_j)$ improvements by mutation, then the expected
number of steps required to complete \CHANGED{stage} $j$ is at most
\begin{align} \label{eq:grd-stage-drift}
	E(N_j) \leq \frac{\frac{n}{2w}}{E(D_j)} \leq \frac{n}{\left(e^\frac{\tau}{n}+e^{\frac{2\tau}{n}\left(1-\frac{j}{w}\right)}-2\right)w},
\end{align}
by the Additive Drift Theorem.

We apply Wald's equation to bound $E(X_{j})$, the expected number of iterations before a selected mutation operator fails to produce an improvement for $\tau$ iterations: let $S$ be the number of improvements found by the operator before it fails, and $W_1, \ldots, W_S$ be the number of
iterations it took to find each of those improvements; then, once selected, the \textsc{1BitFlip}
operator runs for
$$E(X_{j} \mid \textsc{1BitFlip}) = \tau + \sum_{k=1}^{S} E(W_k) = \tau + E(S)E(W_1 \mid S \geq 1,
\textsc{1BitFlip}),$$
where $\tau$ accounts for the iterations immediately before failure and the sum \CHANGED{counts the expected number of} iterations
preceding each constructed improvement. Recall that ${E(S) = \frac{1}{P(F_1)} - 1}$ by the properties of the
geometric distribution and observe that ${E(W_1 \mid S \geq 1) = E(W_1 \mid W_1 \leq \tau) \leq
\min\{\tau, E(W_1)\}}$. Using a waiting time argument (i.e., considering the expectation of a geometric random variable) gives $E(W_1)=n$, and with a similar argument for the \textsc{2BitFlip} operator, $E(W_2)\leq \frac{n^2}{2\left(n-\left(\frac{jn}{w}-1\right)-1\right)}=\frac{n}{2\left(1-\frac{j}{w}\right)}$. Hence:
\begin{align*}
	E(X_{j} \mid \textsc{1BitFlip}) &\leq \tau + \left(\frac{1}{P(F_1)} - 1\right)\min\{\tau, n\}\\
	E(X_{j} \mid \textsc{2BitFlip}) &\leq \tau + \left(\frac{1}{P(F_2)} - 1\right)
		\min\left\{\tau, \frac{n}{2\left(1-\frac{j}{w}\right)}\right\} .
\end{align*}
Combining these conditional expectations with lower bounds on $P(F_1)$ and $P(F_2)$ yields
\begin{align} \label{eq:grd-op-phase-length}
	E(X_{j}) & \leq \frac{1}{2} \cdot E(X_{j} \mid \textsc{1BitFlip})
	+ \frac{1}{2} \cdot E(X_{j} \mid \textsc{2BitFlip}) \\
	& \leq \tau
		+ \frac{\min\{\tau, n\}}{2\left(e^{-\frac{\tau}{n}}-\frac{1}{n}\right)}
		+ \frac{\min\left\{\tau, \frac{n}{2\left(1-\frac{j}{w}\right)}\right\}}{2\left(e^{-\frac{2\tau}{n}\left(1-\frac{j-1}{w}\right)}-\frac{1}{n}\right)}\nonumber\\
& \leq \frac{n}{2}\left( \frac{2\tau}{n}
		+ \frac{\min\{\frac{\tau}{n}, 1\}}{e^{-\frac{\tau}{n}}-\frac{1}{n}}
		+ \frac{\min\left\{\frac{\tau}{n}, \frac{1}{2\left(1-\frac{j}{w}\right)}\right\}}{e^{-\frac{2\tau}{n}\left(1-\frac{j-1}{w}\right)}-\frac{1}{n}}
		\right) \nonumber\\
	& \leq \frac{n}{2}\left( \frac{2\tau}{n}
		+ e^{\frac{\tau}{n}}\min\left\{\frac{\tau}{n}, 1\right\} + 
		e^{\frac{2\tau}{n}\left(1-\frac{j}{w}\right)}\min\left\{\frac{\tau}{n}, \frac{1}{2\left(1-\frac{j}{w}\right)}\right\}
		\right) + O(1),\nonumber
\end{align}
using \CHANGED{$e^{\frac{2\tau}{n}\left(1-\frac{j-1}{w}\right)}=e^{\frac{2\tau}{wn}}\cdot e^{\frac{2\tau}{n}\left(1-\frac{j}{w}\right)}=(1+o(1))\cdot e^{\frac{2\tau}{n}\left(1-\frac{j}{w}\right)}$, since $w=\omega\left(\frac{\tau}{n}\right)$,} and $\frac{a}{b-\frac{c}{n}} = \frac{a}{b}+ \frac{ac}{b^2n - bc} = \frac{a}{b}+ O\left(\frac{1}{n}\right)$ to limit the contributions of the $-\frac{1}{n}$ terms in the denominators to lower order terms.

If $j=w$, then $\min\!\left\{\frac{\tau}{n}, \frac{1}{2\left(1-\frac{j}{w}\right)}\right\}$ is undefined. However, we know that the chosen operator would only be applied for a maximum of $\tau$ steps. We thus refer to the functions $M_1(x)=\min\!\left\{\frac{\tau}{n},x\right\}$ and $M_2(j,w)$ from now on, where \[ M_2(j,w) :=
\begin{cases}
\frac{\tau}{n} & \quad \text{if } j=w \text{ or }\frac{1}{2\left(1-\frac{j}{w}\right)}>\frac{\tau}{n}\\
\frac{1}{2\left(1-\frac{j}{w}\right)} & \quad \text{otherwise.}\\
\end{cases}
\]

Finally, we can bound the overall expected runtime of the \CHANGED{HH}. Recall that $S_j$ denotes the number of iterations the \CHANGED{HH} spends in \CHANGED{stage} $j$ while using the random operator chosen during \CHANGED{stage} $j-1$,
and since $E(S_j) \leq \max\left(E(X_{j} \mid \textsc{1BitFlip}), E(X_{j} \mid \textsc{2BitFlip})\right) < 2 E(X_{j})= O\left(n\cdot\exp\left(\frac{2\tau}{n}\right)\right)$ and \CHANGED{$w= o\left(\frac{n}{\exp\left(\frac{2\tau}{n}\right)}\right)$}, \CHANGED{it follows that} $\sum_{j=1}^{w} E(S_{j+1}) = o(n^2)$. 

Substituting the bounds provided in Eq~\ref{eq:grd-stage-drift} and Eq.~\ref{eq:grd-op-phase-length} into Eq.~\ref{eq:LO-GRD-Tj} yields the theorem statement:
\begin{align*}
	E(T) & \leq \sum_{j=1}^{w} \left(E(T_j) + E(S_{j+1}) \right) \leq \left(\sum_{j=1}^{w} E(N_j) E(X_{j})\right) + o(n^2) \\
	& \leq o(n^2) + \frac{n^2}{2} \times \sum_{j=1}^{w} \frac{
		\frac{2\tau}{n}
		+ e^{\frac{\tau}{n}}M_1(1)
		+ e^{\frac{2\tau}{n}\left(1-\frac{j}{w}\right)}M_2\left(j,w\right)
	}{\left(e^\frac{\tau}{n}+e^{\frac{2\tau}{n}\left(1-\frac{j}{w}\right)}-2\right)w} .
\end{align*}
\end{proof}

We can now prove the main result of this section (i.e., Corollary~\ref{CorGRGOpt}), \CHANGED{which states that the expected runtime of GRG using $H=\{1\textsc{BitFlip},2\textsc{BitFlip}\}$ for \textsc{LeadingOnes} with $\tau$ that satisfies both $\tau=\omega(n)$ and $\tau\leq\left(\frac{1}{2}-\varepsilon\right)n\ln(n)$, for some constant $0<\varepsilon<\frac{1}{2}$, is at most $\frac{1+\ln(2)}{4}n^2+o(n^2)\approx042329n^2+o(n^2)$}.
\begin{proof}[Of Corollary~\ref{CorGRGOpt}]

Consider first the terms $M_1(1)$ and $M_2(j,w)$. Since $\tau=\omega(n)$, we have $M_1(1)=1$. For $M_2(j,w)$ note that for the first half of the search, $M_2(j,w)=\frac{1}{2\left(1-\frac{j}{w}\right)}$, and in the second half of the search, $M_2(j,w)= O(w)=n^{o(1)}$ (excluding the case when $j=w$ in which $M_2(j,w)=\frac{\tau}{n}=n^{o(1)}$). Note \CHANGED{also} that $\frac{\tau}{n}=\omega(1)$. Hence\CHANGED{, we can simplify the sum from Theorem~\ref{thm:LO-GenRandomGradient} as follows}:
\begin{align*}
E(T)&\leq\frac{n^2}{2}\left(\sum_{j=1}^{w} \frac{\frac{2\tau}{n}+ e^{\frac{\tau}{n}}M_1(1)+ e^{\frac{2\tau}{n}\left(1-\frac{j}{w}\right)}M_2\left(j,w\right)}{(e^\frac{\tau}{n}+e^{2\frac{\tau}{n}\left(1-\frac{j}{w}\right)}-2)w} \right) + o(n^2)\\
&=\frac{n^2}{2}\left(\sum_{j=1}^{w} \frac{\frac{2\tau}{n}+ e^{\frac{\tau}{n}}+ e^{\frac{2\tau}{n}\left(1-\frac{j}{w}\right)}M_2\left(j,w\right)}{\left(e^\frac{\tau}{n}+e^{2\frac{\tau}{n}\left(1-\frac{j}{w}\right)}-2\right)w} \right) + o(n^2)\\
&=\frac{n^2}{2}\left(\sum_{j=1}^{w} \frac{1}{w}\left(\frac{\frac{2\tau}{n}+2}{e^\frac{\tau}{n}+e^{2\frac{\tau}{n}\left(1-\frac{j}{w}\right)}-2}+\frac{e^{\frac{\tau}{n}}+ e^{\frac{2\tau}{n}\left(1-\frac{j}{w}\right)}M_2\left(j,w\right)-2}{e^\frac{\tau}{n}+e^{2\frac{\tau}{n}\left(1-\frac{j}{w}\right)}-2} \right)\right)\\
&\phantom{=} + o(n^2)\\
&=\frac{n^2}{2}\left(\sum_{j=1}^{w} \frac{1}{w}\left(\frac{e^{\frac{\tau}{n}}+ e^{\frac{2\tau}{n}\left(1-\frac{j}{w}\right)}M_2\left(j,w\right)-2}{e^\frac{\tau}{n}+e^{2\frac{\tau}{n}\left(1-\frac{j}{w}\right)}-2} \right)\right) + o(n^2).
\end{align*}

The exponential terms will \CHANGED{asymptotically} dominate the summation term (since $\frac{\tau}{n}=\omega(1)$). When $j>\frac{w}{2}$, we note that $\exp\left(\frac{\tau}{n}\right)>\exp\left(\frac{2\tau}{n}\left(1-\frac{j}{w}\right)\right)$ and this term will dominate the other (vice-versa for $j<\frac{w}{2}$). At $j=\frac{w}{2}$, the two values are equal (\CHANGED{with the term in the summand equal to 1}). Hence, we can split the sum into three parts:
\begin{align*}
E(T)&\leq\frac{n^2}{2}\left(\sum_{j=1}^{w} \frac{1}{w}\left(\frac{e^{\frac{\tau}{n}}+ e^{\frac{2\tau}{n}\left(1-\frac{j}{w}\right)}M_2\left(j,w\right)-2}{e^\frac{\tau}{n}+e^{2\frac{\tau}{n}\left(1-\frac{j}{w}\right)}-2} \right)\right) + o(n^2)\\
&=\frac{n^2}{2}\left(\sum_{j=1}^{w/2-1} \frac{1}{w}\left(\frac{e^{\frac{2\tau}{n}\left(1-\frac{j}{w}\right)}\cdot\frac{1}{2\left(1-\frac{j}{w}\right)}}{e^{\frac{2\tau}{n}\left(1-\frac{j}{w}\right)}} \right)\right)+\frac{n^2}{2w}+\frac{n^2}{2}\left(\sum_{j=w/2+1}^{w} \frac{1}{w}\left(\frac{e^{\frac{\tau}{n}}}{e^{\frac{\tau}{n}}}\right)\right)\\
&\phantom{=} + o(n^2)\\
&=\frac{n^2}{4}\left(\sum_{j=1}^{w/2} \frac{1}{w-j}\right)+\frac{n^2}{4}+o(n^2)=\left(\frac{1+\ln(2)}{4}\right)n^2+o(n^2).
\end{align*}
\begin{sloppypar}
Note that at $j=w$, the respective `dominating terms' are $\exp\left(\frac{\tau}{n}\right)$ and ${\exp\left(\frac{2\tau}{n}\left(1-\frac{w}{w}\right)\right)=1}$, and $M(j,w)=\frac{\tau}{n}<\frac{1}{2}\ln(n)$. Hence, $\exp\left(\frac{\tau}{n}\right)$ will dominate.
\end{sloppypar}
\end{proof}

\section{Increasing the Choice of Low-level Heuristics Leads to Improved Performance}\label{sec:MoreThanTwo}
Runtime analyses of \CHANGED{HHs} are easier if the algorithms can only choose between two operators. However, in realistic contexts, \CHANGED{HHs} have to choose between many more operators. 

In this section, we consider the previously analysed \CHANGED{GRG HH}, yet extend the set $H$ of low-level heuristics to be of size $k\geq2$ i.e., $|H|=k=\Theta(1)$. We extend upon the previous analysis by considering the $k$ operators as $k$ different mutation operators, each flipping between $1$ and $k$ bits with replacement uniformly at random. This more accurately represents \CHANGED{HH} approaches employed for real-world problems.

As before, we consider the \textsc{LeadingOnes} benchmark function. We will rigorously show that the performance of the simple mechanisms deteriorates as the number ($|H|=k=\Theta(1)$) of operators increases. In addition, we prove decreasing upper bounds on the expected runtime of \CHANGED{GRG} as the number of operators increases. 

The main result of this section is the following theorem, \CHANGED{which states that GRG} that chooses between $k$ stochastic mutation operators has better expected performance than any algorithm, including the best-possible, using less than $k$ stochastic mutation operators for \textsc{LeadingOnes}. We present the statement of Theorem~\ref{besttheorem} now, and the proof at the end of this section.
\begin{theorem}\label{besttheorem}
\CHANGED{The expected runtime for \textsc{LeadingOnes} of the Generalised Random Gradient hyper-heuristic using $H=\{1\textsc{BitFlip},\dots,k\textsc{BitFlip}\}$ and $k=\Theta(1)$, with $\tau$ that satisfies both $\tau=\omega(n)$ and $\tau\leq\left(\frac{1}{k}-\varepsilon\right)n\ln(n)$ for some constant $\CHANGED{0<\varepsilon<\frac{1}{k}}$, is smaller than the best-possible expected runtime of any unbiased (1+1) black box algorithm using any strict subset of $\{1\textsc{BitFlip},\dots,k\textsc{BitFlip}\}$.}
\end{theorem}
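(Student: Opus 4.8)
The plan is to compare, via Theorem~\ref{BDNTheorem}, the expected runtime of GRG using the full set $H=\{1\textsc{BitFlip},\dots,k\textsc{BitFlip}\}$ with the best-possible expected runtime $\frac{1}{2}\sum_{i=0}^{n-1}\min_{m\in H'}A_i^{(m)}$ of any algorithm restricted to a strict subset $H'\subsetneq H$, where $A_i^{(m)}$ denotes the expected improvement time of the $m\textsc{BitFlip}$ operator at fitness level $i$. First I would compute the success probabilities, generalising the argument preceding Theorem~\ref{ThmOpt}: the dominant improving event is that exactly one of the $m$ flips hits the first 0-bit while the remaining $m-1$ land beyond it, giving $P(\Imp_m\mid\textsc{LO}(x)=i)=(1+o(1))\frac{m}{n}\left(1-\frac{i}{n}\right)^{m-1}$. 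Writing $x=i/n$, a short comparison of consecutive probabilities shows that the operator minimising $A_i^{(m)}$ is $m\textsc{BitFlip}$ precisely when $x\in\left(\frac{1}{m+1},\frac{1}{m}\right)$ for $m<k$, and $k\textsc{BitFlip}$ throughout $x\in\left(0,\frac{1}{k}\right)$ (since no larger operator is available). Converting $\frac{1}{2}\sum_i A_i^{\mathrm{opt}}$ to a Riemann integral then yields a clean leading constant $E(T^{\mathrm{opt}}_k)=(1+o(1))\frac{n^2}{2}\int_0^1\frac{(1-x)^{1-m(x)}}{m(x)}\,dx$ for the full-set optimum, where $m(x)$ is the optimal arity at normalised fitness $x$; this generalises Theorem~\ref{ThmOpt} and is strictly decreasing in $k$.

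Next I would establish the lower bound for strict subsets. Fix $H'\subsetneq H$ and pick $m^\ast\in H\setminus H'$. Over the interval $x\in\left(\frac{1}{m^\ast+1},\frac{1}{m^\ast}\right)$, which has positive length for every $m^\ast\leq k$ and hence covers $\Theta(n)$ fitness levels, $m^\ast$ is the \emph{unique} minimiser of $A_i$ among all $k$ operators, so every operator available to $H'$ has strictly larger $A_i$. Away from $x=1$ all $A_i^{(m)}=\Theta(n)$, so the per-level excess $\min_{m\in H'}A_i^{(m)}-A_i^{(m^\ast)}$ is bounded below by a positive constant times $n$ over $\Theta(n)$ levels, giving $\frac{1}{2}\sum_i\min_{m\in H'}A_i^{(m)}\geq E(T^{\mathrm{opt}}_k)+c\,n^2$ for some constant $c>0$. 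The one case worth noting is a subset omitting $1\textsc{BitFlip}$: near $i=n-1$ only odd-arity operators can improve, and then only with probability $O(1/n^2)$, so such subsets have super-quadratic (indeed $\Omega(n^3)$) expected runtime, or infinite runtime if no operator in $H'$ can improve from $1^{n-1}0$; these are trivially worse. Since $k=\Theta(1)$ there are finitely many subsets, so taking the minimum gap gives a single uniform constant $c>0$.

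The main work -- and the step I expect to be the principal obstacle -- is the upper bound showing GRG with the full set matches $E(T^{\mathrm{opt}}_k)$ up to $o(n^2)$, generalising Theorem~\ref{thm:LO-GenRandomGradient} from two operators to $k$. Partitioning the run into $w$ stages as before, I would lower-bound the expected number of improvements per random operator selection in stage $j$ by $E(D_j)=\frac{1}{k}\sum_{m=1}^{k}\left(\frac{1}{P(F_m)}-1\right)$, where $P(F_m)=(1-p_m)^{\tau}\approx e^{-p_m\tau}$ is the probability that $m\textsc{BitFlip}$ fails for $\tau$ consecutive iterations; since $\tau=\omega(n)$, the term of the \emph{best} operator in the stage dominates. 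Applying the Additive Drift Theorem to $E(N_j)\leq\frac{n/(2w)}{E(D_j)}$ and Wald's equation to $E(T_j)\leq E(N_j)\,E(X_j)$, the large factor $e^{p^\ast\tau}$ of the best operator cancels between $E(N_j)$ and $E(X_j)$, leaving in each stage $(1+o(1))$ times the number of improvements needed ($\approx\frac{n}{2w}$) multiplied by the per-improvement waiting time of the best available operator -- exactly the optimal per-stage contribution -- so that summing over stages recovers the integral from the first step. The delicate points are (i) verifying that the contributions of the sub-optimal operators and of the carry-over terms $S_{j+1}$ stay in the $o(n^2)$ remainder, for which the restriction $\tau\leq\left(\frac{1}{k}-\varepsilon\right)n\ln n$ is precisely what keeps $e^{p_m\tau}=O\!\left(n^{1-k\varepsilon}\right)=o(n)$ even for the fastest operator (whose success probability is at most $\approx k/n$), and (ii) confirming that the constant produced by the cancellation matches the integral rather than a larger quantity.

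Finally I would combine the three estimates: GRG with the full set runs in expected time at most $E(T^{\mathrm{opt}}_k)+o(n^2)$, while every strict subset requires at least $E(T^{\mathrm{opt}}_k)+c\,n^2$. Hence for all sufficiently large $n$ the former is strictly smaller than the latter, and since the number of subsets is finite this holds uniformly over all strict subsets, proving the theorem.
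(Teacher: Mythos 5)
Your proposal is correct and follows essentially the same route as the paper: it reconstructs the operator success probabilities and their optimal fitness ranges (the paper's Lemmas~\ref{moplemma} and~\ref{moplemma2}), the best-possible $k$-operator runtime (Theorem~\ref{koptbestcase}), the drift/Wald stage analysis showing GRG matches it up to $o(n^2)$ (Theorem~\ref{kopGRG} and Corollary~\ref{corkoptbest}), and finally the observation that any strict subset loses $\Theta(n^2)$ on the linear-length fitness interval where its missing operator is uniquely optimal. The only cosmetic differences are your Riemann-integral formulation of the optimal constant and the explicit (but unnecessary, since the interval argument already covers it) side remark about subsets omitting $1\textsc{BitFlip}$.
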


\CHANGED{Theorem~\ref{besttheorem}} highlights the power of \CHANGED{HHs} as general-purpose problem solvers. The inclusion of more heuristics to the set of low-level heuristics is implied to be preferable, showcasing the impressive learning capabilities of even simple \CHANGED{HHs}.  \CHANGED{Figure~\ref{pic:upperbounds2v2} highlights the meaning of Theorem~\ref{besttheorem} for $k=3$ and $k=5$.} \CHANGED{The expected runtime of GRG with three operators is better than the best possible expected runtime achievable using the first two operators in any combination. The figure also highlights that with five operators the expected runtime is better than any one achievable using fewer operators.}

\begin{figure}[t]
	\begin{tikzpicture}
		\begin{axis}[
			xlabel=$\tau$ ($/n$), xmin=0, ymin=0.40, ymax=0.44,
			ylabel=Runtime $(/n^2)$,
			scaled ticks=false,tick label style={/pgf/number format/fixed},
			mark size=2pt,ymajorgrids=true,
			y tick label style={
				/pgf/number format/.cd,fixed,fixed zerofill,precision=2,
				/tikz/.cd
			},
			width=0.485\linewidth,
			height=2in,
			legend pos=north east,legend columns=1,xmax=50
		]
		\addplot[blue,mark=o]         file {Experiments/Fig1-3/GRG2OP.dat};
		\addplot[red,mark=*]          file {Experiments/Fig1-3/GRG3OP.dat};
		\draw [thick, draw=teal]   (axis cs: 0,0.423286795) -- (axis cs: 50,0.423286795) -- node[below,align=left] {$2_\mathrm{Opt}$} ++(-90,0);
		
		\legend{$k=2$, $k=3$}
		\end{axis}
	\end{tikzpicture}\hfill
		\begin{tikzpicture}
		\begin{axis}[
			xlabel=$\tau$ ($/n$), xmin=0, ymin=0.38, ymax=0.44,
			ylabel=Runtime $(/n^2)$,
			scaled ticks=false,tick label style={/pgf/number format/fixed},
			mark size=2pt,ymajorgrids=true,
			y tick label style={
				/pgf/number format/.cd,fixed,fixed zerofill,precision=2,
				/tikz/.cd
			},
			width=0.485\linewidth,
			height=2in,
			legend pos=north east,legend columns=1,xmax=50
		]
		\addplot[green,mark=square]        file {Experiments/Fig1-3/GRG4OP.dat};
		\addplot[orange,mark=diamond]        file {Experiments/Fig1-3/GRG5OP.dat};
		\draw [thick, draw=teal]   (axis cs: 0,0.3983094070) -- (axis cs: 50,0.3983094070) -- node[below,align=left] {$4_\mathrm{Opt}$} ++(-90,0);
		
		\legend{$k=4$, $k=5$}
		\end{axis}
	\end{tikzpicture}
	\caption{A comparison of the optimal expected runtimes of the Generalised Random Gradient hyper-heuristic with $k$-operators against the leading constant in the theoretical upper bound of the expected runtime of the Generalised Random Gradient hyper-heuristic with $k+1$-operators; for $k=2$ and $k=4$. $2_\mathrm{Opt}$ ($\approx0.42329n^2+o(n^2)$) and $4_\mathrm{Opt}$ ($\approx0.39830n^2+o(n^2)$) are the best possible runtimes achievable by the GRG hyper-heuristic with access to $2$ and $4$ low-level heuristics respectively, from Corollary~\ref{corkoptbest}.}
	\label{pic:upperbounds2v2}
\end{figure}
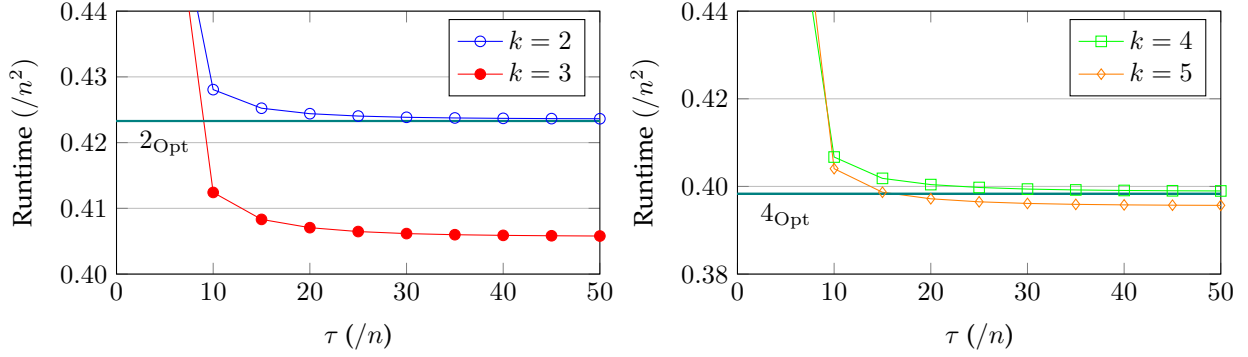

We now present the necessary prerequisite analysis to \CHANGED{obtain} the result. We first derive the best-possible expected runtime achievable by any \CHANGED{unbiased (1+1) black box algorithm using $\{1\textsc{BitFlip},\dots,k\textsc{BitFlip}\}$ mutation operators for \textsc{LeadingOnes}}. Before we prove this, we introduce the following two helpful lemmata \CHANGED{(which hold for all problem sizes $n\geq1$)}.

\begin{lemma}\label{moplemma}
\CHANGED{Given a bit-string $x$ with $LO(x)=i$, the probability of improvement of a mutation operator \CHANGED{that flips} $m=\Theta(1)$ bits with replacement ($m\textsc{BitFlip}$) \NEWCHANGED{is}}
$$P(\Imp_m\mid \textsc{LO}(x)=i)=m\cdot\frac{1}{n}\cdot\left(\frac{n-i-1}{n}\right)^{m-1}+O\left(\frac{1}{n^2}\right).$$
\end{lemma}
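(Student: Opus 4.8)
The plan is to model the $m\textsc{BitFlip}$ operator as a sequence of $m$ independent draws, each selecting a position uniformly at random from $\{1,\dots,n\}$ and flipping the corresponding bit (so a position selected an even number of times is left unchanged). Writing $x = 1^i 0\, x_{i+2}\cdots x_n$ for a string with $\textsc{LO}(x)=i$, I would first characterise the improvement event combinatorially: the fitness strictly increases if and only if position $i+1$ (the first zero-bit) is selected an odd number of times while each of the leading positions $1,\dots,i$ is selected an even number of times (including zero). These two conditions guarantee that the leading prefix of ones is preserved and extended by at least one, which suffices for an improvement regardless of the bits beyond position $i+1$.

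Next I would isolate the dominant contribution. Let $E^*$ be the event that position $i+1$ is selected exactly once and none of the leading positions $1,\dots,i$ is selected at all, so that the remaining $m-1$ draws all land in the \emph{safe} region $\{i+2,\dots,n\}$ of size $n-i-1$. A direct count gives
\[
P(E^*) = \binom{m}{1}\cdot\frac{1}{n}\cdot\left(\frac{n-i-1}{n}\right)^{m-1} = m\cdot\frac{1}{n}\cdot\left(\frac{n-i-1}{n}\right)^{m-1},
\]
which is exactly the claimed leading term. Since $E^*\subseteq\Imp_m$, it remains to bound $P(\Imp_m)-P(E^*)=P(\Imp_m\setminus E^*)$ by $O(1/n^2)$.

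The key step is then to verify that every improving configuration outside $E^*$ is \emph{expensive}. Because an improvement forces position $i+1$ to be selected an odd number of times and every leading position an even number of times, an improving outcome fails to lie in $E^*$ only if either (a) position $i+1$ is selected at least three times, or (b) position $i+1$ is selected (at least once) while some leading position $j\le i$ is selected at least twice. Case (a) has probability at most $\binom{m}{3}n^{-3}=O(1/n^2)$, since three specific draws must each hit a fixed position. Case (b) has probability at most $i\cdot\binom{m}{1}\binom{m-1}{2}n^{-3}\le\binom{m}{1}\binom{m-1}{2}n^{-2}=O(1/n^2)$, where the factor $i\le n$ comes from a union bound over the choice of the repeated leading position and $m=\Theta(1)$ keeps the binomial coefficients constant. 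Summing the two bounds yields $P(\Imp_m\setminus E^*)=O(1/n^2)$ and hence the lemma.

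The main obstacle is not any single estimate but ensuring the error analysis is exhaustive and uniform in $i$: one must confirm that the crude union bounds in cases (a) and (b) genuinely cover all improving configurations outside $E^*$, and that the factor $i$ arising in case (b) is absorbed into $O(1/n^2)$ via $i\le n$ rather than inadvertently leaving an $O(1/n)$ term. A useful sanity check is the boundary case $i=n-1$ with $m\ge 2$, where the safe region is empty, the leading term vanishes, and the true improvement probability is $0$, consistent with the stated $O(1/n^2)$ error.
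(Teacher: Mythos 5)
Your proof is correct and follows essentially the same route as the paper: the leading term is the probability of the event that exactly one of the $m$ draws hits the first $0$-bit while the remaining $m-1$ draws land in the suffix of length $n-i-1$, and all other improving configurations (repeated hits on position $i+1$, or flip--reflip pairs in the leading prefix) are bounded by $O\left(\frac{1}{n^2}\right)$ via union bounds using $m=\Theta(1)$. Your case split into (a) and (b) is in fact somewhat more explicit and exhaustive than the paper's own error estimate, but the underlying decomposition is identical.
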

\begin{proof}
\NEWCHANGED{In the context of the \textsc{LeadingOnes} function, the way for the $m\textsc{BitFlip}$ operator to make an improvement is to flip the first 0-bit after the leading 1-bits, whilst keeping the leading 1-bits in the prefix unchanged. This occurs with probability $m\cdot\frac{1}{n}\cdot\left(\frac{n-i-1}{n}\right)^{m-1}$.

It is also possible to make fitness improvements while one or more bits of the prefix (i.e., within the first $i+1$ bits) are flipped, each an even number of times. This can occur by flipping and reflipping between 1 and $\left\lfloor\frac{m-1}{2}\right\rfloor$ bits. Firstly, the first 0-bit must be flipped into a 1-bit, which occurs with probability at most $\frac{m}{n}$. Then, an even number of the remaining bitflips can be used to flip bits in the prefix, which need to be reflipped to maintain the prefix. The remaining flips will occur within the suffix $n-i-1$ bits and these events can occur in any order. Hence, the probability of producing an improvement while flipping and reflipping an even number of bits ($>0$) in the prefix is at most:
\begin{align*}
&\phantom{=}\frac{m}{n}\cdot\sum_{j=1}^{\left\lfloor\frac{m-1}{2}\right\rfloor}\left[\binom{m-1}{2j}\cdot\frac{1}{i+1}\cdot\left(\frac{i+1}{n-1}\right)^{2j}\cdot\left(\frac{n-i-1}{n-1}\right)^{m-2j-1}\right]\\
&=\frac{m}{n}\cdot\binom{m-1}{2}\cdot\frac{1}{i+1}\cdot\left(\frac{i+1}{n-1}\right)^2\cdot\left(\frac{n-i-1}{n-1}\right)^{m-3}+O\left(\frac{1}{n^2}\right)=O\left(\frac{1}{n^2}\right),
\end{align*}
given $m=\Theta(1)$.  Hence, the probability of improvement of the $m\textsc{BitFlip}$ operator for \textsc{LeadingOnes} is}
$$P(\Imp_m\CHANGED{\mid \textsc{LO}(x)=i})=m\cdot\frac{1}{n}\cdot\left(\frac{n-i-1}{n}\right)^{m-1}+O\left(\frac{1}{n^2}\right).$$
\end{proof}

\begin{lemma}\label{moplemma2}
\CHANGED{Given a bit-string $x$ with $\textsc{LO}(x)=i$, consider two mutation operators flipping $a=\Theta(1)$ and $b=\Theta(1)$ bits with replacement respectively ($a\textsc{BitFlip}$ and $b\textsc{BitFlip}$), with $b>a$. Then (excluding $i=n-1$) $b\textsc{BitFlip}$ has a higher probability of success than $a\textsc{BitFlip}$ (i.e., $P(\Imp_b\mid\textsc{LO}(x)=i)>P(\Imp_a\mid\textsc{LO}(x)=i)$) when} 
$$i<n\left(1-\left(\frac{b}{a}\right)^{\frac{1}{a-b}}\right)-1.$$
\end{lemma}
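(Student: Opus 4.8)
The plan is to reduce the claimed threshold to a direct comparison of the leading-order success probabilities supplied by Lemma~\ref{moplemma}. Writing $s := \frac{n-i-1}{n}$, that lemma gives $P(\Imp_a \mid \textsc{LO}(x)=i) = \frac{a}{n}\,s^{a-1} + O(1/n^2)$ and $P(\Imp_b \mid \textsc{LO}(x)=i) = \frac{b}{n}\,s^{b-1} + O(1/n^2)$. Since $i=n-1$ is excluded we have $s \geq \frac{1}{n} > 0$, so I can divide the desired inequality $P(\Imp_b) > P(\Imp_a)$ through by the common positive factor $\frac{1}{n}s^{a-1}$ (deferring the error terms for the moment) to obtain the equivalent condition $b\,s^{b-a} > a$, i.e.\ $s^{b-a} > \frac{a}{b}$.

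Because $b>a$ the exponent $b-a$ is a positive constant and $x \mapsto x^{1/(b-a)}$ is increasing on the positive reals, so this is equivalent to $s > (a/b)^{1/(b-a)}$. Substituting $s = 1 - \frac{i+1}{n}$ and solving for $i$ yields $i < n\bigl(1 - (a/b)^{1/(b-a)}\bigr) - 1$. To match the form in the statement I would then apply the identity $(a/b)^{1/(b-a)} = (b/a)^{-1/(b-a)} = (b/a)^{1/(a-b)}$, which rewrites the threshold as $i < n\bigl(1 - (b/a)^{1/(a-b)}\bigr) - 1$, exactly as claimed. Note the monotonicity direction is the intuitively correct one: larger $s$ (equivalently smaller $i$, i.e.\ further from the optimum) favours the operator flipping more bits, since the extra flips can land harmlessly in the long suffix.

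The only delicate point is the treatment of the $O(1/n^2)$ corrections inherited from Lemma~\ref{moplemma}. At, and in a neighbourhood of, the crossover the relevant value of $s$ equals $(a/b)^{1/(b-a)}$, which for $a<b=\Theta(1)$ is a constant strictly inside $(0,1)$; hence both leading terms are $\Theta(1/n)$ and bounded away from the $O(1/n^2)$ error, so the sign of $P(\Imp_b) - P(\Imp_a)$ is indeed governed by the comparison of leading terms. I expect this error-term bookkeeping, rather than the algebra, to be the main obstacle, since the threshold must be delivered as a clean strict inequality; I would either phrase the conclusion for $n$ sufficiently large, or, to retain validity for all $n$, repeat the comparison using the exact success probabilities (whose non-dominant improvement modes were already bounded by $O(1/n^2)$ in the proof of Lemma~\ref{moplemma}) in place of their asymptotic expansions.
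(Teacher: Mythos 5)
Your proposal is correct and follows essentially the same route as the paper, which simply remarks that the lemma follows from Lemma~\ref{moplemma} by comparing the leading terms of the two success probabilities (yielding the special case $i<\frac{n}{m}-1$ for consecutive neighbourhood sizes). Your write-up is in fact more explicit than the paper's, and your observation about the $O(1/n^2)$ error terms near the crossover point is a legitimate refinement the paper glosses over.
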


\CHANGED{Lemma~\ref{moplemma2}} follows from Lemma~\ref{moplemma}. In particular, an operator that flips $m$ bits \CHANGED{(i.e., $m\textsc{BitFlip}$) has a higher probability of success than} an operator that flips $m-1$ bits \CHANGED{(i.e., $m-1\textsc{BitFlip}$) or less} when $i<\frac{n}{m}-1$. It is worth noting that only an operator which flips an odd number of bits can make progress when $\textsc{LO}(x)=n-1$, and the \textsc{1BitFlip} operator has the best success probability at this point \CHANGED{(i.e., $P(\Imp_1\mid\textsc{LO}(x)=i)=\frac{1}{n}$). We additionally note through a simple calculation that the \textsc{1BitFlip} operator has the highest probability of success in the second half of the search space (i.e., when $\frac{n}{2}\leq i<n$).}

In Theorem~\ref{koptbestcase} we present the best-possible expected runtime for any unbiased (1+1) black box algorithm using $\CHANGED{\{1\textsc{BitFlip},\dots,k\textsc{BitFlip}\}}$ as mutation operators for \textsc{LeadingOnes}. Similar results have recently been presented by \cite{Doerr2018Arxiv} and \cite{DoerrWagner2018} for mutation operators that flip bits \CHANGED{without} replacement. The \CHANGED{expected runtimes} are the same up to lower order terms.

\begin{theorem}\label{koptbestcase}
\sloppy The best-possible expected runtime  of any unbiased (1+1) black box algorithm using \CHANGED{$\{1\textsc{BitFlip},\dots,k\textsc{BitFlip}\}$ and $k=\Theta(1)$} for \textsc{LeadingOnes} is
$$E(T_{k,\mathrm{Opt}})=\frac{1}{2}\left(\sum_{i=0}^{\frac{n}{k}-1}\frac{1}{k\cdot\frac{1}{n}\cdot\left(\frac{n-i-1}{n}\right)^{k-1}}+\sum_{m=1}^{k-1}\sum_{i=\frac{n}{m+1}}^{\frac{n}{m}-1}\frac{1}{m\cdot\frac{1}{n}\cdot\left(\frac{n-i-1}{n}\right)^{m-1}}\right)\pm o(n^2).$$
\end{theorem}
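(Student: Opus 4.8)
The plan is to apply Theorem~\ref{BDNTheorem}, which tells us that the expected runtime of any unbiased (1+1) black box algorithm equals $\frac{1}{2}\sum_{i=0}^{n-1}A_i$, where $A_i$ is the expected time to find an improvement from a solution with $\textsc{LO}(x)=i$. Since the theorem decouples the contributions of distinct fitness levels, the overall expected runtime is minimised by independently selecting, at each level $i$, the operator that minimises $A_i$ — equivalently, the operator that maximises the probability of success $P(\Imp_m \mid \textsc{LO}(x)=i)$. The waiting time $A_i$ is then the reciprocal of this maximal success probability. So the entire problem reduces to identifying, for each $i$, which of the $k$ operators is best, and then summing the reciprocals of their success probabilities.

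The key combinatorial input is Lemma~\ref{moplemma} (giving the success probability of $m\textsc{BitFlip}$ up to an $O(1/n^2)$ term) together with Lemma~\ref{moplemma2} and the remark following it, which establish that the $m\textsc{BitFlip}$ operator is the best choice precisely on the fitness range $\frac{n}{m+1} \le i < \frac{n}{m}$ for $1 \le m \le k-1$, and that the most-bits operator $k\textsc{BitFlip}$ dominates on the initial range $0 \le i < \frac{n}{k}$. First I would invoke these lemmata to partition the index set $\{0,\dots,n-1\}$ into the $k$ contiguous blocks corresponding to which operator is optimal: the block $[0,\frac{n}{k})$ where $k\textsc{BitFlip}$ wins, and the blocks $[\frac{n}{m+1},\frac{n}{m})$ for $m=1,\dots,k-1$ where $m\textsc{BitFlip}$ wins (the $m=1$ block being the second half of the search space, consistent with the observation that $1\textsc{BitFlip}$ is best for $i \ge n/2$).

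Having fixed the optimal operator on each block, I would substitute $A_i = 1/P(\Imp_m \mid \textsc{LO}(x)=i)$ using the leading term $m\cdot\frac{1}{n}\cdot\left(\frac{n-i-1}{n}\right)^{m-1}$ from Lemma~\ref{moplemma} into $\frac{1}{2}\sum_i A_i$, which reproduces exactly the double sum in the theorem statement. The main obstacle — and the reason for the $\pm o(n^2)$ error term rather than an exact equality — is controlling the approximation errors carefully. There are two sources: the $O(1/n^2)$ additive error in each success probability from Lemma~\ref{moplemma}, and the rounding of the block boundaries $\frac{n}{m}$ to integers. I would argue that each $A_i$ on a given block is $\Theta(n)$, so its reciprocal differs from the reciprocal of the leading term by a multiplicative $(1+O(1/n))$ factor, contributing $O(1)$ per summand and hence $O(n)=o(n^2)$ over all $n$ levels; the boundary rounding affects only $O(k)=O(1)$ terms, each of size $O(n)$, again contributing $o(n^2)$. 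Finally, I would note that the lower bound holds because Theorem~\ref{BDNTheorem} applies to \emph{every} unbiased (1+1) black box algorithm, so no such algorithm can beat the per-level optimal operator choice, while the matching upper bound is achieved by the algorithm that deterministically applies the best operator at each level; together these pin down the best-possible runtime to the stated expression up to lower order terms.
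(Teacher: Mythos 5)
Your proposal is correct and follows essentially the same route as the paper: invoke Theorem~\ref{BDNTheorem} to reduce the problem to minimising each $A_i$ independently, use Lemma~\ref{moplemma2} to partition the fitness levels into the blocks where each $m\textsc{BitFlip}$ operator is optimal, substitute the leading term of the success probabilities from Lemma~\ref{moplemma}, and absorb the $O(1/n^2)$ corrections (which perturb each $\Theta(n)$ waiting time by only $O(1)$) into the $o(n^2)$ error term. Your explicit treatment of the boundary-rounding error is slightly more detailed than the paper's, but the argument is the same.
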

\begin{proof}
\CHANGED{We refer to an operator as optimal if it yields the highest probability of success based on the current $\textsc{LO}(x)=i$ value.} From Lemma~\ref{moplemma2} the \CHANGED{$m\textsc{BitFlip}$} operator is optimal when $\frac{n}{m+1}\leq i \leq \frac{n}{m}-1$ (unless $m=k$, in which case \CHANGED{$k\textsc{BitFlip}$} is optimal for $0\leq i \leq \frac{n}{k}-1$). 

\CHANGED{The best-possible expected runtime is achieved by an algorithm that always uses the optimal operator. Such an algorithm will apply the \textsc{1BitFlip} operator while $LO(x)\geq\frac{n}{2}$ and the other operators will have $P(\Imp_m\mid \textsc{LO}(x)=i)=\Theta\left(\frac{1}{n}\right)$ in their optimal regions. Hence, any $O\left(\frac{1}{n^2}\right)$ terms from the $P(\Imp_m\mid \textsc{LO}(x)=i)$ terms (for $2\leq m\leq k$) are asymptotically insignificant and their contributions can be grouped into a lower order $o(n^2)$ term in the final runtime.}

Hence, by standard waiting time arguments, coupled with Theorem \ref{BDNTheorem}, we have that the best-possible expected runtime when using $\CHANGED{H=\{1\textsc{BitFlip},\dots,k\textsc{BitFlip}\}}$ is
\begin{align*}
E(T_{k,\mathrm{Opt}})&=\frac{1}{2}\left(\sum_{i=0}^{\frac{n}{k}-1}\frac{1}{k\cdot\frac{1}{n}\cdot\left(\frac{n-i-1}{n}\right)^{k-1}}+\sum_{m=1}^{k-1}\sum_{i=\frac{n}{m+1}}^{\frac{n}{m}-1}\frac{1}{m\cdot\frac{1}{n}\cdot\left(\frac{n-i-1}{n}\right)^{m-1}}\right)\pm o(n^2).
\end{align*}
\end{proof}

In particular, taking limits as $n\rightarrow\infty$, we have $E(T_{1,\mathrm{Opt}})=\frac{1}{2}n^2$, $E(T_{2,\mathrm{Opt}})=\frac{1+\ln(2)}{4}n^2\approx0.42329n^2$,
 $E(T_{3,\mathrm{Opt}})=\left(\frac{1}{3}+\frac{\ln(2)}{2}-\frac{\ln(3)}{4}\right)n^2\approx 0.40525n^2$, $E(T_{5,\mathrm{Opt}})=\left(\frac{3721}{11520}+\frac{\ln(2)}{2}-\frac{\ln(3)}{4}\right)n^2\approx 0.39492n^2$. A \CHANGED{closed form} result for $E(T_{k,\mathrm{Opt}})$ is difficult to find as is the limit for the best-possible expected runtime as $k\rightarrow\infty$. A numerical analysis by \cite{DoerrWagner2018} suggests \CHANGED{an expected runtime} of $E(T_{\infty,\mathrm{Opt}})\approx0.388n^2\pm o(n^2)$.

To emphasise the practical importance of our result, we first prove that the performance of simple mechanisms considered in the literature \citep{CowlingEtAl2000,CowlingEtAl2002,AlanaziLehre2014} \CHANGED{worsens with the increase of the low-level heuristic set size.}

\subsection{`Simple' Mechanisms}\label{Section:SimpleMore}
We will now see how the simple learning mechanisms (Simple Random, Permutation, Greedy and Random Gradient) perform when having to choose between $k$ operators (i.e., $\CHANGED{H=\{1\textsc{BitFlip},\dots,k\textsc{BitFlip}\}}$) \CHANGED{for \textsc{LeadingOnes}}. We will show that incorporating more operators is detrimental to the performance of the simple mechanisms for \textsc{LeadingOnes}. 

We start again by stating the expected runtime of the Simple Random mechanism, and use this as a basis for the other three mechanisms. Recall that the standard Simple Random mechanism chooses each operator uniformly at random in each iteration (i.e., with probability $\frac{1}{k}$ when using $k$ operators). 

\begin{theorem}\label{ThmSRk}
The expected runtime of the Simple Random mechanism using \CHANGED{$H=\{1\textsc{BitFlip},\dots,k\textsc{BitFlip}\}$ and $k=\Theta(1)$ for \textsc{LeadingOnes}} is
$$\frac{k}{2}\cdot\sum_{i=0}^{n-1}\frac{1}{\sum_{m=1}^k m\cdot\frac{1}{n}\cdot\left(\frac{n-i-1}{n}\right)^{m-1}}- o(n^2).$$
\end{theorem}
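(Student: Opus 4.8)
The plan is to reduce everything to Theorem~\ref{BDNTheorem}, exactly as in the two-operator case (Theorem~\ref{pTheorem}), but without evaluating the resulting sum in closed form. The Simple Random mechanism equipped with $H=\{1\textsc{BitFlip},\dots,k\textsc{BitFlip}\}$ is an unbiased $(1+1)$ black box algorithm: each iteration it draws one operator uniformly at random, applies it once to produce a single offspring, and accepts only strict improvements. Hence Theorem~\ref{BDNTheorem} applies and gives $E(T)=\frac12\sum_{i=0}^{n-1}A_i$, reducing the task to computing $A_i$, the expected number of iterations needed to leave fitness level $i$.

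First I would compute the per-iteration improvement probability at level $i$ by the law of total probability over the uniformly random operator choice, combined with Lemma~\ref{moplemma}:
$$\tilde p_i := P(\text{improvement}\mid\textsc{LO}(x)=i)=\frac1k\sum_{m=1}^{k}P(\Imp_m\mid\textsc{LO}(x)=i)=\frac1k\sum_{m=1}^{k}m\cdot\frac1n\left(\frac{n-i-1}{n}\right)^{m-1}+O\!\left(\frac{1}{n^2}\right),$$
where the $k=\Theta(1)$ individual error terms from Lemma~\ref{moplemma} aggregate into a single $O(1/n^2)$ term. A standard waiting-time (geometric) argument then gives $A_i=1/\tilde p_i$, since the tail bits remain uniformly distributed throughout and the per-step improvement probability depends only on $i$.

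The only step that is not purely mechanical is controlling the $O(1/n^2)$ error after inversion and summation, and the key observation making this work is a uniform lower bound on the improvement probability. Writing $p_i:=\frac1k\sum_{m=1}^k m\cdot\frac1n(\frac{n-i-1}{n})^{m-1}$ for the leading part of $\tilde p_i$, the $m=1$ term alone guarantees $p_i\ge\frac{1}{kn}=\Theta(1/n)$ for every $i$: even near the optimum, where all $m\ge2$ operators have vanishing success probability, the $1\textsc{BitFlip}$ operator is still selected with probability $1/k$ and succeeds with probability $1/n$. Consequently $1/\tilde p_i=1/(p_i+O(1/n^2))=1/p_i+O(1)$, so that $A_i=1/p_i+O(1)$ and $\sum_{i=0}^{n-1}A_i=\sum_{i=0}^{n-1}1/p_i+O(n)$.

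It then remains to substitute $1/p_i=k\cdot\left(\sum_{m=1}^k m\cdot\frac1n(\frac{n-i-1}{n})^{m-1}\right)^{-1}$ and note that the leading sum is $\Theta(n^2)$ (each of the $n$ summands is $\Theta(n)$), so the accumulated $O(n)$ error is absorbed into the claimed lower-order term. This yields $E(T)=\frac{k}{2}\sum_{i=0}^{n-1}\left(\sum_{m=1}^k m\cdot\frac1n(\frac{n-i-1}{n})^{m-1}\right)^{-1}\pm o(n^2)$, as stated. I expect the main obstacle to be precisely this error-control step; everything else is a direct application of the \textsc{LeadingOnes} improvement-time framework of Theorem~\ref{BDNTheorem}.
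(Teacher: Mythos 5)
Your proposal is correct and follows essentially the same route as the paper: apply Theorem~\ref{BDNTheorem}, compute the per-iteration improvement probability as the uniform average of the $P(\Imp_m)$ from Lemma~\ref{moplemma}, invert to get $A_i$, and absorb the $O(1/n^2)$ corrections into the lower-order term. Your explicit justification of the error control via the uniform bound $p_i\geq\frac{1}{kn}$ (so that $A_i=1/p_i+O(1)$ and the accumulated error is only $O(n)$) is in fact slightly more careful than the paper's, which simply collects these corrections into a $-o(n)$ term per level.
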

Note that the expected runtime of the Simple Random mechanism increases with $k$. Hence, incorporating more operators is detrimental to its performance. In particular, the expected runtimes when using 1, 2 and 3 operators are \CHANGED{respectively (up to lower order terms)} $\frac{1}{2}n^2$, $\frac{ln(3)}{2}n^2\approx0.54931n^2$ and $\left(\frac{3\sqrt2}{4}\arctan\left(\frac{\sqrt2}{2}\right)\right)n^2\approx0.65281n^2$.
\begin{proof}[Of Theorem~\ref{ThmSRk}]

\CHANGED{Recall from Lemma~\ref{moplemma} that the probability of an improvement when flipping $m$ bits with replacement is}
$$P(\Imp_m\mid \textsc{LO}(x)=i)=m\cdot\frac{1}{n}\cdot\left(\frac{n-i-1}{n}\right)^{m-1}+O\left(\frac{1}{n^2}\right).$$
\CHANGED{The Simple Random mechanism will choose an operator at random in each iteration. Since $|H|=k=\Theta(1)$, the probability that the randomly chosen operator finds an improvement in one iteration is}
\begin{align*}
\CHANGED{\frac{1}{k}\cdot\sum_{m=1}^k P(\Imp_m\mid \textsc{LO}(x)=i)}&=\frac{1}{k}\cdot\sum_{m=1}^k \left(m\cdot\frac{1}{n}\cdot\left(\frac{n-i-1}{n}\right)^{m-1}+O\left(\frac{1}{n^2}\right)\right)\\
&=\frac{1}{kn}+\frac{1}{k}\cdot\sum_{m=2}^k \left(m\cdot\frac{1}{n}\cdot\left(\frac{n-i-1}{n}\right)^{m-1}+O\left(\frac{1}{n^2}\right)\right)\\
&=\Omega\left(\frac{1}{n}\right).
\end{align*}
Hence, using the same nomenclature as in Theorem \ref{BDNTheorem}, the expected time for an improvement is
$A_{i}=\frac{1}{\frac{1}{k}\cdot\sum_{m=1}^k m\cdot\frac{1}{n}\cdot\left(\frac{n-i-1}{n}\right)^{m-1}}-o(n)$, \CHANGED{where the lower order $O\left(\frac{1}{n^2}\right)$ terms from $P(\Imp_m\mid \textsc{LO}(x)=i)$ are collected in the lower order $-o(n)$ term. Thus, we can sum up the expectations as stated in Theorem~\ref{BDNTheorem} to prove the theorem statement:}
\begin{align*}
\frac{1}{2}\sum_{i=0}^{n-1}A_{i}&=\frac{1}{2}\cdot\sum_{i=0}^{n-1}\frac{1}{\frac{1}{k}\cdot\sum_{m=1}^k m\cdot\frac{1}{n}\cdot\left(\frac{n-i-1}{n}\right)^{m-1}}-o(n^2)\\
&=\frac{k}{2}\cdot\sum_{i=0}^{n-1}\frac{1}{\sum_{m=1}^k m\cdot\frac{1}{n}\cdot\left(\frac{n-i-1}{n}\right)^{m-1}}- o(n^2).
\end{align*}
\end{proof}

We now prove the same deteriorating performance for the other simple \CHANGED{HHs}.

\begin{corollary}\label{CorPGRGk}
The expected runtime of the Permutation, Greedy and Random Gradient mechanisms using \CHANGED{$H=\{1\textsc{BitFlip},\dots,k\textsc{BitFlip}\}$ and $k=\Theta(1)$ for \textsc{LeadingOnes}} is 
$$\frac{k}{2}\cdot\sum_{i=0}^{n-1}\frac{1}{\sum_{m=1}^k m\cdot\frac{1}{n}\cdot\left(\frac{n-i-1}{n}\right)^{m-1}}\pm o(n^2).$$
\end{corollary}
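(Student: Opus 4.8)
The plan is to show that each of the three mechanisms deviates from the Simple Random mechanism of Theorem~\ref{ThmSRk} by only an additive $o(n^2)$ term, so that each inherits the same leading constant; this is the $k$-operator analogue of the two-operator Corollary~\ref{CorPGRG}. The whole argument rests on two quantitative facts that follow from Lemma~\ref{moplemma}: since $k=\Theta(1)$, every operator has success probability $P(\Imp_m\mid\textsc{LO}(x)=i)=O(1/n)$, while the \textsc{1BitFlip} operator alone guarantees $\sum_{m=1}^{k}P(\Imp_m\mid\textsc{LO}(x)=i)\geq \tfrac{1}{n}$. Writing $P_m := P(\Imp_m\mid\textsc{LO}(x)=i)$ for brevity, this means $\sum_{m=1}^{k}P_m=\Theta(1/n)$ uniformly in $i$, and the probability that two distinct operators both improve at the same fitness level is $O(1/n^2)$.

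For the Permutation and Greedy mechanisms I would group evaluations into rounds of $k$ applications (one full permutation, respectively one parallel batch), and let $p_i$ denote the probability that at least one improvement is constructed during a round started from a solution with $\textsc{LO}(x)=i$. Expanding $p_i = 1-\prod_{m=1}^{k}(1-P_m)=\sum_{m=1}^{k}P_m - O(1/n^2)$ shows that the expected cost per improvement, $k/p_i$, differs from the Simple Random waiting time $A_i=k/\sum_{m=1}^{k}P_m$ by only $k\cdot\frac{\sum_m P_m-p_i}{p_i\sum_m P_m}=O(1)$, since the numerator is $O(1/n^2)$ and the denominator is $\Theta(1/n^2)$. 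Summed over the at most $n$ improvements needed to reach the optimum, this accumulates to $O(n)=o(n^2)$. The remaining subtlety — that a round may contain two or more improvements, letting these mechanisms skip ahead — is handled exactly as in Corollary~\ref{CorPGRG}: such a round occurs with probability $O(1/n^2)$, hence $O(1)$ times in expectation over the whole run, and each saves at most the maximal improving waiting time $\max_i A_i=O(n)$, contributing a further $O(n)$ term.

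For the Random Gradient mechanism I would instead compare against Simple Random iteration by iteration. The two make identical uniform random operator choices except in the iterations immediately following a successful mutation, where Random Gradient is forced to re-use the previous operator. Since a repeated operator succeeds with probability at most $\max_{m,i}P_m=O(1/n)$ and at most $n$ improvements occur throughout the run, the expected number of successful repetitions is $n\cdot O(1/n)=O(1)$, which bounds the advantage Random Gradient can gain from exploitation; and the number of iterations in which the operator choice differs is at most the number of improvements, i.e.\ at most $n$, which bounds the disadvantage. Hence the expected runtime of Random Gradient lies within $O(n)=o(n^2)$ of that of Simple Random in both directions, giving the $\pm o(n^2)$ claimed.

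The main obstacle is the bookkeeping around simultaneous improvements rather than any single hard estimate. For the Greedy mechanism independence makes the product expansion of $p_i$ immediate, but for the Permutation mechanism the operators act sequentially on a changing solution, so I would need to argue that the probability of two or more improvements within a round is still $O(1/n^2)$ despite the dependence; this follows by bounding, over the $\binom{k}{2}=O(1)$ ordered pairs, the probability that an earlier operator improves and a later one improves the updated solution, each factor being $O(1/n)$. The other point requiring care is that the denominators $p_i\sum_m P_m$ remain $\Theta(1/n^2)$ uniformly in $i$, which is precisely what the $\sum_m P_m\geq 1/n$ guarantee from \textsc{1BitFlip} provides — including at $\textsc{LO}(x)=n-1$, where only odd-flip operators can progress but \textsc{1BitFlip} still contributes $\tfrac{1}{n}$ (cf.\ the remark after Lemma~\ref{moplemma2}). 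Once these are in place, combining the per-mechanism deviations with Theorem~\ref{ThmSRk} yields the corollary.
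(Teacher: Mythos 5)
Your proposal is correct and follows essentially the same route as the paper's proof: comparing each mechanism's per-improvement waiting time to the Simple Random waiting time $A_i$ from Theorem~\ref{ThmSRk}, bounding the per-level discrepancy by $O(1)$ via the $O(1/n^2)$ multiple-improvement events, and handling Random Gradient by counting the $O(1)$ expected successful repetitions. Your explicit treatment of the sequential dependence in the Permutation rounds is a small point the paper glosses over, but it changes nothing substantive.
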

\begin{proof}

\CHANGED{Given a bit-string $x$ with $\textsc{LO}(x)=i$,} let $p_i$ be the probability that \CHANGED{at least} one fitness improvement is constructed within $k$ fitness function evaluations. For the Greedy and Permutation mechanisms we have,
\begin{align*}
p_i&=P(\Imp_1+(1-P(\Imp_1))\cdot P(\Imp_2)+\dots+\left(\prod_{j=1}^{k-1} (1-P(\Imp_j))\right)\cdot \CHANGED{P(\Imp_k)}\\
&=\sum_{m=1}^k\left({\left(\prod_{j=1}^{m-1}\left(1-j\cdot\frac{1}{n}\cdot\left(\frac{n-i-1}{n}\right)^{j-1}\right)\right)}\cdot m\cdot\frac{1}{n}\cdot\left(\frac{n-i-1}{n}\right)^{m-1}\right)\\
&=\sum_{m=1}^k\left((1-o(1))\cdot m\cdot\frac{1}{n}\cdot\left(\frac{n-i-1}{n}\right)^{m-1}\right)\CHANGED{=\sum_{m=1}^k\left((1-o(1))\cdot P(\Imp_m)\right)},
\end{align*}
\CHANGED{where we denoted $P(\Imp_m\mid \textsc{LO}(x)=i)$ as $P(\Imp_m)$ for brevity.}

\begin{sloppypar}
To move from the second equation to the third, we \CHANGED{used that $P(\Imp_m)= O\left(\frac{1}{n}\right)$ for all $1\leq m\leq k=\Theta(1)$} and that  
${\prod_{j=1}^{m-1}(1-P(\Imp_j))=\left(1-O\left(\frac{1}{n}\right)\right)^{m}=1-o(1)}$. To \CHANGED{derive an} upper bound on the expected optimisation time we note that the difference between $\frac{k}{p_i}$ (i.e., the expected waiting time for an improvement to be
constructed), and the $A_{i}$ expected waiting times
used to prove Theorem~\ref{ThmSRk} is at most constant \CHANGED{(for $k=\Theta(1)$)}. Thus, the
difference between the expected runtimes of these mechanisms and Simple Random is limited 
to lower order terms. With probability at most $\frac{k(k-1)}{n^2}$ at least two 
mutations of the $k$ considered ones are improvements (two parallel successes of the Greedy mechanism, or two mutations performed sequentially by the Permutation mechanism). Since this occurs at most a
sublinear number of times in expectation, and the maximum expected waiting time for any
improving step is $O(n)$, the lower bound differs from the upper bound by at
most an $o(n^2)$ term. Thus, the expected runtime
of these mechanisms is also ${\frac{k}{2}\sum_{i=0}^{n-1}\frac{1}{\sum_{m=1}^k m\cdot\frac{1}{n}\cdot\left(\frac{n-i-1}{n}\right)^{m-1}}\pm o(n^2).}$
\end{sloppypar}

For the Random Gradient mechanism, we note that the probability that an operator \CHANGED{that is applied} following a success is successful again is
at most $\frac{k}{n}$ (an upper bound on the success probability of the \CHANGED{$k\textsc{BitFlip}$} operator by Lemma~\ref{moplemma}). Since there are at most $n$ improvements to be made throughout
the search space, the expected number of \CHANGED{repetitions} which produce an improvement
is at most $k= \Theta(1)$.
If the chosen operator is not successful, the Random Gradient mechanism behaves
identically to the Simple Random mechanism. Its expected runtime is therefore at least
the expected runtime of the Simple Random mechanism less an $o(n^2)$ term, and at most the expected runtime
of the Simple Random mechanism plus $n$ because there are at most $n$ iterations
where the mechanisms differ in operator selection. Thus, its expected runtime
is also $\frac{k}{2}\sum_{i=0}^{n-1}\frac{1}{\sum_{m=1}^k m\cdot\frac{1}{n}\cdot\left(\frac{n-i-1}{n}\right)^{m-1}}\pm o(n^2).$
\end{proof}

\subsection{The Generalised Random Gradient Hyper-heuristic Has the Best Possible Performance Achievable}
In this subsection, we present a rigorous theoretical analysis of the expected runtime of \CHANGED{GRG using $k=\Theta(1)$ operators for \textsc{LeadingOnes}}. The following general theorem \CHANGED{provides an upper bound on} the expected runtime of \CHANGED{GRG} using $k=\Theta(1)$ low-level stochastic mutation heuristics of different neighbourhood size for any value of $\tau$ smaller than $\frac{1}{k}n\ln n$. The theorem allows us to identify values of $\tau$ for which the expected runtime of \CHANGED{GRG} is the optimal expected runtime that may be achieved by using $k$ operators. This result will be highlighted in Corollary~\ref{corkoptbest} for values of $\tau=\omega(n)$. \CHANGED{The main result of this section has been presented in Theorem~\ref{besttheorem}, which shows that increasing the number of operators (i.e., $|H|$) that GRG has access to leads to faster expected runtimes and, in particular, expected runtimes that are \CHANGED{strictly smaller than that of} any unbiased (1+1) black box algorithm using any strict subset of the same set of operators.}

\begin{figure}[t]
\centering
	\begin{tikzpicture}
		\begin{axis}[
			xlabel=$\tau$ ($/n$), xmin=0, ymin=0.35, ymax=0.6,
			ylabel=Runtime $(/n^2)$,
			scaled ticks=false,tick label style={/pgf/number format/fixed},
			mark size=2pt,ymajorgrids=true,
			y tick label style={
				/pgf/number format/.cd,fixed,fixed zerofill,precision=2,
				/tikz/.cd
			},
			width=0.98\linewidth,
			height=2.3in,
			legend pos=north east,legend columns=1,xmax=50
		]
		\addplot[blue,mark=o]         file {Experiments/Fig1-3/GRG2OP.dat};
		\addplot[red,mark=*]          file {Experiments/Fig1-3/GRG3OP.dat};
		\addplot[green,mark=square]        file {Experiments/Fig1-3/GRG4OP.dat};
		\addplot[orange,mark=diamond]        file {Experiments/Fig1-3/GRG5OP.dat};
		
		\draw [thick, draw=brown]   (axis cs: 0,0.5) -- (axis cs: 50,0.5) -- node[above,align=right] {RLS} ++(-25,0);
		
		\legend{$k=2$, $k=3$, $k=4$, $k=5$}
		\end{axis}
	\end{tikzpicture}
	\caption{The leading constants in the theoretical upper bounds on the \CHANGED{expected} number of fitness function evaluations required by the $k$-operator Generalised Random Gradient hyper-heuristic to find the \textsc{LeadingOnes} optimum (we have used a value of $w=100,\!000$ in Theorem~\ref{kopGRG}).}
	\label{pic:upperbounds2}
\end{figure}
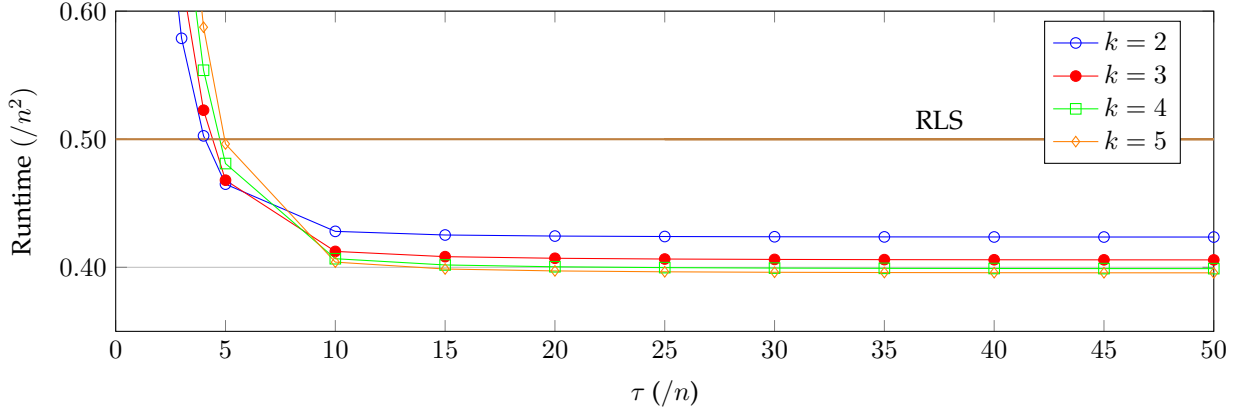

Similarly to Theorem~\ref{thm:LO-GenRandomGradient}, the proof technique partitions the search space into $w$ \CHANGED{stages}, each representing an equal range of fitness values. \CHANGED{As before, the expected runtime of the algorithm does not depend on $w$. However, $w$ does affect the upper bound we obtain. The greater the number of \CHANGED{stages}, the tighter the upper bound the theorem provides as long as $w=o\left(\frac{n}{\exp(k\tau/n)}\right)$.}

\begin{theorem}\label{kopGRG}
\sloppy \CHANGED{The expected runtime of the Generalised Random Gradient hyper-heuristic using $H=\{1\textsc{BitFlip},\dots,k\textsc{BitFlip}\}$ and $k=\Theta(1)$, for \textsc{LeadingOnes}, with $\tau\leq\left(\frac{1}{k}-\varepsilon\right)n\ln(n)$, for some constant $\CHANGED{0<\varepsilon<\frac{1}{k}}$, is at most}
$$\frac{n^2}{2}\cdot\sum_{j=1}^w \left(\frac{\left(k\cdot\frac{\tau}{n}+\left[\sum_{m=1}^k e^{m\frac{\tau}{n}\left(1-\frac{j}{w}\right)^{m-1}}\cdot M_m\left(j,w\right)\right]\right)}{w\cdot\left(\left[\sum_{m=1}^ke^{m\frac{\tau}{n}\left(1-\frac{j}{w}\right)^{m-1}}\right]-k\right)}\right)\pm o(n^2)$$
where for $m\geq2$, \[ M_m(j,w) :=
\begin{cases}
\frac{\tau}{n} & \quad \text{if } j=w \text{ or }\frac{1}{m(1-j/w)^{m-1}}>\frac{\tau}{n}\\
\frac{1}{m(1-j/w)^{m-1}} & \quad \text{otherwise}\\
\end{cases}
\]
and $M_1(j,w)=\min\!\left\{\frac{\tau}{n},1\right\}$, with \CHANGED{$w\in\omega\left(\frac{\tau}{n}\right)\cap o\left(\frac{n}{\exp(k\tau/n)}\right)$}.
\end{theorem}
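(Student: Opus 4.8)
The plan is to follow the decomposition used in the proof of Theorem~\ref{thm:LO-GenRandomGradient}, generalised from two operators to $k=\Theta(1)$ operators. First I would partition the optimisation into $w$ stages by \textsc{LeadingOnes} value, so that during stage $j$ the current fitness lies in $[(j-1)n/w,\,jn/w)$, and bound the total runtime by $E(T)\leq\sum_{j=1}^{w}(E(T_j)+E(S_{j+1}))$, where $T_j$ is the time spent in stage $j$ starting from a fresh random operator choice and $S_{j+1}$ counts the carry-over iterations spent in stage $j+1$ under the operator selected during stage $j$. As before I would write $T_j=\sum_{k'=1}^{N_j}X_{j,k'}$ and apply Wald's equation (Theorem~\ref{WaldEq}) to get $E(T_j)\leq E(N_j)E(X_j)$, where $N_j$ counts the random operator selections in stage $j$ and $E(X_j)$ upper-bounds the length of a single selected operator's run.

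The core of the argument is estimating, for each operator $m\textsc{BitFlip}$, the probability $P(F_m)=(1-P(\Imp_m))^{\tau}$ of failing to improve within $\tau$ iterations. Using Lemma~\ref{moplemma} and that within stage $j$ the ratio $(n-i-1)/n$ lies in $[1-j/w,\,1-(j-1)/w)$, I would show $P(F_m)$ lies within $[\,e^{-m(\tau/n)(1-(j-1)/w)^{m-1}}-\tfrac1n,\; e^{-m(\tau/n)(1-j/w)^{m-1}}\,]$, mirroring the two-operator bounds and absorbing the $O(1/n^2)$ term of Lemma~\ref{moplemma} into $O(1/n)$ corrections. To bound $E(N_j)$ I would model the number of improvements a selected operator produces before failing as a geometric random variable of mean $1/P(F_m)-1$, so that the expected number of improvements per selection is $E(D_j)=\frac1k\sum_{m=1}^k(1/P(F_m)-1)\geq\frac1k\big(\sum_{m=1}^k e^{m(\tau/n)(1-j/w)^{m-1}}-k\big)$ (inserting the upper bounds on $P(F_m)$), and then apply the Additive Drift Theorem (Theorem~\ref{ADTheorem}) with the $\leq n/(2w)$ expected mutation-improvements needed to clear a stage, obtaining $E(N_j)\leq \frac{nk}{2w(\sum_m e^{m(\tau/n)(1-j/w)^{m-1}}-k)}$. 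Symmetrically, a second application of Wald's equation yields $E(X_j\mid m\textsc{BitFlip})\leq\tau+(1/P(F_m)-1)\min\{\tau,E(W_m)\}$ with waiting time $E(W_m)\leq n/(m(1-j/w)^{m-1})$; averaging over the $k$ operators and inserting the lower bounds on $P(F_m)$ gives $E(X_j)\leq\frac1k(k\tau+n\sum_m e^{m(\tau/n)(1-j/w)^{m-1}}M_m(j,w))$, which is where the caps $M_m(j,w)=\min\{\tau/n,\,1/(m(1-j/w)^{m-1})\}$ and the $j=w$ boundary case arise. Multiplying the two bounds collapses the factor $k$ and reproduces the summand in the theorem statement.

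Finally I would discharge the carry-over terms: since $E(S_{j+1})<2E(X_j)=O(n\exp(k\tau/n))$, the constraint $w=o(n/\exp(k\tau/n))$ forces $\sum_{j}E(S_{j+1})=o(n^2)$, while the hypothesis $\tau\leq(1/k-\varepsilon)n\ln n$ guarantees $\exp(k\tau/n)\leq n^{1-k\varepsilon}$, so that a valid $w\in\omega(\tau/n)\cap o(n/\exp(k\tau/n))$ exists. I expect the main obstacle to be the uniform control of the lower-order error across all $k$ operators and all $w$ stages simultaneously: the $O(1/n^2)$ contributions of Lemma~\ref{moplemma}, the $-\tfrac1n$ slack in the exponential failure-probability bounds, and the replacement of the fitness-dependent factor $(1-i/n)^{m-1}$ by its stage-endpoint value $(1-j/w)^{m-1}$ each inject errors that must be shown to aggregate to only $\pm o(n^2)$. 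Keeping these simultaneously negligible is precisely what the $\tau$ and $w$ constraints are engineered to ensure, and verifying it for general $k$ rather than $k=2$ is the delicate bookkeeping at the heart of the proof.
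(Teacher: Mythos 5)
Your proposal follows essentially the same route as the paper's proof: the same $w$-stage partition, the same double application of Wald's equation around the Additive Drift Theorem, the same geometric-variable modelling of per-selection improvements with the matching upper and lower bounds on $P(F_m)$, and the same treatment of the carry-over terms $S_{j+1}$ via the constraint $w=o\left(n/\exp(k\tau/n)\right)$. The intermediate bounds you state for $E(N_j)$ and $E(X_j)$ coincide with those in the paper, and your closing remarks on absorbing the $O(1/n^2)$, $-\tfrac1n$, and stage-endpoint errors into $\pm o(n^2)$ (using $w=\omega(\tau/n)$) are exactly the bookkeeping the paper carries out.
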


Theorem~\ref{kopGRG} presents \CHANGED{upper bounds on} the expected runtime of \CHANGED{GRG}, with access to \CHANGED{an arbitrary number} $k=\Theta(1)$ of low-level stochastic mutation heuristics of different neighbourhood sizes for \textsc{LeadingOnes} \CHANGED{using any learning period} $\tau\leq\left(\frac{1}{k}-\varepsilon\right)n\ln(n)$, for some constant $\CHANGED{0<\varepsilon<\frac{1}{k}}$. \CHANGED{The bound on the expected runtime improves as $\tau$ increases up the the limit provided by the theorem.} \CHANGED{Figure \ref{pic:upperbounds2} shows the relationship between the duration of the learning period and theoretical upper bounds for different $k$-operator variants of GRG from Theorem \ref{kopGRG}. The upper bounds found by GRG with more operators are better than the ones with fewer operators, as implied by Corollary~\ref{koptbestcase}. In particular, the upper bounds for GRG with $k$-operators outperforms the best possible expected runtime for any \CHANGED{algorithm that has fewer} than $k$ operators, as implied by Theorem~\ref{besttheorem}. We depicted this result explicitly for $k=3$ and $k=5$ in Figure \ref{pic:upperbounds2v2}.} \CHANGED{When $\tau=\omega(n)$ and $\tau\leq\left(\frac{1}{k}-\varepsilon\right)$ for some constant $0<\varepsilon<\frac{1}{k}$, GRG is able to find the best possible runtime achievable with the low-level heuristics, up to lower order terms, as will be presented in Corollary~\ref{corkoptbest}.}

\CHANGED{\cite{DoerrWagner2018} calculated that the best expected runtime for any unbiased (1+1) black box algorithm using such mutation operators for \textsc{LeadingOnes} is (up to lower order terms) $\approx0.388n^2$. Corollary~\ref{corkoptbest} states that GRG can match this theoretical performance limit up to one decimal place with 4 low-level heuristics, up to two decimal places with 11 low-level heuristics and up to three decimal places with 18 low-level heuristics. In Table~\ref{Table:TauExamples} we present some of the most interesting parameter combinations of $k$ and $\tau$.}

\begin{table}[t!]
\centering
\renewcommand{\arraystretch}{1.5}
\begin{tabular}{l||cccc}
$k$ & $\tau=5n$ & $\tau=50n$ & $\tau=100n$ & $\tau=\frac{1}{10}n\ln(n)$  \\
\hline\hline
2         & $0.46493$ & $0.42363$      & $0.42329$ & $0.42329$    \\\hline
3         & $0.46802$ & $0.40579$      & $0.40525$ & $0.40525$    \\\hline
4         & $0.48102$ & $0.39897$      & $0.39830$ & $0.39830$    \\\hline
5         & $0.49630$ & $0.39568$      & $0.39492$ & $0.39492$    \\\hline\hline
11 & $3.090\times10^{23}$ & $8785.8$ & $0.38987$ & $0.38987$\\\hline\hline
18 & $1.886\times10^{44}$ & $5.363\times10^{24}$ & $1034.8$ & $0.38899$
\end{tabular}
\caption{\label{Table:TauExamples}The upper bounds in the leading constants found from various parameter combinations of the number of operators $k$ and the learning period $\tau$ in Theorem~\ref{kopGRG} with $w=100,\!000$. \CHANGED{The values reported in the fourth column are the best possible for each $k$. As the number of operators increases, larger values of $\tau$ are required for good performance ($\tau=\omega(n)$ is necessary to achieve optimal performance). Note that while Theorem~\ref{kopGRG} allows us to show optimal leading constants for $\tau=\omega(n)$, the upper bounds it provides for large values of $k$ and small values of $\tau$ are far from tight due to the generality of the theorem.}}
\end{table}

\begin{proof}[Of Theorem~\ref{kopGRG}]

We follow the proof idea for \CHANGED{GRG with} two operators (Theorem \ref{thm:LO-GenRandomGradient}). That is, we partition the optimisation process into $w$ \CHANGED{stages} based on the
value of the \textsc{LeadingOnes} fitness function: during \CHANGED{stage} $j$, the
\textsc{LeadingOnes} value of the current solution is at least $\frac{(j-1)n}{w}$ and less than
$\frac{j \cdot n}{w}$. After all $w$ \CHANGED{stages} have been completed, the global optimum with a \textsc{LeadingOnes} value of $n$ will have been found.

We \CHANGED{derive an} upper bound on the expectation of the runtime $T$ by summing up the expected values of $T_j$, the expected time
between the first random operator choice in each stage of the optimisation process and the moment when the necessary fitness value to enter the following stage is reached (that is, the time before the first random operator choice occurs in the next \CHANGED{stage}). As our analysis of $E(T_j)$ requires the \CHANGED{stage} to start with a random
choice of the mutation operator, we bound $E(T) \leq \sum_{j=1}^{w} \left(E(T_j) + E(S_{j+1}) \right)$
where $S_{j+1}$ denotes the expected number of iterations
between the first solution in \CHANGED{stage} $j+1$ \CHANGED{that is} constructed and the first random operator choice occurring in that \CHANGED{stage}. We will later show that with proper
parameter choices, the contribution of the $S_{j+1}$ terms can be bounded by
$o(n^2)$, therefore they do not affect the leading constant in the overall bound.

Let us now consider $T_j$, the number of iterations the \CHANGED{HH} spends in \CHANGED{stage} $j$. Recall that
a mutation operator is selected uniformly at random and is allowed to run until it fails to produce an improvement within $\tau$ consecutive iterations. Let $N_j$ be a random variable denoting the number
of random operator choices that the \CHANGED{HH} performs during \CHANGED{stage} $j$, and let $X_{j,1}, \ldots,
X_{j,N_j}$ be the number of iterations where the \CHANGED{HH} runs each chosen operator for. We note that $T_j = \sum_{k=1}^{N_j} X_{j,k}$ is a sum of a number
of non-negative variables, and we will later show that $E(N_j)$ is bounded. Thus,
$E(T_j)$ can be bounded by applying Wald's equation (Theorem~\ref{WaldEq}):
\begin{equation} \label{keq:LO-GRD-Tj}
	E(T_j) = E\left(\textstyle{\sum_{k=1}^{N_j}} X_{j,k}\right) \leq E(N_j) E(X_{j}),
\end{equation}
where $E(X_{j})$ denotes an upper bound
on all $E(X_{j,k})$ in \CHANGED{stage} $j$.

To bound $E(N_j)$, the expected number of times the random operator selection is performed during \CHANGED{stage} $j$, we
\CHANGED{will provide a} lower bound on the expected number of improvements found following the operator selection and apply the
Additive Drift Theorem \CHANGED{(Theorem~\ref{ADTheorem})} to find the expected number of random operator selections that occur before a
sufficient number of improvements \CHANGED{for entering} the next \CHANGED{stage} have been found.

\CHANGED{We will again denote $P(\Imp_m\mid \textsc{LO}(x)=i)$ as $P(\Imp_m)$ throughout this proof, for brevity.} Recall from Lemma~\ref{moplemma} that the improvement probability for a \CHANGED{$m\textsc{BitFlip}$} operator \CHANGED{at $\textsc{LO}(x)=i$ (with $m=\Theta(1)$)} is
$$P(\Imp_m)=m\cdot\frac{1}{n}\cdot\left(\frac{n-i-1}{n}\right)^{m-1}+O\left(\frac{1}{n^2}\right).$$

Since either the $m\cdot\frac{1}{n}\cdot\left(\frac{n-i-1}{n}\right)^{m-1}$ terms will asymptotically dominate \CHANGED{the lower order term} or the  $P(\Imp_1)=\frac{1}{n}$ term will asymptotically dominate \CHANGED{(due to the presence of the $1\textsc{BitFlip}$ operator within the set $H$)}, the lower order \CHANGED{$O\left(\frac{1}{n^2}\right)$} terms will be insignificant. In particular, we can combine the \CHANGED{contributions} of the $O\left(\frac{1}{n^2}\right)$ terms into \CHANGED{an $o(n^2)$ term} in the final bound on the expected runtime and omit them from the $P(\Imp_m)$ terms in the calculations.

Let $F_m$ denote the event that the \CHANGED{$m\textsc{BitFlip}$} operator fails to find an improvement during $\tau$ iterations. This will occur with probability $P(F_m)=\left(1-P(\Imp_m)\right)^\tau$. Within \CHANGED{stage} $j$, the ancestor individual has at most $i=\frac{jn}{w}-1$ and at least $i=\frac{(j-1)n}{w}$ leading 1-bits. Hence,
\begin{align*}
P(F_m)&\leq \left(1-m\cdot\frac{1}{n}\left(\frac{n-\left(\frac{jn}{w}-1\right)-1}{n}\right)^{m-1}\right)^{\tau}\leq\left(1-m\cdot\frac{1}{n}\left(1-\frac{j}{w}\right)^{m-1}\right)^{\tau}\\
&\leq e^{-\frac{m\tau}{n}\left(1-\frac{j}{w}\right)^{m-1}},
\end{align*}
and
\begin{align*}
P(F_m)&\geq \left(1-m\cdot\frac{1}{n}\left(\frac{n-\left(\frac{(j-1)n}{w}\right)-1}{n}\right)^{m-1}\right)^{\tau}\geq\left(1-m\cdot\frac{1}{n}\left(1-\frac{j-1}{w}\right)^{m-1}\right)^{\tau}\\
&\geq e^{-\frac{m\tau}{n}\left(1-\frac{j-1}{w}\right)^{m-1}}-\frac{1}{n}.
\end{align*}

A geometric distribution with parameter $p = P(F_m)$ can be used to model the number
of improvements that the \CHANGED{$m\textsc{BitFlip}$} operator finds prior to failing to find
an improvement for $\tau$ iterations. The expectation of this distribution is $\frac{1-p}{p} = \frac{1}{p} - 1$. We can model these events with a geometric random variable since GRG immediately restarts a period of $\tau$ iterations with the chosen heuristic when a successful mutation occurs (i.e., it sets $c_t=0$ \CHANGED{in Algorithm~\ref{alg:GRG}}). Hence, there are no periods of $\tau$ consecutive unsuccessful mutations \CHANGED{apart from the period when} the operator finally fails.
Combined over all $k$ operators, the expected number of improvements $D_j$
produced following a single random operator selection during \CHANGED{stage} $j$ is greater than:
$$E(D_j)\geq \sum_{m=1}^k \frac{1}{k}\left(\frac{1}{P(F_m)}-1\right)\geq \frac{1}{k}\left(\sum_{m=1}^k e^{\frac{m\tau}{n}\left(1-\frac{j}{w}\right)^{m-1}}\right)-1,$$
\CHANGED{where the last inequality is obtained by inserting} the upper bounds for $P(F_m)$. We use this expectation as the drift in the Additive Drift Theorem to provide an upper bound for $E(N_j)$. Recall that for each \CHANGED{stage} at most $\frac{n}{w}$ fitness values \CHANGED{have to be crossed}. Since bits beyond the leading \CHANGED{1-bits} prefix and the first zero bit
remain uniformly distributed, in expectation $\frac{n}{2w}$ improvements by mutation are required. If each step of a random process contributes $E(D_j)$ expected improvements by mutation, the expected number of steps required to complete \CHANGED{stage} $j$ is at most
\begin{align} \label{keq:grd-stage-drift}
	E(N_j) \leq \frac{\frac{n}{2w}}{E(D_j)} \leq \frac{k}{2}\cdot\frac{n}{w}\cdot\frac{1}{\left(\sum_{m=1}^k e^{\frac{m\tau}{n}\left(1-\frac{j}{w}\right)^{m-1}}\right)-k},
\end{align}
by the Additive Drift Theorem.

\NEWCHANGED{We apply Wald's equation to bound $E(X_{j})$, the expected number of iterations before a selected mutation operator fails to produce an improvement for $\tau$ iterations: let $S_m$ be the number of improvements found by the operator before it fails, and $W_{m,1}, W_{m,2}, \ldots, W_{m,S_m}$ be the number of iterations it took to find each of those improvements. Then, once selected, the $m\textsc{BitFlip}$
operator runs for
$$E(X_{j} \mid m\textsc{BitFlip}) = \tau + \sum_{s=1}^{S_m} E(W_{m,s}) = E(S_m)E(W_{m,1} \mid S_m \geq 1),$$
where $\tau$ accounts for the iterations immediately before the failure and the sum counts the expected number of iterations
preceding each constructed improvement. Recall that $E(S_m) = \frac{1}{P(F_m)} - 1$ by the properties of the
geometric distribution and observe that $E(W_{m,1} \mid S_m \geq 1) = E(W_{m,1} \mid W_{m,1} \leq \tau) \leq
\min\{\tau, E(W_{m,1})\}$. Using a waiting time argument (i.e., considering the expectation of a geometric random variable with success probability  $p=P(\Imp_m)$) for $E(W_{m,1})$, we get,}
\begin{align*}
	E(X_{j}\mid \CHANGED{m\textsc{BitFlip}}) &\leq \tau + \left(\frac{1}{P(F_m)} - 1\right)\cdot\min\!\left\{\tau, \frac{1}{m\cdot\frac{1}{n}\cdot\left(\frac{n-\left(\frac{jn}{w}-1\right)-1}{n}\right)^{m-1}}\right\}\\
	&= \tau+\left(\frac{1}{e^{-\frac{m\tau}{n}\left(1-\frac{j-1}{w}\right)^{m-1}}-\frac{1}{n}}-1\right)\cdot\min\!\left\{\tau,\frac{n}{m\left(1-\frac{j}{w}\right)^{m-1}}\right\}\\
	&\leq \tau+e^{\frac{m\tau}{n}\left(1-\frac{j}{w}\right)^{m-1}}\cdot\min\!\left\{\tau,\frac{n}{m\left(1-\frac{j}{w}\right)^{m-1}}\right\}+O(1),
\end{align*}
\CHANGED{with $e^{\frac{m\tau}{n}\left(1-\frac{j-1}{w}\right)^{m-1}}=(1+o(1))\cdot e^{\frac{m\tau}{n}\left(1-\frac{j}{w}\right)^{m-1}}$ for $m=\Theta(1)$ and $w=\omega\left(\frac{\tau}{n}\right)$.} Combining these conditional expectations yields,
\begin{align}\label{keq:grd-op-phase-length}
\nonumber E(X_j)&\leq\sum_{m=1}^k \frac{1}{k}\cdot E(X_{j}\mid \NEWCHANGED{m\textsc{BitFlip}})\\
\nonumber &\leq \tau+\sum_{m=1}^k \left[\frac{1}{k}\cdot e^{\frac{m\tau}{n}\left(1-\frac{j}{w}\right)^{m-1}}\cdot\min\!\left\{\tau,\frac{n}{m\left(1-\frac{j}{w}\right)^{m-1}}\right\} \right]+O(k)\\
\nonumber &\leq n\left( \frac{\tau}{n}+\sum_{m=1}^k \left[\frac{1}{k}\cdot e^{\frac{m\tau}{n}\left(1-\frac{j}{w}\right)^{m-1}}\cdot\min\!\left\{\frac{\tau}{n},\frac{1}{m\left(1-\frac{j}{w}\right)^{m-1}}\right\} \right]\right)+O(k)\\
\nonumber &\leq \frac{n}{k}\left( \frac{k\tau}{n}+\sum_{m=1}^k \left[e^{\frac{m\tau}{n}\left(1-\frac{j}{w}\right)^{m-1}}\cdot\min\!\left\{\frac{\tau}{n},\frac{1}{m\left(1-\frac{j}{w}\right)^{m-1}}\right\} \right]\right)+O(k).\\
\end{align}

At $j=w$, we have that $\min\!\left\{\frac{\tau}{n},\frac{1}{m\left(1-\frac{j}{w}\right)^{m-1}}\right\}$ is undefined. However, we know that the chosen operator would only be applied for a maximum of $\tau$ steps. Hence, we define $M_1(j,w)=\min\!\left\{\frac{\tau}{n},1\right\}$ and for $m\geq2$, \[ M_m(j,w) :=
\begin{cases}
\frac{\tau}{n} & \quad \text{if } j=w \text{ or }\frac{1}{m\left(1-\frac{j}{w}\right)^{m-1}}>\frac{\tau}{n}\\
\frac{1}{m\left(1-\frac{j}{w}\right)^{m-1}} & \quad \text{otherwise.}\\
\end{cases}
\]
Finally, we return to bounding the overall expected runtime of the
\CHANGED{HH}. Recall that $S_j$ denotes the number of iterations that the algorithm spends in \CHANGED{stage} $j$ while using the random operator chosen during \CHANGED{stage} $j-1$ and since $E(S_j) \leq \max\left(E(X_{j} \mid \CHANGED{m\textsc{BitFlip}}\right)_{m=1,\dots, k} < k E(X_{j})= O\left(n\cdot\exp\left(\frac{k\tau}{n}\right)\right)$ and $w=o\left(\frac{n}{\exp\left(\frac{k\tau}{n}\right)}\right)$, \CHANGED{it follows that} $\sum_{j=1}^{w} E(S_{j+1}) = o(n^2)$.

Substituting the bounds provided in Eq.~\ref{keq:grd-stage-drift} and Eq.~\ref{keq:grd-op-phase-length} into Eq.~\ref{keq:LO-GRD-Tj} yields the theorem statement:
\begin{align*}
	E(T) & \leq \sum_{j=1}^{w} \left(E(T_j) + E(S_{j+1}) \right)
	\leq \left(\sum_{j=1}^{w} E(N_j) E(X_{j})\right) + o(n^2) \\
	& \leq o(n^2) + \frac{n^2}{2} \times \sum_{j=1}^{w} \frac{1}{w}\cdot \frac{\frac{k\tau}{n}+\sum_{m=1}^k \left[e^{\frac{m\tau}{n}\left(1-\frac{j}{w}\right)^{m-1}}\cdot M_m\left(j,w\right) \right]}{\left(\sum_{m=1}^k e^{\frac{m\tau}{n}\left(1-\frac{j}{w}\right)^{m-1}}\right)-k} .
\end{align*}
\end{proof}

We now show that for appropriate values of the parameter $\tau$, the $k$-operator \CHANGED{GRG runs in} the best possible expected runtime \CHANGED{achievable} for a mechanism using $k$-operators, up to lower order terms. The following corollary provides an upper bound on the expected runtime for sufficiently large learning periods (i.e., $\tau=\omega(n)$). \CHANGED{The bound matches the best expected runtime achievable proven in Theorem~\ref{koptbestcase}.}
\begin{corollary}\label{corkoptbest}
\sloppy \CHANGED{The expected runtime of the Generalised Random Gradient hyper-heuristic using $H=\{1\textsc{BitFlip},\dots,k\textsc{BitFlip}\}$ and $k=\Theta(1)$ for \textsc{LeadingOnes}, with $\tau$ that satisfies both $\tau=\omega(n)$ and $\tau\leq\left(\frac{1}{k}-\varepsilon\right)n\ln(n)$, for some constant $\CHANGED{0<\varepsilon<\frac{1}{k}}$, is at most}
$$\frac{1}{2}\left(\sum_{i=0}^{\frac{n}{k}-1}\frac{1}{k\cdot\frac{1}{n}\cdot\left(\frac{n-i-1}{n}\right)^{k-1}}+\sum_{m=1}^{k-1}\sum_{i=\frac{n}{m+1}}^{\frac{n}{m}-1}\frac{1}{m\cdot\frac{1}{n}\cdot\left(\frac{n-i-1}{n}\right)^{m-1}}\right)\pm o(n^2).$$
\end{corollary}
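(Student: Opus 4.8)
The plan is to derive Corollary~\ref{corkoptbest} directly from the general bound of Theorem~\ref{kopGRG} by simplifying its sum under the two hypotheses $\tau=\omega(n)$ and $\tau\leq\left(\frac{1}{k}-\varepsilon\right)n\ln(n)$, mirroring the passage from Theorem~\ref{thm:LO-GenRandomGradient} to Corollary~\ref{CorGRGOpt}. First I would fix an admissible $w$: the hypotheses guarantee that the range $w\in\omega\left(\frac{\tau}{n}\right)\cap o\left(\frac{n}{\exp(k\tau/n)}\right)$ is non-empty, because $\tau\leq\left(\frac{1}{k}-\varepsilon\right)n\ln n$ gives $\exp(k\tau/n)\leq n^{1-k\varepsilon}$ and hence $\frac{\tau}{n}\exp(k\tau/n)=o(n)$. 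With $\tau=\omega(n)$ we immediately obtain $M_1(j,w)=\min\{\frac{\tau}{n},1\}=1$, while the additive numerator term $k\cdot\frac{\tau}{n}=O(\ln n)=n^{o(1)}$ is negligible against the exponential sums treated below.

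The core of the argument is a domination step. Since $\frac{\tau}{n}=\omega(1)$, in the denominator sum $\sum_{m=1}^k e^{\frac{m\tau}{n}(1-\frac{j}{w})^{m-1}}$ the term with the largest exponent exceeds every other term by a super-polynomial factor, so this sum (and likewise the exponentially weighted numerator) is determined, up to a factor $1+o(1)$, by a single index $m^\star=m^\star(j)$. Because the common factor $\frac{\tau}{n}$ cancels, the maximiser of the exponent is the maximiser of $m(1-\frac{j}{w})^{m-1}$, which is exactly the operator with the largest success probability at fitness fraction $j/w$; by Lemma~\ref{moplemma2} and the remark following it, this is $m^\star=m$ on $\frac{1}{m+1}\leq\frac{j}{w}<\frac{1}{m}$ for $m<k$, is $m^\star=k$ for $\frac{j}{w}<\frac{1}{k}$, and is $m^\star=1$ for $\frac{j}{w}\geq\frac{1}{2}$. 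Substituting the dominant term cancels the shared exponential between numerator and denominator, collapsing the $j$-th summand to $\frac{1}{w}M_{m^\star}(j,w)\cdot(1+o(1))$.

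It then remains to check that the cap in the definition of $M_{m^\star}$ is inactive on the relevant regions and to recognise the resulting Riemann sum. Whenever $m^\star\geq2$ dominates, its region satisfies $\frac{j}{w}<\frac{1}{m^\star}\leq\frac{1}{2}$, so $1-\frac{j}{w}\geq\frac{1}{2}$ and $M_{m^\star}(j,w)=\frac{1}{m^\star(1-j/w)^{m^\star-1}}=\Theta(1)$; since $\frac{\tau}{n}=\omega(1)$, the cap condition $\frac{1}{m^\star(1-j/w)^{m^\star-1}}>\frac{\tau}{n}$ never holds there. For $m^\star=1$ we always have $M_1=\min\{\frac{\tau}{n},1\}=1$, and the single boundary stage $j=w$ contributes at most $\frac{n^2}{2w}=o(n^2)$. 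Consequently $\frac{n^2}{2}\sum_{j=1}^w\frac{1}{w}M_{m^\star}(j,w)$ is, up to $o(n^2)$, a Riemann approximation of $\frac{1}{2}\sum_i\frac{1}{P(\Imp_{m^\star(i)})}$: each stage spans $\frac{n}{w}$ fitness values on which the optimal success probability varies by a factor $1+O(1/w)$, and $\textsc{LO}(x)=i$ corresponds to $i/n\approx j/w$, so the per-stage contribution $\frac{n^2}{2w}\cdot\frac{1}{m^\star(1-j/w)^{m^\star-1}}$ matches $\frac{1}{2}\sum_{i\in\text{stage }j}\frac{1}{P(\Imp_{m^\star})}$. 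This reproduces exactly the piecewise sum of Theorem~\ref{koptbestcase}; adding back the already-bounded $o(n^2)$ contributions (the $S_{j+1}$ terms and the discarded $O(1/n^2)$ improvement-probability corrections) yields the claimed bound $E(T_{k,\mathrm{Opt}})\pm o(n^2)$.

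The hard part will be making the domination and discretisation steps quantitatively tight simultaneously: one must control the multiplicative $1+o(1)$ errors from discarding the non-dominant exponentials uniformly over all $w$ stages, including the crossover stages near each $\frac{j}{w}=\frac{1}{m}$ where two exponents nearly coincide, while keeping the total Riemann-sum error at $o(n^2)$. Here the lower requirement $w=\omega\left(\frac{\tau}{n}\right)$ is precisely what makes the per-stage error summable, and the upper bound on $\tau$ is what keeps an admissible $w$ available, so both sides of the range on $w$ are genuinely used.
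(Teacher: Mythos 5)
Your proposal is correct and follows essentially the same route as the paper's proof: starting from Theorem~\ref{kopGRG}, simplifying the $M_m(j,w)$ terms via $\tau=\omega(n)$, letting the exponential with the largest exponent dominate numerator and denominator so that each summand collapses to $\frac{1}{w}M_{m^\star}(j,w)$ on exactly the regions $\frac{w}{m+1}\leq j\leq\frac{w}{m}$ where the $m\textsc{BitFlip}$ operator is optimal, and then converting the stage sum into the fitness-level sum of Theorem~\ref{koptbestcase}. Your explicit check that the cap in $M_{m^\star}$ is only needed to be inactive where $m^\star\geq 2$ actually dominates (i.e., $j/w<1/2$) is slightly more careful than the paper's blanket claim, but the argument is the same.
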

\begin{proof}
This proof follows a similar structure to \CHANGED{the proof} of Corollary~\ref{CorGRGOpt}, \CHANGED{i.e., we aim to simplify the sum from Theorem~\ref{kopGRG}. 

Theorem~\ref{kopGRG} states that,}
$$E(T)\leq\frac{n^2}{2}\cdot\sum_{j=1}^w \left(\frac{\left(k\cdot\frac{\tau}{n}+\left[\sum_{m=1}^k e^{m\frac{\tau}{n}\left(1-\frac{j}{w}\right)^{m-1}}\cdot M_m\left(j,w\right)\right]\right)}{w\cdot\left(\left[\sum_{m=1}^ke^{m\frac{\tau}{n}\left(1-\frac{j}{w}\right)^{m-1}}\right]-k\right)}\right)\pm o(n^2).$$
Since $\tau=\omega(n)$, we have that $\frac{\tau}{n}=\omega(1)$. Hence, we can simplify the $M_m(j,w)$ terms. Recall that we have $M_1(j,w)=\min\!\left\{\frac{\tau}{n},1\right\}=1$, and for $m\geq2$,\[ M_m(j,w) :=
\begin{cases}
\frac{\tau}{n} & \quad \text{if } j=w \text{ or }\frac{1}{m\left(1-\frac{j}{w}\right)^{m-1}}>\frac{\tau}{n}\\
\frac{1}{m\left(1-\frac{j}{w}\right)^{m-1}} & \quad \text{otherwise.}\\
\end{cases}
\]
Since $\frac{\tau}{n}=\omega(1)$, we have that $\frac{\tau}{n}>\frac{1}{m\left(1-\frac{j}{w}\right)^{m-1}}$. Hence, we can simplify \CHANGED{$M_m(j,w)$ as follows}:
\[ M_m(j,w) :=
\begin{cases}
\frac{\tau}{n} & \quad \text{if } j=w\\
\frac{1}{m\left(1-\frac{j}{w}\right)^{m-1}} & \quad \text{otherwise.}\\
\end{cases}
\]
In particular, for $j\neq w$, we have \CHANGED{that $M_m(j,w)=\Theta(1)$, for $m\geq1$}. 

We can further simplify the sum from Theorem~\ref{kopGRG} by noting that, since $\frac{\tau}{n}=\omega(1)$, the exponential term in each summand will be asymptotically dominant. In particular, the $\frac{k\tau}{n}$ term in the numerator and the $-k$ term in the denominator will be asymptotically dominated. Hence, we can relegate these terms into the $o(n^2)$ lower order term.

With these modifications we can simplify the sum \CHANGED{as follows,}
\begin{align*}
E(T)&\leq\frac{n^2}{2}\cdot\sum_{j=1}^w \left(\frac{\left(k\cdot\frac{\tau}{n}+\left[\sum_{m=1}^k e^{m\frac{\tau}{n}\left(1-\frac{j}{w}\right)^{m-1}}\cdot M_m\left(j,w\right)\right]\right)}{w\cdot\left(\left[\sum_{m=1}^ke^{m\frac{\tau}{n}\left(1-\frac{j}{w}\right)^{m-1}}\right]-k\right)}\right)\pm o(n^2)\\
&\leq\frac{n^2}{2}\cdot\sum_{j=1}^w \left(\frac{\left(\sum_{m=1}^k e^{m\frac{\tau}{n}\left(1-\frac{j}{w}\right)^{m-1}}\cdot M_m\left(j,w\right)\right)}{w\cdot\left(\left[\sum_{m=1}^ke^{m\frac{\tau}{n}\left(1-\frac{j}{w}\right)^{m-1}}\right]\right)}\right)\pm o(n^2).
\end{align*}

To further simplify the sum, we consider when each of the $k$ exponential summand terms will be asymptotically dominant in the numerator. Since $\frac{\tau}{n}=\omega(1)$, whichever exponential term has the largest exponent will asymptotically dominate the other terms. In particular, the $m_{th}$ term will dominate when $\frac{m\tau}{n}\left(1-\frac{j}{w}\right)^{m-1}$ is the largest amongst $1\leq m\leq k$. Comparing the terms at $m$ and $m+1$, we see that the $m_{th}$ term dominates the $(m+1)_{th}$ term when $j\geq\frac{w}{m+1}$. Continuing these calculations, we get that the $m_{th}$ term will dominate all the others (i.e., it will have the largest exponent) when $\frac{w}{m+1}\leq j\leq \frac{w}{m}-1$ for $m<k$, and the $k_{th}$ term will dominate when $j\leq \frac{w}{k}-1$. If a term is asymptotically dominated, we can relegate it into the $o(n^2)$ lower order term. Hence, the only terms remaining in the numerator in the period when the $m_{th}$ operator dominates will be the $m_{th}$ exponential term multiplied by $M_m(j,w)$. The denominator will similarly only contain the $m_{th}$ exponential term, multiplied by $w$. The sum will hence simplify to a sum of $M_m(j,w)$ terms:
\begin{align*}
E(T)&\leq\frac{n^2}{2}\cdot\left(\sum_{j=1}^{\frac{w}{k}}M_k(j,w)+\sum_{m=1}^{k-1}\sum_{j=\frac{w}{m+1}+1}^{\frac{w}{m}}M_m(j,w)\right)\pm o(n^2).\\
&=\frac{n^2}{2}\cdot\left(\sum_{j=1}^{\frac{w}{k}}\frac{1}{k\left(1-\frac{j}{w}\right)^{k-1}}+\sum_{m=1}^{k-1}\sum_{j=\frac{w}{m+1}+1}^{\frac{w}{m}}\frac{1}{m\left(1-\frac{j}{w}\right)^{m-1}}\right)\pm o(n^2).\\
&=\frac{n}{2}\cdot\left(\sum_{j=1}^{\frac{w}{k}}\frac{1}{k\cdot\frac{1}{n}\cdot\left(1-\frac{j}{w}\right)^{k-1}}+\sum_{m=1}^{k-1}\sum_{j=\frac{w}{m+1}+1}^{\frac{w}{m}}\frac{1}{m\cdot\frac{1}{n}\cdot\left(1-\frac{j}{w}\right)^{m-1}}\right)\pm o(n^2).
\end{align*}

Recall that \CHANGED{stage} $j$ refers to when $\frac{(j-1)n}{w}\leq i\leq\frac{jn}{w}-1$ where $i$ is the LO value of the current solution. In particular, if we substitute $i\geq \frac{(j-1)n}{w}$, (or $j\leq \frac{iw}{n}+1$) into each summand, we get,
$$\frac{1}{m\cdot\frac{1}{n}\cdot\left(1-\frac{j}{w}\right)^{m-1}}\leq\frac{1}{m\cdot\frac{1}{n}\cdot\left(\frac{n-i-1}{n}\right)^{m-1}}.$$
Furthermore, using the upper and lower bounds on $i$ to adjust the bounds on the sums (which will cancel out the multiplicative $n$ term) gives the final result:
\begin{align*}
E(T)&\leq\frac{n}{2}\cdot\left(\sum_{j=1}^{\frac{w}{k}}\frac{1}{k\cdot\frac{1}{n}\cdot\left(1-\frac{j}{w}\right)^{k-1}}+\sum_{m=1}^{k-1}\sum_{j=\frac{w}{m+1}+1}^{\frac{w}{m}}\frac{1}{m\cdot\frac{1}{n}\cdot\left(1-\frac{j}{w}\right)^{m-1}}\right)\pm o(n^2)\\
&\leq\frac{1}{2}\left(\sum_{i=0}^{\frac{n}{k}-1}\frac{1}{k\cdot\frac{1}{n}\cdot\left(\frac{n-i-1}{n}\right)^{k-1}}+\sum_{m=1}^{k-1}\sum_{i=\frac{n}{m+1}}^{\frac{n}{m}-1}\frac{1}{m\cdot\frac{1}{n}\cdot\left(\frac{n-i-1}{n}\right)^{m-1}}\right)\pm o(n^2).
\end{align*}
\end{proof}

We can now prove the main result of this section, i.e., Theorem~\ref{besttheorem}, \CHANGED{which states that the expected runtime of GRG using $H=\{1\textsc{BitFlip},\dots,k\textsc{BitFlip}\}$ and $k=\Theta(1)$ with $\tau$ that satisfies both $\tau=\omega(n)$ and $\tau\leq\left(\frac{1}{k}-\varepsilon\right)n\ln(n)$, for some constant $0<\varepsilon<\frac{1}{k}$, for \textsc{LeadingOnes} is smaller than the best-possible expected runtime for any unbiased (1+1) black box algorithm using any strict subset of $H=\{1\textsc{BitFlip},\dots,k\textsc{BitFlip}\}$.}
\begin{proof}[Of Theorem~\ref{besttheorem}]

Corollary~\ref{corkoptbest} shows that GRG with $k$ mutation operators can match the best-possible peformance \CHANGED{of any unbiased (1+1) black box} algorithm with $k$ mutation operators \CHANGED{for \textsc{LeadingOnes}} from Theorem~\ref{koptbestcase}, up to lower order terms. The results from Corollary~\ref{corkoptbest} and Theorem~\ref{koptbestcase} imply that the only difference in the best case expected runtime for GRG with $k$ operators (\CHANGED{i.e., using $H=\{1\textsc{BitFlip},\dots,k\textsc{BitFlip}\}$}) and the best-possible expected runtime for an algorithm with $k-1$ operators (i.e., using $\CHANGED{\{1\textsc{BitFlip},\dots,(k-1)\textsc{BitFlip}\}}$) occurs when $0\leq i\leq \frac{n}{k}-1$. In this region, the best case expected \NEWCHANGED{runtime} of the $k$-operator \CHANGED{GRG} matches the expected \NEWCHANGED{runtime of an algorithm using only} the \CHANGED{$k\textsc{BitFlip}$} operator, up to lower order terms, while the expected \NEWCHANGED{runtime} of the best-possible $k-1$-operator algorithm matches the expected \NEWCHANGED{runtime of an algorithm using only} the $(k-1)\textsc{BitFlip}$ operator, \NEWCHANGED{up to lower order terms}.

We know from Lemma~\ref{moplemma2} that the expected runtime to optimise this area when applying the \CHANGED{$k\textsc{BitFlip}$} operator will be smaller than the expected runtime when applying the \CHANGED{$k-1$\textsc{BitFlip}} operator, and thus using the \CHANGED{$k\textsc{BitFlip}$} operator will be faster. Since in the rest of the search space, the best case $k$-operator GRG and the best-possible $k-1$-operator algorithm will have the same expected performance (up to lower order terms), the best case expected runtime of GRG with $k$ operators will be \CHANGED{smaller} than the best-possible expected runtime of any algorithm with $k-1$-operators.

\CHANGED{Reiterating} the argument for the best-possible performance of an algorithm with $\CHANGED{(1,\dots,m)\textsc{BitFlip}}$ operators ($m<k-1$) \CHANGED{allows us to conclude} that the best case expected runtime of the $k$-operator \CHANGED{GRG} is faster than the best-possible performance of any algorithm with $m<k$ operators. In particular, GRG with \CHANGED{ $H=\{1\textsc{BitFlip},\dots,k\textsc{BitFlip}\}$} is faster than the best-possible algorithm using any strict subset of $\CHANGED{\{1\textsc{BitFlip},\dots,k\textsc{BitFlip}\}}$. \CHANGED{Lemma~\ref{moplemma2} combined with a similar argument as used above with $k-1$ operators} implies that GRG will outperform the best-possible algorithm \CHANGED{equipped with a subset of the $k$ operators} in the area of the search space where the latter algorithm is missing any of the $\CHANGED{\{1\textsc{BitFlip},\dots,k\textsc{BitFlip}\}}$ operators (i.e., if the best-possible algorithm is missing the \CHANGED{$m\textsc{BitFlip}$} operator, GRG will outperform it in the region where the \textsc{LeadingOnes} fitness value satisfies $\frac{n}{m+1}\leq \textsc{LO}(x)\leq\frac{n}{m}-1$), \CHANGED{hence its expected runtime is smaller}.
\end{proof}

\CHANGED{
\subsection{Anytime Performance}
In this subsection, we present the expected fixed target running times of the algorithms considered thus far. Fixed target running times have been previously presented by \cite{DoerrPinto2018}. In order to achieve the runtimes presented in Corollary~\ref{cor:fixedtarget} (and depicted in Figure~\ref{fig:FixedTarget}) we have adapted Theorem~\ref{ThmSRk} and Theorem~\ref{kopGRG} to sum up to a fixed target $\textsc{LO}(x)=X\leq n$, while keeping the problem size as $n$ (respectively, $w$) within the summands. 

\begin{corollary}\label{cor:fixedtarget}
Let $X\leq n$. The expected time needed to reach for the first time a search point $x$ of \textsc{LeadingOnes}-value at least $X$ is at most:
\begin{itemize}
\item $\frac{k}{2}\cdot\sum_{i=0}^{X-1}\frac{1}{\sum_{m=1}^k m\cdot\frac{1}{n}\cdot\left(\frac{n-i-1}{n}\right)^{m-1}}\pm o(n^2)$ for the Simple Random mechanism with $k$ operators;
\item \begin{sloppypar}$\frac{n^2}{2}\cdot
\left(\sum_{j=1}^{\lceil X\cdot\frac{w}{n}\rceil}\frac{\left(k\cdot\frac{\tau}{n}+\left[\sum_{m=1}^k e^{m\frac{\tau}{n}\left(1-\frac{j}{w}\right)^{m-1}}\cdot M_m\left(j,w\right)\right]\right)}{w\cdot\left(\left[\sum_{m=1}^ke^{m\frac{\tau}{n}\left(1-\frac{j}{w}\right)^{m-1}}\right]-k\right)}\right) + o(n^2)$ for GRG using ${H=\{1\textsc{BitFlip},\dots,k\textsc{BitFlip}\}}$, with $\tau\leq\left(\frac{1}{k}-\varepsilon\right)n\ln(n)$ for some constant $0<\varepsilon<\frac{1}{k}$, where $M_m(j,w)$ are as defined in Theorem~\ref{kopGRG}.\end{sloppypar}
\end{itemize}
\end{corollary}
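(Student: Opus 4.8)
The plan is to exploit the fact that both Theorem~\ref{ThmSRk} and Theorem~\ref{kopGRG} already express the expected runtime as a sum of contributions accrued as the current \textsc{LeadingOnes} value passes through successive fitness levels (respectively, stages), and that each summand is determined by the behaviour of the operators at a fixed fitness band and therefore depends on the full problem size $n$ rather than on the eventual goal. Reaching an arbitrary target $X\leq n$ only restricts the range over which these contributions must accumulate; the summands are left untouched, which is precisely why $n$ (respectively, $w$) is to be kept inside them. I would make this the organising principle of the proof and then treat the two mechanisms separately.

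For the Simple Random mechanism I would invoke the fixed-target analogue of Theorem~\ref{BDNTheorem} established by \cite{DoerrPinto2018}: to reach \textsc{LeadingOnes}-value at least $X$, the expected number of fitness evaluations is $\frac{1}{2}\sum_{i=0}^{X-1} A_i$ up to lower order terms, since only the fitness levels $0,\dots,X-1$ must be actively left and each is visited with probability $\frac{1}{2}$ owing to the uniformly distributed suffix bits. Substituting the improvement time $A_i=\big(\frac{1}{k}\sum_{m=1}^{k} m\cdot\frac{1}{n}\cdot(\frac{n-i-1}{n})^{m-1}\big)^{-1}-o(n)$ from the proof of Theorem~\ref{ThmSRk} and truncating the sum at $X-1$ then yields the first bound. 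The only boundary effect is that the final improvement from level $X-1$ may overshoot $X$; as that single improvement costs $A_{X-1}=O(n)$ in expectation, it is absorbed into the $\pm o(n^2)$ term.

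For GRG I would reuse the stage decomposition from the proof of Theorem~\ref{kopGRG} verbatim, halting once the stage that contains fitness $X$ has been entered. Writing $J:=\lceil X\cdot\frac{w}{n}\rceil$ for that stage index, the target is reached during stage $J$, so it suffices to bound $\sum_{j=1}^{J}\big(E(T_j)+E(S_{j+1})\big)$ exactly as before. The bound $E(T_j)\leq E(N_j)E(X_j)$ from Wald's equation (Theorem~\ref{WaldEq}), the drift bound on $E(N_j)$ in Eq.~\ref{keq:grd-stage-drift}, the bound on $E(X_j)$ in Eq.~\ref{keq:grd-op-phase-length}, and the $o(n^2)$ bound on the $S_{j+1}$ terms are all stage-local and hence unaffected by restricting the outer sum; truncating at $J$ therefore produces the stated expression.

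The only place that needs genuine care — and the step I expect to be the main obstacle — is the final stage $J$. Unlike the full-optimum case, stage $J$ need not be completed, since it suffices to reach fitness $X$ within it rather than to cross its upper boundary, so including the whole of stage $J$ in the sum can only increase the bound. I would argue that a single stage contributes at most $E(T_J)=E(N_J)E(X_J)=O(n^2/w)$ in expectation, the factor $1/w$ coming from the drift bound in Eq.~\ref{keq:grd-stage-drift}; since the constraints $w\in\omega(\frac{\tau}{n})\cap o(\frac{n}{\exp(k\tau/n)})$ together with $\tau\leq(\frac{1}{k}-\varepsilon)n\ln n$ permit $w\to\infty$ (indeed $n/\exp(k\tau/n)\geq n^{k\varepsilon}\to\infty$), this surplus is $o(n^2)$ and is likewise absorbed into the error term. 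This confirms that only the upper limit of the summation changes while the summands remain as in Theorems~\ref{ThmSRk} and~\ref{kopGRG}, which is exactly the content of Corollary~\ref{cor:fixedtarget}.
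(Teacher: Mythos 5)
Your proposal is correct and follows essentially the same route as the paper, which itself only states that Corollary~\ref{cor:fixedtarget} is obtained by adapting Theorem~\ref{ThmSRk} and Theorem~\ref{kopGRG} to sum up to the fixed target $X$ while keeping $n$ (respectively $w$) inside the summands. Your elaboration of the boundary effects --- absorbing the single $O(n)$ overshoot from level $X-1$ and the $O(n^2/w)=o(n^2)$ surplus from the final, possibly incomplete stage $J=\lceil Xw/n\rceil$ into the error terms --- correctly fills in the details the paper leaves implicit.
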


\begin{figure}[t]
\centering
	\begin{tikzpicture}
		\begin{axis}[
			xlabel=Target $\textsc{LO}(x)$ value, xmin=0, xmax=10000,ymin=0, ymax=70,
			ylabel=Expected First Hitting Time (Millions),
			scaled ticks=false,tick label style={/pgf/number format/fixed},
			mark size=2pt,ymajorgrids=true,
			y tick label style={
				/pgf/number format/.cd,fixed,
				/tikz/.cd
			},
			width=0.97\linewidth,
			height=2.3in,
			legend pos=north west,legend columns=1
		]
		\pgfplotstableread{FixedTarget/FixedTargetTheory.dat} \FixedTargetTheory
		\addplot[black,densely dotted,thick]   table [x index=0,y index=1] \FixedTargetTheory;
		\addplot[red,dotted,thick]  table [x index=0,y index=2] \FixedTargetTheory;
		\addplot[black!60!green,dashed,thick]   table [x index=0,y index=3] \FixedTargetTheory;
		\addplot[red,densely dashed,thick]   table [x index=0,y index=4] \FixedTargetTheory;
		\addplot[black!60!green,thick]   table [x index=0,y index=5] \FixedTargetTheory;
		\addplot[cyan,loosely dashed,thick]   table [x index=0,y index=6] \FixedTargetTheory;
			
		\legend{RLS: $0.5$, Simple$_2$: $0.54931$, Simple$_3$: $0.65288$, GRG$_2$: $0.42815$, GRG$_3$: $0.41202$, GRG$_4$: $0.40632$}
		\end{axis}
	\end{tikzpicture}
	\caption{\CHANGED{Expected fixed target running times for \textsc{LeadingOnes}, with dimension $n=10,\!000$. The values shown in the legend for Simple Random and GRG variants with $\tau=10n$ are the normalised (by $n^2$) expected optimisation times.}}
	\label{fig:FixedTarget}
\end{figure}
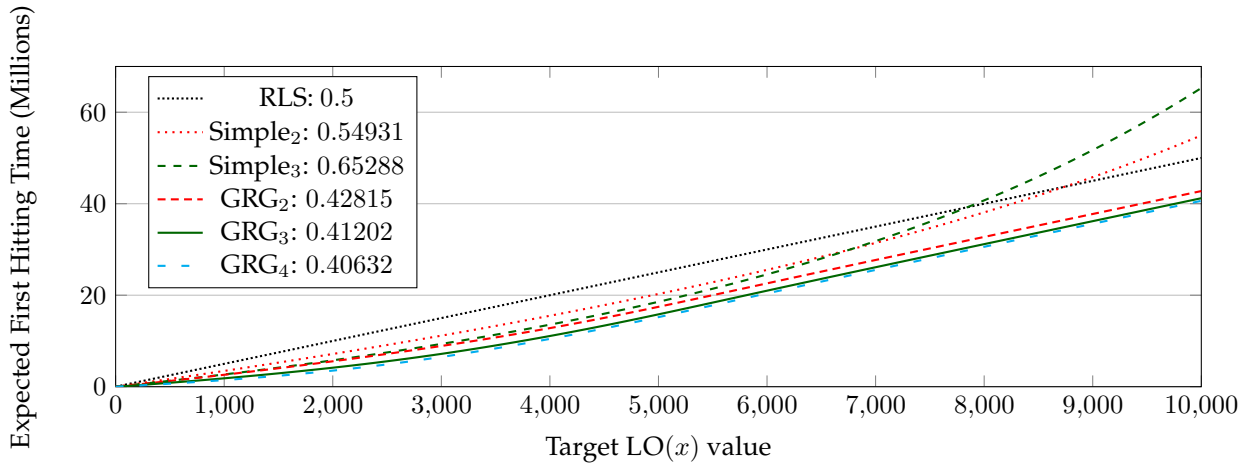

Figure~\ref{fig:FixedTarget} shows the expected fixed target running times using the set $H=\{1\textsc{BitFlip},\dots,k\textsc{BitFlip}\}$ for the Simple Random mechanism with $k=1, 2, 3$ and GRG with $\tau=10n$ and $k=2, 3, 4$ for \textsc{LeadingOnes}. We have used $w=1,\!000$ in Theorem~\ref{kopGRG} to achieve the GRG plots.  

We see that all the GRG variants outperform the respective Simple Random variants for any fixed target independent of how many operators are used by GRG. All GRG variants are close to matching the anytime performance of the optimal GRG and the greater the number of operators the HH has access to, the better the anytime performance. For example, GRG with 4 operators \NEWCHANGED{outperforms} all other algorithms for every fixed target $0<X\leq n$.

Interestingly, the Simple Random mechanisms with more operators outperform those with less operators for smaller targets. For example, for a fixed target of $\textsc{LO}(x)=\frac{n}{2}=5,\!000$, the expected first hitting time of Simple Random with 2 operators is $\approx18.9\%$ smaller than that of RLS (i.e., Simple Random with 1 operator, Simple$_1$), while Simple Random with 3 operators is $\approx26.0\%$ faster than RLS.

}

\CHANGED{
\subsection{Extension of the Results to Standard Randomised Local Search Heuristics}
In this subsection, we will extend the results presented so far in Section~\ref{sec:MoreThanTwo} such that they also hold for hyper-heuristics that select from the set $H=\{\textsc{RLS}_1,\dots,\textsc{RLS}_k\}$. In particular, we will show that the main result of this section (i.e., Theorem~\ref{besttheorem}) also holds for RLS algorithms with different neighbourhood sizes (that flip distinct bits without replacement). For this purpose, we will show that the difference in the improvement probabilities of the $\textsc{RLS}_m$ algorithms and the $m\textsc{BitFlip}$ operators (from Lemma~\ref{moplemma}) are limited to lower order $O\left(\frac{1}{n^2}\right)$ terms. A simple argument will then prove that the remaining results of this section also hold for $\textsc{RLS}_m$ operators.

\begin{lemma}\label{rlsbitflip}
Given a bit-string $x$ with $LO(x)=i\leq n-m$, the probability of improvement of an $\textsc{RLS}_m$ algorithm that flips $m=\Theta(1)$ bits without replacement is
$$P(\Imp_{\textsc{RLS}_m}\mid \textsc{LO}(x)=i)=m\cdot\frac{1}{n}\cdot\left(\frac{n-i-1}{n}\right)^{m-1}\pm O\left(\frac{1}{n^2}\right).$$
\end{lemma}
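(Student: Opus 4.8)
The plan is to compute the improvement probability of $\textsc{RLS}_m$ exactly by a direct counting argument and then compare it factor-by-factor with the expression from Lemma~\ref{moplemma}. The operator $\textsc{RLS}_m$ selects a uniformly random set of $m$ distinct positions and flips each exactly once. For \textsc{LeadingOnes} with $\textsc{LO}(x)=i$, an improvement occurs \emph{if and only if} the first 0-bit (position $i+1$) is among the flipped positions while none of the $i$ leading 1-bits are flipped; the remaining $m-1$ flips may fall anywhere among the $n-i-1$ suffix positions, each such choice producing a string with at least $i+1$ leading ones. The key structural point --- and the reason the analysis differs from that of the $m\textsc{BitFlip}$ operator in Lemma~\ref{moplemma} --- is that flipping \emph{without} replacement changes each chosen position exactly once, so the ``flip-and-reflip an even number of prefix bits'' events that generate the $O(1/n^2)$ correction term in Lemma~\ref{moplemma} are simply impossible here. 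This makes the exact count clean: the number of improving bit sets is $\binom{n-i-1}{m-1}$, which is well defined precisely because the hypothesis $i\le n-m$ guarantees $n-i-1\ge m-1$.

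First I would record the exact probability $P(\Imp_{\textsc{RLS}_m}\mid\textsc{LO}(x)=i)=\binom{n-i-1}{m-1}/\binom{n}{m}$. Cancelling factorials rewrites this in the product form
\[
\frac{\binom{n-i-1}{m-1}}{\binom{n}{m}}=\frac{m}{n}\cdot\prod_{\ell=1}^{m-1}\frac{n-i-\ell}{n-\ell},
\]
so the leading factor $m/n$ already coincides with Lemma~\ref{moplemma}. It remains to show that the product of $m-1$ ratios equals $\left(\frac{n-i-1}{n}\right)^{m-1}$ up to an additive $O(1/n)$ error. I would do this factor-by-factor: a short computation gives $\frac{n-i-\ell}{n-\ell}-\frac{n-i-1}{n}=\frac{n-\ell(i+1)}{n(n-\ell)}$, whose numerator is $O(n)$ and whose denominator is $\Theta(n^2)$ for $\ell\le m-1=\Theta(1)$, so each ratio differs from $r:=\frac{n-i-1}{n}$ by $O(1/n)$ uniformly in $i$. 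Since every factor and $r$ itself lie in $(0,1]$ (using $0\le i\le n-m$), a telescoping estimate over the constant number $m-1$ of factors yields $\left|\prod_{\ell=1}^{m-1}\frac{n-i-\ell}{n-\ell}-r^{m-1}\right|\le(m-1)\cdot O(1/n)=O(1/n)$. Multiplying back by $m/n$ gives exactly $P(\Imp_{\textsc{RLS}_m}\mid\textsc{LO}(x)=i)=\frac{m}{n}\left(\frac{n-i-1}{n}\right)^{m-1}\pm O(1/n^2)$.

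The main point to be careful about is that the $O(1/n^2)$ error must be \emph{uniform} over the entire range $0\le i\le n-m$, including the regime where $i$ is close to $n-m$ and both probabilities are themselves far below $1/n$. The telescoping argument handles this without any case distinction, because it bounds each telescoped term by $|a_\ell-b_\ell|\le O(1/n)$ while all remaining factors are at most $1$ in absolute value, so there is no need to track the size of $r^{m-1}$ separately. I would close by noting the intended downstream use: since the improvement probabilities of $\textsc{RLS}_m$ and $m\textsc{BitFlip}$ agree up to $O(1/n^2)$ for every $i$, every waiting time $A_i$ and hence every runtime expression derived in this section (from Theorem~\ref{ThmSRk} through Theorem~\ref{besttheorem}) changes by at most lower-order terms when $m\textsc{BitFlip}$ is replaced by $\textsc{RLS}_m$, which is the ``simple argument'' promised in the text.
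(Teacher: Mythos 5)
Your proposal is correct and follows the same basic route as the paper: both start from the exact hypergeometric probability $\binom{n-i-1}{m-1}/\binom{n}{m}$ and compare it with $\frac{m}{n}\left(\frac{n-i-1}{n}\right)^{m-1}$. The difference lies in how the error is controlled. The paper asserts, without proof, that ``the largest difference \ldots occurs when $\textsc{LO}(x)=i=0$'' and then verifies only the $i=0$ case, obtaining $\frac{m(m-1)}{n^2}\pm O(1/n^2)$ there. You instead rewrite the exact probability in product form $\frac{m}{n}\prod_{\ell=1}^{m-1}\frac{n-i-\ell}{n-\ell}$, show each factor differs from $\frac{n-i-1}{n}$ by $\frac{n-\ell(i+1)}{n(n-\ell)}=O(1/n)$, and telescope over the $m-1=\Theta(1)$ factors (all lying in $[0,1]$ thanks to $i\le n-m$) to get a bound that is uniform in $i$. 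This buys you something concrete: the uniformity over the whole range $0\le i\le n-m$ is exactly what the downstream summation over all fitness levels requires, and your argument supplies it directly, whereas the paper's proof leaves the ``maximum at $i=0$'' claim unverified. Your derivation of the exact count $\binom{n-i-1}{m-1}$ (the first $0$-bit must be flipped, the prefix untouched, the remaining $m-1$ flips anywhere in the suffix) and your observation that without replacement no flip-and-reflip corrections arise are both correct and match the intended structure.
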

\begin{proof}
To prove the statement we will show that the differences between the $P(\Imp_m\mid \textsc{LO}(x)=i)$ from Lemma~\ref{moplemma} and $P(\Imp_{\textsc{RLS}_m}\mid \textsc{LO}(x)=i)$ for each $m=\Theta(1)$ are limited to lower order $O\left(\frac{1}{n^2}\right)$ terms.

\cite{DoerrWagner2018} showed that the improvement probability of $\textsc{RLS}_m$ at the state $\textsc{LO}(x)=i$ is 
$$P(\Imp_{\textsc{RLS}_m}\mid \textsc{LO}(x)=i)=\frac{\binom{n-i-1}{m-1}}{\binom{n}{m}}.$$
Note that the improvement probability of $\textsc{RLS}_m$ for $\textsc{LO}(x)=i>n-m$ is $0$. 

A simple calculation shows that the largest difference between the improvement probability of $\textsc{RLS}_m$ and the improvement probability of the $m\textsc{BitFlip}$ mutation operator (from Lemma~\ref{moplemma}) occurs when $\textsc{LO}(x)=i=0$. We now calculate this difference (dropping the state dependence for brevity):
\begin{align*}
P(\Imp_{\textsc{RLS}_m})-P(\Imp_m)&\leq\frac{\binom{n-1}{m-1}}{\binom{n}{m}}-\left(m\cdot\frac{1}{n}\cdot\left(\frac{n-1}{n}\right)^{m-1}+O\left(\frac{1}{n^2}\right)\right)\\
&=\frac{m}{n}-\left(\frac{m}{n}\cdot\left(\frac{n-1}{n}\right)^{m-1}+O\left(\frac{1}{n^2}\right)\right)\\
&=\frac{m}{n}\left(1-\left(1-\frac{1}{n}\right)^{m-1}\right)-O\left(\frac{1}{n^2}\right)\\
&=\frac{m}{n}\left(1-\left(1-\frac{m-1}{n}\pm o\left(\frac{1}{n}\right)\right)\right)-O\left(\frac{1}{n^2}\right)\\
&=\frac{m(m-1)}{n^2}\pm O\left(\frac{1}{n^2}\right)=O\left(\frac{1}{n^2}\right),
\end{align*}
since $m=\Theta(1)$. Hence, the difference in the improvement probabilities of the two mutation operators is limited to lower order $O\left(\frac{1}{n^2}\right)$ terms and the lemma statement holds.
\end{proof}

Through a simple application of Lemma~\ref{rlsbitflip} (rather than applying Lemma~\ref{moplemma}), the results for the simple mechanisms presented in Subsection~\ref{Section:SimpleMore} (i.e., Theorem~\ref{ThmSRk} and Corollary~\ref{CorPGRGk}), and the general results for GRG (i.e., Theorem~\ref{kopGRG} and Corollary~\ref{corkoptbest}), also hold for the hyper-heuristics using the heuristic set $H=\{\textsc{RLS}_1,\dots,\textsc{RLS}_k\}$, up to lower order $\pm o(n^2)$ terms.

We can also extend Theorem~\ref{koptbestcase} and Theorem~\ref{besttheorem} to algorithms using the heuristic set $\{\textsc{RLS}_1,\dots,\textsc{RLS}_k\}$. We know from Theorem~\ref{koptbestcase} that the $m\textsc{BitFlip}$ operator is optimal (i.e., it has the highest probability of improvement amongst all \textsc{BitFlip} operators) during the time when $\frac{n}{m+1}\leq i\leq \frac{n}{m}-1$. \cite{Doerr2018Arxiv} showed that $\textsc{RLS}_m$ is optimal (i.e., has the highest probability of improvement among all \textsc{RLS} operators) when $\frac{n-m}{m+1}\leq i\leq\frac{n}{m}-\frac{m-1}{m}$. The differences in the expected runtime of the best-possible algorithms using the two heuristic sets would therefore amount to the expected time spent improving from (at most) $m+1=\Theta(1)$ different \textsc{LeadingOnes} fitness values. Since the expected time to improve from each of these fitness values is at most $\Theta(n)$, the difference between the expected runtime of the best-possible algorithm using $\{\textsc{RLS}_1,\dots,\textsc{RLS}_k\}$ operators and that using $H=\{1\textsc{BitFlip},\dots,k\textsc{BitFlip}\}$ operators, for \textsc{LeadingOnes} is therefore limited to lower order $o(n^2)$ terms. Hence, Theorem~\ref{koptbestcase} also holds for the heuristic set $\{\textsc{RLS}_1,\dots,\textsc{RLS}_k\}$, up to lower order $\pm o(n^2)$ terms. With similar arguments, Theorem~\ref{besttheorem} also holds for sets of \textsc{RLS} operators.
}

\section{Complementary Experimental Analysis}\label{sec:experiments}
In the previous sections we proved that \CHANGED{GRG} performs efficiently for the \textsc{LeadingOnes} benchmark function for large enough problem sizes $n$. In this section we present some experimental results to shed light on its performance for different problem sizes up to $n=10^8$. All parameter combinations have been simulated $10,\!000$ times.

\begin{figure}[t]
\centering
	\begin{tikzpicture}
		\begin{axis}[
			xlabel=$\tau$ ($/(n\ln(n))$), xmin=0, ymin=0.4, ymax=0.6,
			ylabel=Runtime $(/n^2)$,
			scaled ticks=false,tick label style={/pgf/number format/fixed},
			mark size=2pt, xmax=3, ymajorgrids=true,
			legend style={at={(0.98,0.05)},anchor=south east},
			yscale=1,
			y tick label style={
				/pgf/number format/.cd,fixed,fixed zerofill,precision=3,
				/tikz/.cd
			},
			ytick={0.4,0.423,0.45,0.5,0.55,0.6},
			width=1\linewidth,
			height=2.3in
		]
		\addplot[red] file {Experiments/Fig4/nIncreasingTau-1.dat};
		\addplot[red,dashed] file {Experiments/Fig4/nIncreasingTau-50k-1.dat};
		\draw [thick, draw=brown]   (axis cs: 0,0.5) -- (axis cs: 3,0.5) -- node[above,align=right] {RLS} ++(-30,0);
		\draw [thick, draw=teal]   (axis cs: 0,.4232867952) -- (axis cs: 3,.4232867952) -- node[below,align=left] {$2_\mathrm{Opt}$} ++(-500,0);
		\legend{{$n=10,\!000$}, {$n=50,!000$}}
		\end{axis}
	\end{tikzpicture}
\caption{Average number of fitness function evaluations required by the hyper-heuristics with $k=2$ operators to find the \textsc{LeadingOnes} optimum \CHANGED{in relation to the duration of the learning period} for $n=10,\!000$ (solid), $n=50,\!000$ (dashed).}
\label{Fig:TauIncrease}
\end{figure}
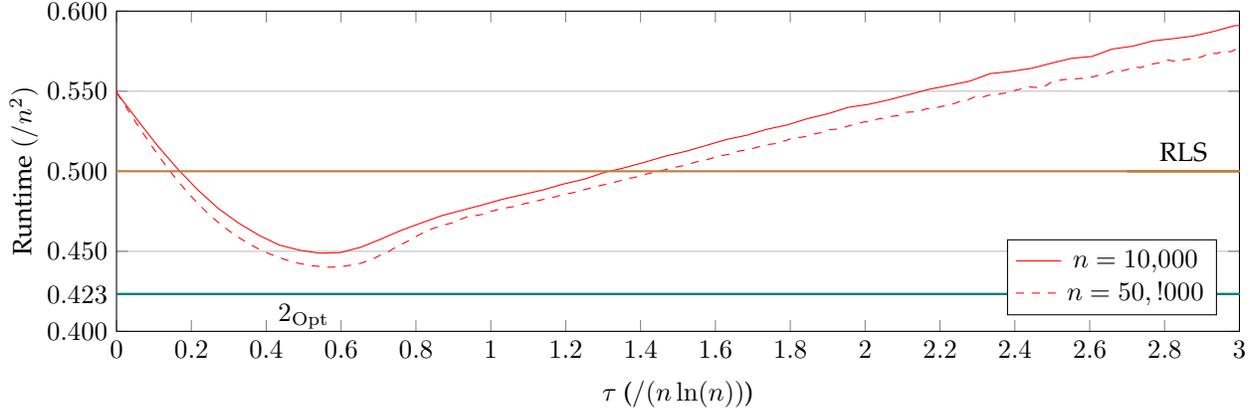

In order to efficiently handle larger problem dimensions experimentally, we do not simulate each individual mutation performed by the \CHANGED{HH}, but rather sample the waiting times for a fitness-improving mutation to occur using a geometric distribution (with the success probability $p$ depending on the current operator and \textsc{LeadingOnes} value of the current solution). Specifically, suppose $T_\mathrm{im}$ is a random variable denoting the number of mutations required to get one fitness-improving mutation. Since $T_\mathrm{im}$ counts the number of independent trials each with probability $p$ of success up to and including the first successful trial, $P(T_\mathrm{im} \leq k) = 1-(1-p)^k$ by the properties of the geometric distribution. Given access to a $\mathrm{uniform\,}(0,1)$-distributed random variable $U$, $T_\mathrm{im}$ can be sampled by computing $\left\lceil\frac{ \log(1-U)}{\log(1-p)} \right\rceil$.

\subsection{Two Low-level Heuristics ($k=2$)}
We first consider \CHANGED{GRG} using two operators only (i.e., \CHANGED{$H=\{1\textsc{BitFlip},\textsc{2BitFlip}\}$}) and look at the impact \CHANGED{on the average runtime} of the parameter $\tau$ and of the problem size $n$.

Figure \ref{Fig:TauIncrease} shows \CHANGED{how the average} runtimes of GRG for \textsc{LeadingOnes} \CHANGED{vary with the duration of the learning period $\tau$} for \CHANGED{problem sizes} $n=10,\!000$ and $n=50,\!000$. \CHANGED{The performance of GRG} clearly depends on the choice of $\tau$. It is worth noting that as the problem size increases, for $\tau\approx0.55n\ln(n)$, the runtime seems to be approaching the optimal performance proven in Corollary~\ref{CorGRGOpt} (i.e., $\frac{1+\ln(2)}{4}n^2\approx0.42329n^2$). For well chosen $\tau$ values, the \CHANGED{HH} beats the expected runtime of RLS and also the experimental runtime for the recently presented \CHANGED{reinforcement learning HH that chooses between different neighbourhood sizes for $\textsc{RLS}_k$ \citep{DoerrEtAl2016}. They reported an average runtime of $0.450n^2$ for the parameter choices they used}. As $\tau$ increases past $0.6n^2$, we see a detriment in the performance of the \CHANGED{HH}. It is worth noting, however, that for $n=50,\!000$, it is required that $\tau>1.5n\ln(n)=811,\!483$ to be worse than the RLS expected runtime of $0.5n^2$, indicating that the parameter is robust.

\begin{figure}[t]
\centering
	\begin{tikzpicture}
		\begin{semilogxaxis}[
			xlabel=$n$, xmin=100, ymin=0.4, ymax=0.55, xmax=1.1*10^8,
			ylabel=Runtime $(/n^2)$,
			legend style={at={(0.98,0.95)},anchor=north east},
			scaled ticks=false,tick label style={/pgf/number format/fixed},
			mark size=2pt,ymajorgrids=true,
			yscale=1,
			y tick label style={
				/pgf/number format/.cd,fixed,fixed zerofill,precision=3,
				/tikz/.cd
			},
			ytick={0.4,0.423,0.45,0.5,0.55},
			width=0.98\linewidth,
			height=2.3in
		]
		\addplot[blue,dashed] file {Experiments/Fig5/RG-a1-0.25nln.dat};
		\addplot[red,mark=x] file {Experiments/Fig5/RG-a1-0.5nln.dat};
		\addplot[black,mark=o] file {Experiments/Fig5/RG-a1-0.6nln.dat};
		\addplot[black!60!green,mark=diamond] file {Experiments/Fig5/RG-a1-0.75nln.dat};
		\draw [thick, draw=teal]   (axis cs: 0,.4232867952) -- (axis cs: 1.1*10^8,.4232867952) -- node[below] {$2_\mathrm{Opt}$} ++(-5,0);
		\legend{$\tau=0.25n\ln n$, $\tau=0.5n\ln n$, $\tau=0.6n\ln n$, $\tau=0.75n\ln n$}
		\end{semilogxaxis}
	\end{tikzpicture}
\caption{{Average number of fitness function evaluations required for the Generalised Random Gradient hyper-heuristic with $k=2$ operators to find the \textsc{LeadingOnes} optimum as the problem size $n$ increases.
}}
\label{Fig:nIncrease}
\end{figure}
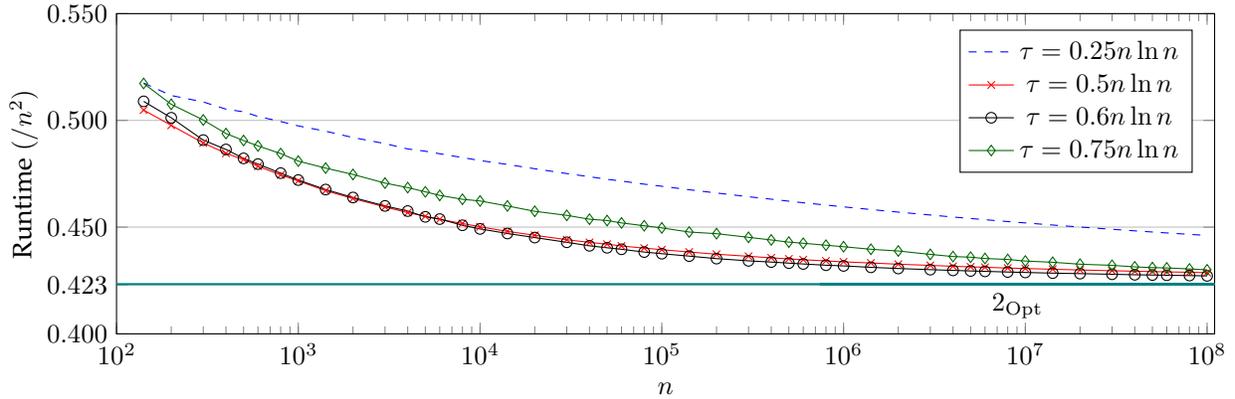
Figure \ref{Fig:nIncrease} shows the effects of increasing the problem size $n$ for a variety of fixed values of $\tau$. We can see that an increased problem size leads to faster average runtimes. In particular, the \CHANGED{HH} requires a problem size of at least 200 before it outperforms RLS. The performance difference between the $\tau$ values decreases with increased $n$, indicating that further increasing $n$ would lead to similar, optimal performance for a large range of values, as implied by Corollary~\ref{CorGRGOpt}. For $n=10^8$, the runtime for $\tau=0.6n\ln n$ is $\approx0.42716n^2$, only slightly deviated from the optimal value of $\approx0.42329n^2$.

\subsection{More Than Two Low-level Heuristics ($k\geq 2$)}
We now consider \CHANGED{GRG} using $k$ operators, $\CHANGED{H=\{1\textsc{BitFlip},\dots,k\textsc{BitFlip}\}}$, and look at the impact of the parameter $\tau$.

\CHANGED{

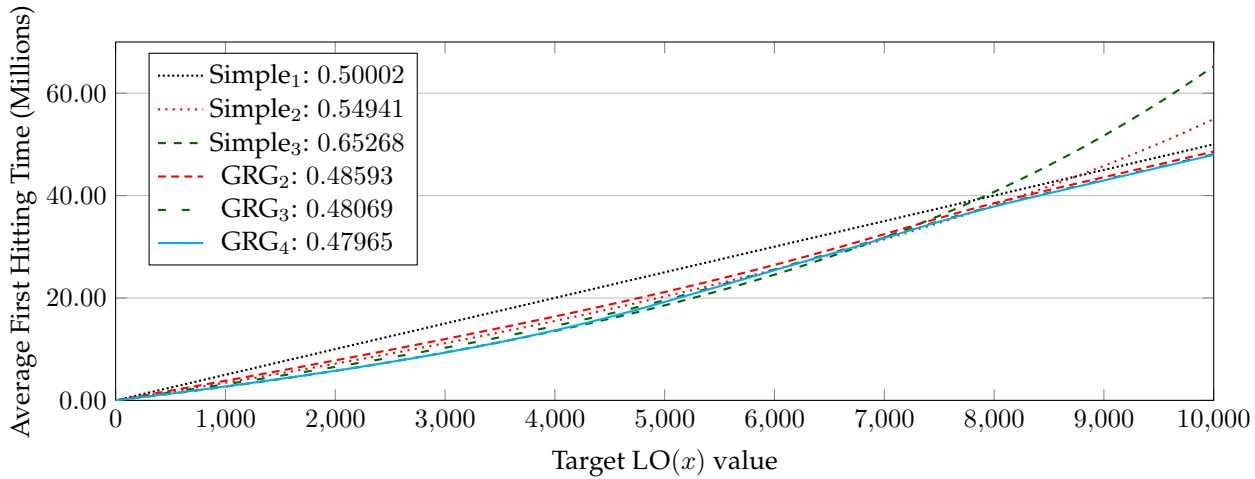
\begin{figure}[t]
\centering
	\begin{tikzpicture}
		\begin{axis}[
			xlabel=Target $\textsc{LO}(x)$ value, xmin=0, xmax=10000,ymin=0, ymax=70,
			ylabel= Average First Hitting Time (Millions),
			scaled ticks=false,tick label style={/pgf/number format/fixed},
			mark size=2pt,ymajorgrids=true,
			y tick label style={
				/pgf/number format/.cd,fixed,fixed zerofill,precision=2,
				/tikz/.cd
			},
			width=0.98\linewidth,
			height=2.5in,
			legend pos=north west,legend columns=1
		]
		\pgfplotstableread{FixedTarget/Experiment.dat} \FixedTargetExp
		\addplot[black,densely dotted,thick]   table [x index=0,y index=1] \FixedTargetExp;
		\addplot[red,dotted,thick]   table [x index=0,y index=2] \FixedTargetExp;
		\addplot[black!60!green,dashed,thick]   table [x index=0,y index=3] \FixedTargetExp;
		\addplot[red,densely dashed,thick]   table [x index=0,y index=4] \FixedTargetExp;
		\addplot[black!60!green,loosely dashed,thick]   table [x index=0,y index=5] \FixedTargetExp;
		\addplot[cyan,thick]   table [x index=0,y index=6] \FixedTargetExp;
			
		\legend{Simple$_1$: $0.50002$, Simple$_2$: $0.54941$, Simple$_3$: $0.65268$, GRG$_2$: $0.48593$, GRG$_3$: $0.48069$, GRG$_4$: $0.47965$}
		\end{axis}
	\end{tikzpicture}
	\caption{\CHANGED{Experimental fixed target running times for \textsc{LeadingOnes}, with dimension $n=10,\!000$. The values shown in the legend for Simple Random and GRG variants with $\tau=10n$ are the normalised (by $n^2$) average optimisation times.}}
	\label{fig:ExpFixedTarget}
\end{figure}

Figure~\ref{fig:ExpFixedTarget} shows the average fixed target running times using the set $H=\{1\textsc{BitFlip},\dots,k\textsc{BitFlip}\}$ for the Simple Random mechanism with $k=1, 2, 3$ operators, and GRG with $\tau=10n$ and $k=2, 3, 4$ for \textsc{LeadingOnes}. We have used a problem size of $n=10,\!000$. The conclusions drawn from the experiments match the theoretical results shown in Figure~\ref{fig:FixedTarget}. We see that although the final average runtime of the Simple Random mechanisms with more operators increases, they are faster initially. In particular, while $\textsc{LO}(x)\leq\frac{n}{2}=5,000$,  Simple Random with 3 operators is preferable to the variant with 2 or 1 (i.e., RLS). This Simple Random variant even outperforms GRG (with our arbitrarily chosen learning period $\tau=10n$) for fixed targets smaller than $\approx7,000$. For GRG we see, just like in  Figure~\ref{fig:FixedTarget}, that GRG with more operators is more effective than GRG with fewer operators for any fixed target. However, the difference is smaller than the theoretical results suggest, implying that while more operators are preferable, GRG with fewer operators is still effective for these problem sizes i.e., as the problem size increases so does the difference in performance in favour of larget sets $H$.

}

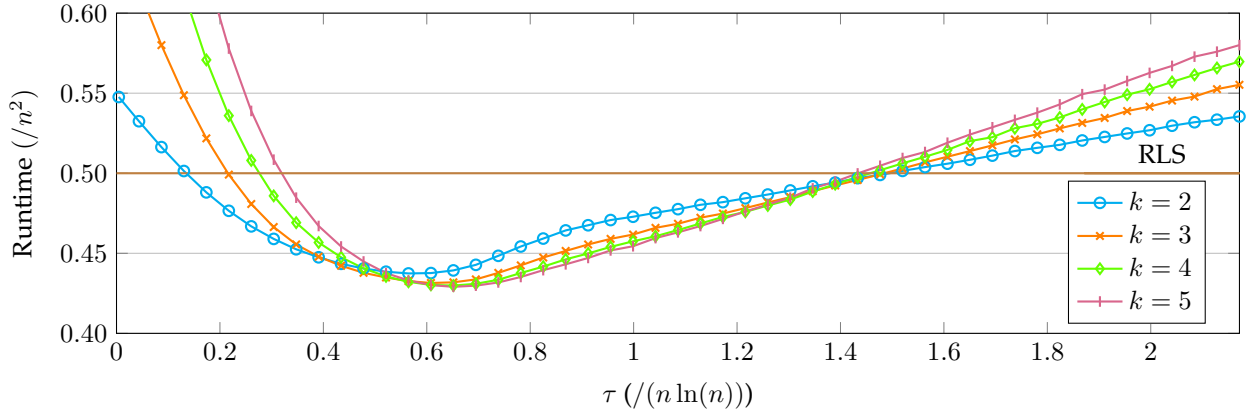
\begin{figure}[t]
	\begin{tikzpicture}
		\begin{axis}[
			xlabel=$\tau$ ($/(n\ln(n))$), xmin=0, ymin=0.4, ymax=0.6,
			ylabel=Runtime $(/n^2)$,
			scaled ticks=false,tick label style={/pgf/number format/fixed},
			mark size=2pt, xmax=2.1715, ymajorgrids=true,
			legend pos=south east,
			y tick label style={
				/pgf/number format/.cd,fixed,fixed zerofill,precision=2,
				/tikz/.cd
			},
			width=1\linewidth,
			height=2.3in
		]
		\addplot[thick,cyan,mark=o]          file {Experiments/Fig6/GRG-100k-2.dat};
		\addplot[thick,orange,mark=x]          file {Experiments/Fig6/GRG-100k-3.dat};
		\addplot[thick,green!60!yellow,mark=diamond]         file {Experiments/Fig6/GRG-100k-4.dat};
		\addplot[thick,purple!60!white,mark=|]         file {Experiments/Fig6/GRG-100k-5.dat};
		\draw [thick, draw=brown]   (axis cs: 0,0.5) -- (axis cs: 2.1715,0.5) -- node[above,align=right] {RLS} ++(-30,0);
		\legend{$k=2$, $k=3$, $k=4$, $k=5$}
		\end{axis}
	\end{tikzpicture}
\caption{Average number of fitness function evaluations required by the Generalised Random Gradient hyper-heuristic with $k$ operators to find the \textsc{LeadingOnes} optimum, $n=100,\!000$.}
\label{pic:GRGk2}
\end{figure}
Figure~\ref{pic:GRGk2} shows the average runtimes for \CHANGED{GRG} when it chooses between 2 to 5 mutation operators \CHANGED{with different neighbourhood sizes} for a \textsc{LeadingOnes} problem size $n=100,\!000$. We see that incorporating more operators can be beneficial to the performance of GRG. Whilst the 2-operator \CHANGED{HH} achieves a best average performance of $\approx0.43742n^2$, the 5-operator \CHANGED{HH} achieves a best average performance of $\approx0.42906n^2$. It is, however, important to set $\tau$ \CHANGED{appropriately} to achieve the best performance. For $0\leq0.35n\ln(n)\leq\tau$ and $\tau\geq1.5n\ln(n)$, the 2-operator \CHANGED{HH} outperforms the others. The results imply that the more operators are incorporated in the \CHANGED{HH}, the shorter the range of $\tau$ values for which it performs the best in comparison with the \CHANGED{HH} with fewer operators. We know from Corollary~\ref{corkoptbest} that for a sufficiently large problem size, GRG with 5-operators will tend towards the theoretical optimal \CHANGED{expected runtime} of $\approx0.39492n^2$. Furthermore, we know from Theorem~\ref{besttheorem} that GRG equipped with more operators will \CHANGED{be faster}.

Recently \cite{DoerrWagner2018} analysed a (1+1) EA for \textsc{LeadingOnes}, where the mutation rate is updated on-the-fly. The number of bits to be flipped is sampled from a binomial distribution $\operatorname{B}(n,p)$ \CHANGED{and, differently from standard bit mutation, they do not allow a \textsc{0BitFlip} to occur by resampling until a non-zero bitflip occurs}\footnote{Note that while not allowing 0-bit flips is a reasonable choice for an extremely simplified EA such as the (1+1) EA, it is doubtful that it is a good idea in general. For instance, we conjecture that the expected runtime of the ($\mu+1$) EA \citep{Witt2006} and ($\mu+1$) GA \citep{CorusOliveto2018,CorusOliveto2019} would deteriorate for \textsc{OneMax} and \textsc{LeadingOnes}, while non-elitist EAs and GAs would require exponential time to optimise any function with a unique optimum \citep{CorusEtAl2018,OlivetoWitt2014,OlivetoWitt2015}. An alternative implementation that allows 0-bit flips to occur, but simply avoids evaluating offspring that are identical to their parents, would solve the issue while at the same time producing the results reported by \cite{DoerrWagner2018}. Such an idea was suggested by \cite{DoerrPinto2018}.}. Then a multiplicative comparison-based update rule, similar to the 1/5th rule from combinatorial optimisation \CHANGED{\citep{KernEtAl2004}} is applied to update the parameter $p$ (i.e., the mutation rate).  A successful mutation increases the mutation rate by a multiplicative factor $A>1$, while an unsuccessful mutation multiplies the mutation rate by a multiplicative factor $b<1$. 

\CHANGED{An experimental analysis is performed to tentatively identify the leading constants in the expected runtime for the best combinations of $A$ and $b$. The best leading constant that has been identified is 0.4063 (using the best parameter configuration for which an average of at least 1000 runs has been reported; i.e., $A=1.2$ and $b=0.85$). While it is unclear whether their identified value indeed scales with the problem size (they only test problem sizes up to $n=1500$), such a leading constant is worse than the $\approx0.388n^2$ achieved by GRG. However, we note that the best possible expected runtime that such an EA that learns to automatically adapt the neighbourhood size to optimality can achieve is $\approx0.404n^2$ \citep{DoerrDoerrLengler2019Arxiv}, due to the random sampling of the number of bits to flip, i.e., the best adaptive RLS is faster than the best adaptive EA.} 

\section{Conclusion}\label{sec:Conc}
\CHANGED{Our foundational understanding of the performance of hyper-heuristics (HHs) is improving. 
Algorithm selection from algorithm portfolio systems and selection HHs generally use sophisticated machine learning algorithms to identify online (during the run) which low-level mechanisms have better performance during different stages of the optimisation process. Recently it has been proven that a reinforcement learning mechanism, which assigns different scores to each low-level heuristic \NEWCHANGED{according to how well it performs, allows a simple HH to run in the best expected runtime achievable with RLS$_k$ low-level heuristics for the \textsc{OneMax} benchmark problem, up to lower order terms}~\citep{DoerrEtAl2016}.

In this paper we considered whether sophisticated learning mechanisms are always necessary
for a HH to effectively learn to apply the right low-level heuristic.}

We considered four of the most simple learning mechanisms \CHANGED{commonly} applied in the literature to combinatorial optimisation problems, \CHANGED{namely, Simple Random, Permutation, Greedy and Random Gradient and showed that they all have the same performance for the \textsc{LeadingOnes} benchmark function when equipped with RLS$_k$ low-level heuristics. While the former three mechanisms do not attempt to learn from the past performance of the low-level heuristics, the idea behind Random Gradient is to continue applying a heuristic so long as it is successful. We argued that looking at the performance  of a heuristic after one single application is insufficient to appreciate whether it is a good choice or not. To this end,}  
%
we generalised the existing \CHANGED{Random Gradient} learning mechanism to allow success to be measured over a longer period of time 
\CHANGED{and called such time the \textit{learning period}}. \NEWCHANGED{We showed that the Generalised Random Gradient (GRG) HH can learn to adapt the neighbourhood size $k=\Theta(1)$ of $\textsc{RLS}_k$ optimally during the run  for the \textsc{LeadingOnes} benchmark function. As a byproduct, we proved that, up to lower order terms, GRG has the best possible runtime achievable by any algorithm that uses the same low level heuristics. In particular, it is faster than well-known unary unbiased evolutionary and local-search algorithms, including the (1+1) Evolutionary Algorithm ((1+1) EA) and Randomised Local Search (RLS). Furthermore, we also showed that for targets smaller than $n$ (i.e., anytime performance), the advantages of GRG over RLS and EAs using standard bit mutation are even greater (i.e., they are even better as approximation algorithms for the problem).}

To apply the generalised \CHANGED{HH}, a value for the learning period $\tau$ is required. Although our results indicate that $\tau$ is a fairly robust parameter (i.e., for $n=10,\!000$, \CHANGED{GRG} achieved faster experimental runtimes than that of the (1+1)~EA for all tested values of $\tau$ between $1$ and $10^6$, and faster than RLS for all tested values of $\tau$ between $28,\!000$ and $120,\!000$), setting it appropriately will lead to optimal performance. Clearly $\tau$ must be large enough to have at least a constant expected number of successes within $\tau$ steps, if the \CHANGED{HH} has to learn about the operator performance. Naturally, setting too large values of $\tau$ may lead to large runtimes since switching operators requires $\Omega(\tau)$ steps. 

We have also rigorously shown that the performance of the simple mechanisms deteriorates as the choice of operators \CHANGED{with different neighbourhood sizes} increases, while the performance of \CHANGED{GRG} improves with a larger choice, as desired for practical applications. \CHANGED{In particular, GRG is able to outperform in expectation any unbiased (1+1) black box algorithm with access to strictly smaller sets of operators.}

Recently, \cite{DLOW2018} have equipped the \NEWCHANGED{GRG} \CHANGED{HH} with an adaptive update rule to automatically adapt the parameter $\tau$ throughout the run (i.e., the learning period can change its duration during the optimisation process). They proved that the \CHANGED{HH} is able to achieve the same optimal performance, up to lower order terms, for the \textsc{LeadingOnes} benchmark function. This so called Adaptive Random Gradient \CHANGED{HH}, equipped with two operators, experimentally outperforms the best setting of \CHANGED{GRG}, confirming that $\tau$ should not be fixed throughout the optimisation process.

Several further directions can be explored in future work. Firstly, the performance of GRG on a broader class of problems, including classical ones from combinatorial optimisation, should be rigorously studied. Secondly, more sophisticated \CHANGED{HHs} that have shown superior performance in practical applications should be analysed, such as the \CHANGED{machine learning approaches which keep track of the historical performance of each low-level heuristic and use this information to decide which one to be applied next}. \NEWCHANGED{In particular, it should be highlighted when more sophisticated learning mechanisms are required and the reasons behind the requirement.} Thirdly, the understanding of HHs that switch between elitist and non-elitist low-level heuristics for multimodal functions should be improved~\citep{LissovoiEtAl2019}. Finally, considering more sophisticated low-level heuristics (e.g. with different population sizes) will bring a greater understanding of the \NEWCHANGED{general} performance of selection \CHANGED{HHs} \NEWCHANGED{and their wider application in real-world optimisation. Another area where a foundational theoretical understanding is lacking is that of parameter tuning \citep{HallEtAl2019Arxiv}.}

\paragraph*{Acknowledgements}
This work was supported by EPSRC under grant EP/M004252/1.

\small

\bibliographystyle{apalike}
\bibliography{references}

\end{document}